\documentclass{article} 
\usepackage{iclr2025_conference,times}
\usepackage{tikz}
\usetikzlibrary{arrows.meta, positioning}
\usepackage{amsthm,apxproof}
\usepackage{graphicx}
\usepackage{booktabs}
\usepackage{multirow}
\usepackage{thmtools,amsmath,amsfonts} 
\usepackage{amssymb,mathrsfs}
\usepackage{algorithm}
\usepackage{algpseudocode}
\usepackage{amssymb}
\usepackage{enumitem}
\newtheorem{rem}{Remark}

\newtheorem{thm}{Theorem}[section]
\newtheorem{lemma}{Lemma}[section]

\newtheorem{assumpt}{Assumption}[section]

\newtheorem{remark}{Remark}[section]

\newtheorem{property}{\textbf{Property}}
\usepackage[T1]{fontenc}
\usepackage{hyperref}
\usepackage{url}
\allowdisplaybreaks

\DeclareMathOperator{\Expect}{\mathbb{E}}

\newcommand{\I}{\mathbb{I}}
\DeclareMathOperator{\Pro}{\mathbb{P}}
       
\usepackage{hyperref}
\usepackage{url}

\DeclareSymbolFont{extraup}{U}{zavm}{m}{n}
\DeclareMathSymbol{\varheart}{\mathalpha}{extraup}{86}
\DeclareMathSymbol{\vardiamond}{\mathalpha}{extraup}{87}

\title{A Comprehensive Framework for Analyzing the Convergence of Adam: Bridging the Gap with SGD}


\iclrfinalcopy

\author{Ruinan Jin\\ 
The Chinese University of Hong Kong, Shenzhen\\
Vector Institute \\
\texttt{jrnjrnjrnjrnjrn@126.com} \\
\And
Xiao Li\thanks{Corresponding author}\\ 
The Chinese University of Hong Kong, Shenzhen\\
\texttt{lixiao@cuhk.edu.cn} \\
\And
Yaoliang Yu\\ 
University of Waterloo\\
Vector Institute \\
\texttt{yaoliang.yu@uwaterloo.ca} \\
\And
Baoxiang Wang\\ 
The Chinese University of Hong Kong, Shenzhen\\
Vector Institute \\
\texttt{bxiangwang@cuhk.edu.cn} \\
}

%

\begin{document}

\maketitle

\begin{abstract}

Adaptive moment estimation (Adam) is a cornerstone optimization algorithm in deep learning, widely recognized for its flexibility with adaptive learning rates and efficiency in handling large-scale data. However, despite its practical success, the theoretical understanding of Adam's convergence has been constrained by stringent assumptions, such as almost surely bounded stochastic gradients or uniformly bounded gradients, which are more restrictive than those typically required for analyzing stochastic gradient descent (SGD).

In this paper, we introduce a novel and comprehensive framework for analyzing the convergence properties of Adam. This framework offers a versatile approach to establishing Adam's convergence. Specifically, we prove that Adam achieves asymptotic (last iterate sense) convergence in both the almost sure sense and the \(L_1\) sense under the relaxed assumptions typically used for SGD, namely \(L\)-smoothness and the ABC inequality. Meanwhile, under the same assumptions, we show that Adam attains non-asymptotic sample complexity bounds similar to those of SGD.


\end{abstract}

\section{Introduction}

Adaptive Moment Estimation (Adam) is one of the most widely used optimization algorithms in deep learning due to its adaptive learning rate properties and efficiency in handling large-scale data \citep{kingma2014adam}. Despite its widespread use, the theoretical understanding of Adam's convergence is not as advanced as its practical success. Previous studies have often imposed stringent assumptions on the loss function and stochastic gradients, such as uniformly bounded loss functions and almost surely bounded gradients \citep{reddi2018convergence, zou2019improved}, which are more restrictive than those required for analyzing classical stochastic gradient descent (SGD).

In this paper, we introduce a novel and comprehensive framework for analyzing the convergence properties of Adam. Our framework unifies various aspects of convergence analysis, including non-asymptotic (average iterate sense) sample complexity
, asymptotic (last iterate sense) almost sure convergence, and asymptotic \(L_1\) convergence. Crucially, we demonstrate that under this framework, Adam can achieve convergence under the same assumptions typically used for SGD—namely, the \(L\)-smooth condition and the ABC inequality (\(L_2\) sense) \citep{DBLP:journals/tmlr/KhaledR23,bottou2010large,ghadimi2013stochastic}.


Several recent works have attempted to relax the stringent conditions required for Adam's convergence, each focusing on different aspects of the stochastic gradient assumptions and convergence guarantees. However, limitations still exist in terms of assumptions and the types of convergence results obtained. Table \ref{tab:literature_comparison} provides the references and a summary of the works and compares the assumptions on stochastic gradients, the resulting complexities, and the convergence properties achieved. 

Our approach builds upon these prior works and seeks to offer a more comprehensive and general framework for analysis. In contrast to these previous works, we study Adam under the ABC inequality, which is more general and less restrictive compared to the assumptions made in the previous studies. Our analysis successfully establishes non-asymptotic sample complexity and achieves asymptotic almost sure convergence and $L_1$ convergence under conditions that align with those required for SGD. This makes our framework theoretically sound and versatile for analyzing multiple convergence properties of Adam. Our framework might also be of independent interest in analyzing different variants of Adam. In summary, our work presents a novel and general theoretical framework for Adam, unifying various convergence properties. This framework demonstrates that Adam's convergence guarantees can be aligned with those of SGD, which justifies the applicability of Adam across a wide range of machine learning problems.

\begin{table}[t]
\centering
\caption{Comparison of Assumptions and Convergence Results. 
($\diamondsuit$) The smoothing term \(\mu\) is often set to small values like \(10^{-8}\) in practice. It is difficult and relevant to avoid the \(\mathcal{O}(\text{poly}(\frac{1}{\mu}))\) dependence \citep{wang2024closing}, which our analysis achieves.
($\spadesuit$) The work focuses on learning rates and hyperparameters dependent on the total number of iterations \( T \), leading to results without a \( \mathcal{O}(\ln T) \) term. As our asymptotic analysis uses \( T \)-independent parameters, terms regarding \( \mathcal{O}(\ln T) \) inevitably appear, though our method can be easily extended to \( T \)-dependent settings. 
($\varheart$) These works have weakened the classical $L$-smooth condition, which is different from the focus of this paper. }
\label{tab:literature_comparison}
\resizebox{\textwidth}{!}{%
\begin{tabular}{l|c|c|c|c}
\hline
\textbf{Reference} & \textbf{Assumptions on Stochastic Gradient} & \textbf{Sample Complexity} & \textbf{A.S. Convergence} & \textbf{$L_1$ Convergence} \\ \hline
\cite{wang2024closing}$\spadesuit$ & Bounded Variance (or Coordinate Affine Noise Variance) & \(\mathcal{O}\left(\frac{1}{\sqrt{T}}\right)\) & No & No \\ \hline
\cite{he2023convergence}$\diamondsuit$ & Almost Surely Bounded Stochastic Gradient & \(\mathcal{O}\left(\text{poly}\left(\frac{1}{\mu}\right)\cdot\frac{\ln T}{\sqrt{T}}\right)\) & Yes & Yes \\ \hline
\cite{zou2019sufficient} & $L_2$ Bounded Stochastic Gradient & \(\mathcal{O}\left(\frac{\ln T}{\sqrt{T}}\right)\) & No & No \\ \hline
\cite{zhang2022adam} & Randomly Reshuffled Stochastic Gradient & \(\mathcal{O}\left(\frac{\ln T}{\sqrt{T}}\right)\) & No & No \\ \hline
\cite{li2024convergence}$\varheart$$\diamondsuit$ & \begin{tabular}{@{}c@{}}Almost Surely Bounded Stochastic Gradient \\ or Sub-Gaussian Variance\end{tabular} & \(\mathcal{O}\left(\text{poly}\left(\frac{1}{\mu}\right)\cdot\frac{\ln T}{\sqrt{T}}\right)\) & No & No \\ \hline
\cite{wang2024provable}$\varheart$ & Randomly Reshuffled Stochastic Gradient & \(\mathcal{O}\left(\frac{\ln T}{\sqrt{T}}\right)\) & No & No \\ \hline
\cite{xiao2024adam}$\varheart$ & Almost Surely Bounded Stochastic Gradient & No Result & Yes & No \\ \hline
\textbf{Our Work} & ABC Inequality & \(\mathcal{O}\left(\frac{\ln T}{\sqrt{T}}\right)\) & Yes & Yes \\ \hline
\end{tabular}%
}
\footnotesize{}
\end{table}

\subsection{Related Works}

In recent years, the convergence properties of Adam have been extensively studied, with various works focusing on different assumptions about stochastic gradients and the types of convergence guarantees provided. In the following discussion, we categorize and review key contributions based on the different types of stochastic gradient assumptions they employ, as summarized in Table \ref{tab:literature_comparison}.
\paragraph{Bounded Variance and Coordinate Affine Noise Variance:}
\cite{wang2024closing} considered Adam's convergence under the assumption of bounded variance or coordinate affine noise variance. The coordinate affine noise variance condition (Eq. \ref{weak growth1}) is particularly stringent as it requires that each component of the stochastic gradient satisfies an affine noise variance inequality, which is stronger than the traditional affine noise variance condition (Eq. \ref{weak growth}) applied to the entire gradient. Under these assumptions, Wang et al. were able to avoid the $\mathcal{O}(1/\mu)$ complexity. However, their work did not focus on analyzing almost sure convergence or $L_1$ convergence, as the primary emphasis was on the sample complexity of the algorithm's behavior.
\paragraph{Almost Surely Bounded Stochastic Gradients:}
Several works, including \cite{he2023convergence} and \cite{xiao2024adam}, have explored Adam's convergence under the assumption that the stochastic gradients are almost surely bounded. This is a particularly strong assumption, as it implies several other commonly made assumptions about stochastic gradients, such as bounded variance, affine noise variance, coordinate affine noise variance, and sub-Gaussian properties. The assumption is often impractical in non-convex settings where gradients can become unbounded. Moreover, studies in \cite{wang2023convergence} have highlighted that this assumption is unrealistic in many common machine learning frameworks, failing to hold even for simple quadratic functions, let alone for deep neural networks. While these works achieved almost sure convergence and, in some cases, $L_1$ convergence, they did not address the complexity related to the $\mathcal{O}(1/\mu)$ term.
\paragraph{$L_2$ Bounded Stochastic Gradients:}
\cite{zou2019sufficient} analyzed Adam under the assumption of $L_2$ bounded stochastic gradients. Although this condition is milder than the almost surely bounded gradients assumption, it is still stronger than the traditional affine noise variance condition and the ABC inequality. In the standard analytical framework, this assumption can at best be weakened to the coordinate affine noise variance condition, which remains more restrictive than the assumptions typically considered for SGD. At the same time, this work focused on complexity analysis without addressing asymptotic convergence.
\paragraph{Randomly Reshuffled Stochastic Gradients:}
In other works, such as those by \cite{zhang2022adam} and \cite{wang2024provable}, the authors considered the case where the stochastic gradients are randomly reshuffled. Randomly reshuffled stochastic gradients represent a special case where the gradients are typically assumed to satisfy certain inequalities almost surely. This reliance on almost sure properties forms a much stronger and more restrictive analytical framework compared to those based on traditional affine noise variance conditions or the ABC inequality. While these works successfully avoided $\mathcal{O}(1/\mu)$ complexity, they did not focus on analyzing the asymptotic convergence property.

\section{Preliminaries} 
In this section, we introduce the necessary preliminaries and establish the foundational framework for our convergence analysis of the Adam. We begin by recalling the Adam optimization algorithm. We then state the assumptions that will be used throughout our analysis. These assumptions are standard in stochastic optimization and are crucial for deriving our main results. By laying out these assumptions explicitly, we also facilitate a clear comparison with the conditions used in previous works, highlighting the less restrictive nature of our approach.

\subsection{Adam}
Adam is an extension of SGD that computes adaptive learning rates for each parameter by utilizing estimates of the first and second moments of the gradients. It combines the advantages of two other extensions of SGD: AdaGrad, which works well with sparse gradients, and RMSProp, which works well in online and non-stationary settings.

\begin{algorithm}[H]\label{Adam}
\caption{Adam}
\textbf{Input:} Stochastic oracle \( \mathcal{O} \), initial learning rate \( \eta_{t} >0 \), initial parameters \( w_1 \in \mathbb{R}^d \), initial exponential moving averages \( m_0 = 0 \), \( v_0 = v \cdot \mathbf{1}^\top \) with \( v > 0 \), hyperparameters \( \beta_1, \beta_{2,1} \in [0,1) \), smoothing term \( \mu > 0 \), number of iterations \( T \)\\
\textbf{Output:} Final parameter \( w_T \)
\begin{algorithmic}[1]
\For{$t = 1$ \textbf{to} $T$}
    \State Generate conditioner parameter \( \beta_{2,t} \);
    \State Sample a random data point \( z_t \) and compute the stochastic gradient \( g_t = \mathcal{O}_f(w_t, z_t) \);
    \State Update the second moment estimate: \( v_t = \beta_{2,t} v_{t-1} + (1 - \beta_{2,t}) g_t^{\circ 2} \);
    \State Update the first moment estimate: \( m_t = \beta_{1} m_{t-1} + (1 - \beta_{1}) g_t \);
    \State Compute the adaptive learning rate: \( \eta_{v_{t}} = \eta_{t} \cdot \frac{1}{\sqrt{v_{t}} + \mu} \);
    \State Update the parameters: \( w_{t+1} = w_t - \eta_{v_{t}} \circ m_t \);
\EndFor
\end{algorithmic}
\end{algorithm}

In Adam, the random variables \( \{z_t\}_{t\ge 1} \) are mutually independent. The stochastic gradient at iteration \( t \) is denoted by \( g_t \). The quantities \( m_t \) and \( v_t \) represent the exponential moving averages of the first and second moments of the gradients, respectively. The hyperparameters \( \beta_1 \) and \( \beta_{2,t} \) control the exponential decay rates for the moment estimates. A small smoothing term \( \mu \) is introduced to prevent division by zero, and \( \eta_{v_t} \) represents the adaptive learning rate for each parameter.

In terms of notation, all vectors are column vectors unless specified otherwise, and \( \mathbf{1}^\top \) denotes a row vector with all elements equal to 1. For vectors \( \beta, \gamma \in \mathbb{R}^d \), the Hadamard product (element-wise multiplication) is represented by \( \beta \circ \gamma \), and the element-wise square root of a vector \( \gamma \in \mathbb{R}^d \) is written as \( \sqrt{\gamma} \). Operations such as \( \beta + v_0 \), \( \frac{1}{\beta} \), and \( \beta^{\circ 2} \) are performed element-wise. Additionally, the \( i \)-th component of a vector \( \beta_t \) is denoted as \( \beta_{t,i} \).

When analyzing Adam, \( \nabla f(w_t) \) refers to the true gradient of the loss function at iteration \( t \). We define \( \mathscr{F}_t = \sigma(g_1, \dots, g_t) \) as the \( \sigma \)-algebra generated by the stochastic gradients up to iteration \( t \), with \( \mathscr{F}_0 = \{\Omega, \emptyset\} \) and \( \mathscr{F}_{\infty} = \sigma\left( \bigcup_{t \geq 1} \mathscr{F}_t \right) \).

\subsection{Assumptions}

To establish our convergence results, we make the following standard assumptions, focusing on the stochastic gradient conditions. These assumptions are less restrictive than those imposed in some prior works, as highlighted in Table \ref{tab:literature_comparison}.

\begin{assumpt}\label{ass_1}
(\textbf{Bounded from Below Loss Function}) Let \( f: \mathbb{R}^{d} \to \mathbb{R} \) be a loss function defined on \( \mathbb{R}^{d} \). We assume that there exists a constant \( f^{*} \in \mathbb{R} \) such that for all \( w \in \mathbb{R}^{d} \), the following inequality holds:
\(
f(w) \geq f^{*}.
\)
\end{assumpt}

This assumption ensures that the loss function \( f \) is bounded from below, preventing it from decreasing indefinitely during the optimization process.

\begin{assumpt}\label{ass_2}
(\textbf{\( L \)-Smoothness}) Let \( f: \mathbb{R}^{d} \to \mathbb{R} \) be a differentiable loss function. We assume that the gradient \(\nabla f \) is Lipschitz continuous. That is, there exists a constant \( L_f \geq 0 \) such that for all \( w, w' \in \mathbb{R}^{d} \), the following inequality holds:
\(
\|\nabla f(w) - \nabla f(w')\| \leq L_f \|w - w'\|,
\)
where \(\|\cdot\|\) denotes the Euclidean norm. The constant \( L_f \) is known as the Lipschitz constant of the gradient.
\end{assumpt}

\begin{assumpt}\label{ass_3}
(\textbf{ABC Inequality}) We assume that the stochastic gradient \( g_t \) is an unbiased estimate of the true gradient, i.e., $\mathbb{E}[g_{t} \mid \mathscr{F}_{t-1}] = \nabla f(w_{t}),$ and there exist constants \( A, B, C \geq 0 \) such that for all iterations \( t \), we have:
\(
\mathbb{E}[\|g_{t} \|^2 \mid \mathscr{F}_{t-1}] \leq A (f(w_t) - f^*) + B \|\nabla f(w_t)\|^{2} + C.
\)
\end{assumpt}

The ABC inequality provides a bound on the second moment of the stochastic gradients, which is crucial for analyzing the convergence of stochastic optimization algorithms.

\subsection{Comparison with Prior Works on Stochastic Gradient Assumptions}
Our assumption on the stochastic gradient (Assumption \ref{ass_3}) is relatively mild compared to those in prior works. Here, we focus on comparing with the traditional affine noise variance condition, coordinate affine noise variance assumption, and the almost surely bounded stochastic gradient assumption.

\paragraph{Traditional Affine Noise Variance Condition}
The traditional affine noise variance condition (Affine noise variance) (e.g., \cite{bottou2018optimization,nguyen2018sgd}) assumes that there exist constants \( B \geq 0 \) and \( C \geq 0 \) such that:
\begin{align}\label{weak growth}
\mathbb{E}[\|g_t\|^2 \mid \mathscr{F}_{t-1}] \leq B \|\nabla f(w_t)\|^2 + C.
\end{align}
This condition bounds the expected squared norm of the stochastic gradient by a linear function of the squared norm of the true gradient plus a constant. It is stronger than our ABC inequality because it does not include the term involving the function value difference \( f(w_t) - f^* \).

Even under this condition, current methods for analyzing Adam encounter significant difficulties. We will explain these challenges in the proof sketch of Lemma \ref{sufficient:decrease}.

\paragraph{Coordinate Affine Noise Variance Assumption}

\citet{wang2024closing} introduce the \emph{coordinate affine noise variance} assumption, which requires that each component of the stochastic gradient satisfies an affine noise variance inequality. Specifically, for each coordinate \( i \), there exist constants \( B, C \geq 0 \) such that:
\begin{align}\label{weak growth1}
\mathbb{E}[ g^{2}_{t,i} \mid \mathscr{F}_{t-1} ] \leq  B\|\nabla_i f(w_t)\|^2 + C,
\end{align}
where \( g_{t,i} \) and \( \nabla_i f(w_t) \) are the \( i \)-th components of \( g_t \) and \( \nabla f(w_t) \), respectively.

This assumption is stronger than the traditional affine noise variance condition because it imposes the inequality on each coordinate individually, rather than on the overall gradient.

\paragraph{Almost Surely Bounded Stochastic Gradient Assumption}

Some prior works, such as \cite{he2023convergence, xiao2024adam}, assume that the stochastic gradients are almost surely bounded. That is, there exists a constant \( M \geq 0 \) such that for all iterations \( t \):
\(\|g_t\| \leq M \  \text{almost surely}.\)
This is a strong assumption, as it requires that the stochastic gradient norm is uniformly bounded almost surely at all iterations. In practice, especially in non-convex optimization problems, this assumption is often violated (see \citealt{wang2023convergence}). For instance, when optimizing deep neural networks, gradient norms can become unbounded due to the complexity and non-linearity of the models. Moreover, this assumption implies that the true gradient is also bounded by \( M \), because
\(\|\nabla f(w_{t})\|^{2} \leq \mathbb{E}[\|g_{t}\|^{2} \mid \mathscr{F}_{t-1}] \leq M^{2}.\)
Our assumption is clearly weaker than the almost surely bounded stochastic gradient assumption, as we only require a bound on the expected squared norm of the stochastic gradient, which can depend on the current function value and gradient norm, rather than a uniform almost sure bound.

Moreover, assuming almost surely bounded stochastic gradients is hard to satisfy in practice and may not reflect realistic scenarios. As discussed in \cite{wang2023convergence,DBLP:journals/tmlr/KhaledR23}, such assumptions can be unrealistic and limit the applicability of theoretical results.






Next, we introduce a property. We know that when the loss function is $L$-smooth, the true gradient of the loss function can be controlled by the loss function value $f(w_{t}) - f^{*}$ (as shown in Lemma \ref{loss_bound}). Therefore, we can simplify the ABC inequality as follows.

\begin{property}\label{property_-1}
Under Assumptions \ref{ass_2} and \ref{ass_3}, for all iterations \( t \), we have:
\[
\mathbb{E}[\|g_{t} \|^2 \mid \mathscr{F}_{t-1}] \leq (A + 2 L_f B) (f(w_t) - f^*) + C.
\]
\end{property}

This property demonstrates that the variance of the stochastic gradients can be bounded by the function value difference, which is a key component in our convergence analysis.

\subsection{Hyperparameter Settings}\label{setting}

In this paper, to avoid overly lengthy proofs, we choose a class of representative parameter settings, as follows:
\[\beta_{2,t} := \left\{
\begin{array}{ll}
1-\alpha_{0}, & \text{if } t = 1 \\
1 - \frac{1}{t^{\gamma}}, & \text{if } t \ge 2
\end{array}
\right. , \ \beta_{1} \in [0, 1),\ \eta_{t}=\frac{1}{t^{\frac{1}{2}+\delta}},\ \ \left(\alpha_{0}\in [0,1),\ \gamma\in[1,2\delta+1],\  \delta\in \left[0,\frac{1}{2}\right]\right).\]
Imposing restrictions on Adam's parameters, particularly \( \beta_{2,t} \), is necessary to ensure convergence. Early studies \citep{reddi2018convergence} have demonstrated that without appropriate constraints on \( \beta_{2,t} \), counterexamples exist where the algorithm fails to converge. Moreover, for the gradient norm to converge to zero, it is essential that \( \beta_{2,t} \) approaches 1 \citep{zou2019sufficient,he2023convergence}, as noted in previous works.

Some studies on complexity allow \( \beta_{2,t} \) to be constant. However, these studies typically focus on the algorithm's complexity over a finite number of iterations \( T \). In such cases, the constant value of \(1-\beta_{2,t} \) is inversely related to \( T \), effectively causing \( \beta_{2,t} \) to approach 1 as \( T \) increases. This is another means of ensuring that \( \beta_{2,t} \) asymptotically approaches 1, which is crucial for convergence.

The hyperparameter settings adopted in this paper are representative and have been considered in previous studies \citep{zou2019sufficient,he2023convergence}. Our configuration includes settings that can achieve near-optimal complexity of \( \mathcal{O}(\ln T / \sqrt{T}) \). The logarithmic factor \( \ln T \) arises because \( \beta_{2,t} \) is chosen independent of the total number of iterations \( T \), which is an unavoidable consequence with this class of parameters.

Our choice of hyperparameters simplifies the analysis while capturing the essential behavior of the Adam. Although the proof techniques can be extended to a broader range of parameter settings, this paper focuses primarily on the assumptions related to the convergence of the algorithm rather than an exhaustive exploration of hyperparameter configurations.

\section{Theoretical Results}

In this section, we establish both non-asymptotic and asymptotic convergence guarantees for the Adam within our smooth non-convex framework, as defined by Assumptions \ref{ass_1}--\ref{ass_3}. For the non-asymptotic analysis, we derive a sample complexity bound that is independent of \(\mathcal{O}(1/\mu)\), providing an explicit bound on the number of iterations required to achieve a specified accuracy. In the asymptotic analysis, we consider two forms of convergence: almost sure convergence and convergence in the \(L_1\) norm. The almost sure convergence result demonstrates that, the gradient norm of almost every trajectory converges to zero. Meanwhile, the \(L_1\) convergence result reveals that the convergence across different trajectories is uniform with respect to the \(L_1\) norm of the gradient, where the \(L_1\) norm is taken in the sense of the underlying random variable, meaning the expectation of the gradient norm.

\subsection{Non-Asymptotic Sample Complexity}

We first establish a non-asymptotic bound on the sample complexity of Adam.

\begin{thm}[Non-Asymptotic Sample Complexity]
\label{thm:complexity}
Consider the Adam algorithm as specified in Algorithm~\ref{Adam}, and assume that Assumptions~\ref{ass_1}--\ref{ass_3} are satisfied. Then, for any initial point, any \( T \geq 1 \), and any \( s \in (0,1) \), the following bound holds with probability at least \( 1 - s \):
\begin{align*}
 \frac{1}{T}\sum_{t=1}^{T}\|\nabla f(w_{t})\|^{2}\le  \begin{cases} 
\mathcal{O}\Big(\frac{1}{s^2}\frac{1}{T^{\frac{1}{2}-\delta}}\Big), & \text{if }\delta\in(0,1/2)  \\
\mathcal{O}\Big(\frac{1}{s^2}\frac{\ln T}{\sqrt{T}}\Big), & \text{if } \gamma>1,\ \ \delta = 0\\
\mathcal{O}\Big(\frac{1}{s^2}\frac{\ln^{2} T}{\sqrt{T}}\Big), & \text{if } \gamma=1,\ \ \delta = 0.\end{cases} \tag*{\qedhere}
\end{align*}
The constant implied by the \( \mathcal{O} \) notation depends on the initial point, the constants in Assumptions~\ref{ass_1}--\ref{ass_3} (excluding \( 1/\mu \)), and the parameters \( \delta \) and \( \alpha_0 \).
\end{thm}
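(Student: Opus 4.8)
The plan is to follow the classical descent-lemma route adapted to Adam's adaptive stepsizes, with the ABC inequality (in the simplified form of Property~\ref{property_-1}) playing the role that bounded variance plays for SGD. First I would invoke $L$-smoothness (Assumption~\ref{ass_2}) to write the one-step descent inequality $f(w_{t+1}) \le f(w_t) - \langle \nabla f(w_t), \eta_{v_t}\circ m_t\rangle + \tfrac{L_f}{2}\|\eta_{v_t}\circ m_t\|^2$. The inner-product term is the crux: unlike SGD, $\eta_{v_t}$ is correlated with $g_t$ through $v_t$, so one cannot take conditional expectation directly. The standard device is to split $\eta_{v_t}$ into a "predictable" surrogate $\tilde\eta_{v_t}$ built from $\beta_{2,t} v_{t-1}$ (which is $\mathscr{F}_{t-1}$-measurable) plus a correction term, then control the correction using $|1/(\sqrt{v_t}+\mu) - 1/(\sqrt{\tilde v_t}+\mu)|$, which is governed by $(1-\beta_{2,t})g_{t,i}^2$ divided by $\mu$-type denominators. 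This is presumably exactly what the earlier "sufficient decrease" Lemma~\ref{sufficient:decrease} establishes, so I would lean on it rather than rederive it, extracting a bound of the shape
\[
\Expect\!\big[f(w_{t+1})\,\big|\,\mathscr{F}_{t-1}\big] \le f(w_t) - c_1\,\eta_t\,\|\nabla f(w_t)\|^2 + (\text{error terms involving } \eta_t^2,\ (1-\beta_{2,t}),\ \text{and the momentum tail}).
\]

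Next I would sum this telescoping inequality over $t = 1,\dots,T$. The left side collapses to $f(w_1) - \Expect f(w_{T+1}) \le f(w_1) - f^*$ by Assumption~\ref{ass_1}. On the right, I need to control $\sum_t \eta_t^2$, $\sum_t (1-\beta_{2,t})\,\Expect[\|g_t\|^2\,|\,\mathscr{F}_{t-1}]$, and the momentum-induced cross terms; here Property~\ref{property_-1} converts $\Expect[\|g_t\|^2\,|\,\mathscr{F}_{t-1}]$ into $(A+2L_fB)(f(w_t)-f^*)+C$, so the $f(w_t)-f^*$ pieces can be absorbed back into the left-hand telescoping sum provided the coefficients are summably small — this is where the constraint $\gamma \ge 1$ and $\gamma \le 2\delta+1$ on the hyperparameters earns its keep, since it makes $(1-\beta_{2,t})\eta_t^{-1}$-type quantities behave. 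With the parameter choices $\eta_t = t^{-(1/2+\delta)}$ and $1-\beta_{2,t} = t^{-\gamma}$, the residual sums evaluate to $\mathcal{O}(1)$, $\mathcal{O}(\ln T)$, or $\mathcal{O}(\ln^2 T)$ in the three regimes $\delta\in(0,1/2)$, $(\gamma>1,\delta=0)$, $(\gamma=1,\delta=0)$ respectively — the $\ln$ and $\ln^2$ factors being the unavoidable price of $T$-independent parameters.

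After summation I obtain a bound of the form $\sum_{t=1}^T \eta_t\,\Expect\|\nabla f(w_t)\|^2 \le (\text{poly-log in }T)$, and dividing by $\sum_{t=1}^T \eta_t \asymp T^{1/2-\delta}$ (or $\asymp \sqrt{T}$ when $\delta=0$) gives the in-expectation rate. The final step is the high-probability conversion: since we want a statement holding with probability $\ge 1-s$ on $\frac1T\sum_t\|\nabla f(w_t)\|^2$ rather than the $\eta_t$-weighted average, I would apply Markov's inequality to the nonnegative random variable $\sum_t \eta_t\|\nabla f(w_t)\|^2$, then note $\eta_T \cdot \frac1T\sum_t\|\nabla f(w_t)\|^2 \le \frac1T\sum_t \eta_t\|\nabla f(w_t)\|^2$ is too lossy — instead one uses that $\frac{1}{T}\sum_t \|\nabla f(w_t)\|^2 \le \frac{1}{T\,\eta_T}\sum_t \eta_t\|\nabla f(w_t)\|^2$ only costs an extra $T^\delta$, which is already accounted for in the stated exponent $\tfrac12-\delta$. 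The $1/s^2$ dependence (rather than $1/s$) suggests the high-probability step is applied not to the raw sum but after a preliminary step that itself splits a probability budget, or that a second-moment/Markov argument is used on an auxiliary martingale; I would mirror whatever concentration structure the companion asymptotic lemmas set up. \textbf{The main obstacle} I anticipate is precisely the decorrelation of $\eta_{v_t}$ from $g_t$ while keeping all constants free of $1/\mu$: naive bounds on the adaptive stepsize correction carry factors of $1/\mu$, and avoiding them requires exploiting that $\sqrt{v_t}\ge \sqrt{(1-\beta_{2,t})}\,|g_{t,i}|$ so that ratios like $g_{t,i}^2/(\sqrt{v_t}+\mu)^2$ are bounded by $1/(1-\beta_{2,t})$ independently of $\mu$ — managing this trade-off between the $\mu$-free bound and the resulting $(1-\beta_{2,t})^{-1}$ blow-up, and checking it is still summable under $\gamma\le 2\delta+1$, is the delicate part.
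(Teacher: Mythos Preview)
Your overall shape is right, but there is a genuine gap at the absorption step. You write that after applying Property~\ref{property_-1} ``the $f(w_t)-f^*$ pieces can be absorbed back into the left-hand telescoping sum provided the coefficients are summably small.'' Under the ABC inequality this does \emph{not} go through as stated: the coefficient multiplying $f(w_t)-f^*$ is not a deterministic sequence like $(1-\beta_{2,t})$ but the \emph{random} quantity $\overline{\Delta}_t=\sum_i\Expect[\eta_{v_{t-1},i}-\eta_{v_{t},i}\mid\mathscr{F}_{t-1}]$, which depends on the entire trajectory. You cannot telescope $f(w_{t+1})-(1+c\,\overline{\Delta}_t)f(w_t)$ in the usual way. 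The paper's fix (Section~4.3.1 and Lemma~\ref{sufficient:decrease}) is to multiply the descent inequality by the random product $\Pi_{\Delta,t}=\prod_{k\le t}(1+c\,\overline{\Delta}_{\sqrt{\beta_1},k})^{-1}$, which turns the left side into a genuine telescoping sum $\Pi_{\Delta,t}\hat f(u_{t+1})-\Pi_{\Delta,t-1}\hat f(u_t)$. This is the main technical novelty, and it then forces a separate argument (Lemma~\ref{bounded_moment}, via Burkholder's inequality) to show that all inverse moments $\Expect[\Pi_{\Delta,T}^{-p}]$ stay bounded independently of $T$---otherwise the $\Pi_{\Delta,t}$ factor on the descent term would kill the bound. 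Your proposal does not mention this mechanism, and without it the proof stalls precisely at the step you flagged as routine.

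A second, smaller discrepancy is the extraction of $\|\nabla f(w_t)\|^2$ and the high-probability conversion. After the machinery above, what one controls is $\sum_t\Expect[\Pi_{\Delta,t}\sum_i\eta_{v_{t-1},i}(\nabla_i f(w_t))^2]$, not $\sum_t\eta_t\Expect\|\nabla f(w_t)\|^2$. The paper removes the random weights by Cauchy--Schwarz: writing the sum as $\Expect[X\cdot Y]$ with $X=\Pi_{\Delta,T}^{-1}T^{1/2+\delta}(\sqrt{v_T}+\mu)$ and $Y$ the normalized gradient sum, one bounds $\Expect[\sqrt{\sum_t\|\nabla f(w_t)\|^2}]$ by $\sqrt{\Expect[X]\cdot\Expect[Y]}$, then controls $\Expect[X]$ via Lemmas~\ref{bounded_moment} and~\ref{lem_v_t}. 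Markov's inequality is applied to this square root, and squaring afterward is exactly where the $1/s^2$ comes from---so your guess that ``a second-moment/Markov argument is used on an auxiliary martingale'' is not what happens; it is simply Markov on the square root followed by squaring.
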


This theorem provides a non-asymptotic rate of convergence for the expected gradient norm, highlighting how the choice of hyperparameters affects the convergence rate.

\subsection{Asymptotic Convergence}

We now present our main asymptotic convergence results, demonstrating that the gradients of the Adam converge to zero both almost surely and in the \( L_{1} \) sense under appropriate conditions.

\begin{thm}[\textbf{Asymptotic Almost Sure Convergence}]
\label{thm:almost_sure}
Under Assumptions \ref{ass_1}--\ref{ass_3}, consider the Adam with hyperparameters specified in Subsection \ref{setting} with \( \gamma > 1 \) and \( \delta > 0 \). Then, the gradients of the Adam converge to zero almost surely, i.e.,
\(
\lim_{t \to \infty} \|\nabla f(w_t)\| = 0 \quad \text{a.s.}
\)
\end{thm}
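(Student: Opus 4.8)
The plan is to build on the non-asymptotic machinery behind Theorem \ref{thm:complexity}, but instead of summing the per-step descent and taking a high-probability bound, I would extract an almost-sure summability statement and then upgrade it to convergence of the last iterate. Concretely, I expect the core estimate to be a "sufficient decrease" inequality of the form
\[
\Expect[f(w_{t+1}) \mid \mathscr{F}_{t-1}] \le f(w_t) - a_t \|\nabla f(w_t)\|^2 + b_t,
\]
where $a_t$ is (up to constants involving $\mu$, $v$, and the moment-estimate bounds) of order $\eta_t = t^{-1/2-\delta}$ and $b_t$ is a summable error term of order roughly $\eta_t^2 \cdot \mathrm{poly}(\ln t)$ coming from the $L_f$-smoothness second-order term together with the mismatch between $m_t$ and $\nabla f(w_t)$ and between $\eta_{v_t}$ and a deterministic step size. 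The first step is therefore to establish this inequality carefully under Assumptions \ref{ass_1}--\ref{ass_3}, using Property \ref{property_-1} to control $\Expect[\|g_t\|^2\mid\mathscr{F}_{t-1}]$ by $f(w_t)-f^*$ plus a constant — this is exactly where the ABC inequality (rather than a bounded-gradient assumption) does its work, and I'd guess it is the content of Lemma \ref{sufficient:decrease} referenced in the text.

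The second step is to invoke the Robbins–Siegmund almost-supermartingale convergence theorem on the nonnegative process $f(w_t)-f^*$. Since $\sum_t b_t < \infty$ (here I need $\delta>0$ and $\gamma>1$ so that the logarithmic factors are dominated and $\sum_t \eta_t^2 \ln^{c} t<\infty$), Robbins–Siegmund yields two conclusions at once: $f(w_t)-f^*$ converges almost surely to a finite random variable, and $\sum_{t} a_t \|\nabla f(w_t)\|^2 < \infty$ almost surely. Because $\sum_t a_t = \infty$ (as $a_t \sim t^{-1/2-\delta}$ with $\delta\le 1/2$, so the exponent is $\le 1$), the summability forces $\liminf_{t\to\infty}\|\nabla f(w_t)\| = 0$ almost surely. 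One subtlety: $a_t$ is itself random (it depends on $v_t$ through $\eta_{v_t}$), so I first need a deterministic almost-sure lower bound $a_t \ge c\,\eta_t$; this requires an almost-sure upper bound on the entries of $v_t$, which in turn follows from the ABC inequality plus an induction/supermartingale argument showing $v_t$ grows at most polynomially (slower than any positive power, in fact, given $\beta_{2,t}\to 1$). This auxiliary control on $v_t$ I expect is handled by a preliminary lemma.

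The third and genuinely delicate step is promoting $\liminf \|\nabla f(w_t)\| = 0$ to $\lim \|\nabla f(w_t)\| = 0$, i.e., ruling out oscillation. The standard device is a sandwich/contradiction argument: suppose on a positive-probability event $\limsup_t \|\nabla f(w_t)\| \ge 2\varepsilon > 0$ while $\liminf = 0$; then the trajectory crosses between the bands $\{\|\nabla f\|\le \varepsilon\}$ and $\{\|\nabla f\|\ge 2\varepsilon\}$ infinitely often. Using $L_f$-smoothness, $\|\nabla f(w_{t+1})-\nabla f(w_t)\| \le L_f \|w_{t+1}-w_t\| = L_f\|\eta_{v_t}\circ m_t\|$, and one needs this single-step increment to go to zero (so each crossing takes many steps) together with a lower bound on the total second-moment-weighted "cost" of each crossing that contradicts $\sum_t a_t\|\nabla f(w_t)\|^2 <\infty$. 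The main obstacle here is that $\|m_t\|$ is not bounded almost surely under the ABC inequality — it can only be controlled in expectation or via $f(w_t)-f^*$, which we already know converges a.s. and is hence a.s. bounded along the trajectory; so I would condition on the almost-sure event that $\sup_t f(w_t) < \infty$, deduce an a.s. (trajectory-dependent) bound on $\Expect[\|g_t\|^2\mid\mathscr{F}_{t-1}]$ and hence on a suitable average of $\|m_t\|$, and then push the increments $\|w_{t+1}-w_t\|$ to zero using $\eta_t\to 0$. Making the crossing argument rigorous when the bound on $m_t$ is only "eventually finite, trajectory-dependent" rather than uniform is where most of the care goes; everything else is bookkeeping on the hyperparameter exponents $\gamma,\delta$.
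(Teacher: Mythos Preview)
Your high-level three-step plan (descent inequality, summability via a supermartingale argument, crossing/stopping-time upgrade from $\liminf$ to $\lim$) matches the paper's structure, and your Step~3 is essentially what the paper does. But there is a genuine gap in Steps~1--2, and it is precisely the point the paper identifies as its main contribution.

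The simple descent form you write, $\Expect[f(w_{t+1})\mid\mathscr{F}_{t-1}] \le f(w_t) - a_t\|\nabla f(w_t)\|^2 + b_t$ with summable $b_t$, does not come out of Lemma~\ref{sufficient:decrease} --- and cannot, under the ABC inequality alone. The obstruction is exactly the measurability mismatch you flag: $\eta_{v_t}\in\mathscr{F}_t$, so when you replace $\eta_{v_t}$ by $\eta_{v_{t-1}}$ you pick up an error of the form $\sum_i\Expect[\Delta_{t,i}\nabla_i f(w_t)\, g_{t,i}\mid\mathscr{F}_{t-1}]$. Bounding this via Cauchy--Schwarz and ABC produces a term proportional to $(f(w_t)-f^*)\cdot\sum_i\Expect[\Delta_{t,i}\mid\mathscr{F}_{t-1}]$, which is \emph{not} a priori summable because you do not yet know $f(w_t)-f^*$ is bounded. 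The paper's Lemma~\ref{sufficient:decrease} therefore does not deliver your simple inequality; instead it absorbs that term into the Lyapunov function by multiplying through by the random product $\Pi_{\Delta,t}=\prod_{k\le t}(1+c\,\overline{\Delta}_{\sqrt{\beta_1},k})^{-1}$, obtaining a recursion for $\Pi_{\Delta,t}\hat f(u_{t+1})$ rather than for $f(w_t)$. This $\Pi_{\Delta,t}$ construction (Section~4.3.1) is the paper's central device, and your plan has no analogue for it.

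The same circularity undermines your proposed route to an almost-sure bound on $v_t$. You want $\sup_t\Sigma_{v_t}<\infty$ so as to lower-bound $a_t\ge c\,\eta_t$, but under ABC the recursion $\Sigma_{v_{t+1}}\le\Sigma_{v_t}+(t+1)^{-\gamma}\|g_{t+1}\|^2$ only becomes a usable supermartingale once $\Expect[\|g_t\|^2\mid\mathscr{F}_{t-1}]\le(A+2L_fB)(f(w_t)-f^*)+C$ is controlled with $f(w_t)-f^*$ already known to be a.s.\ bounded --- which is what you are trying to prove. The paper breaks this loop by first showing $\sup_t\Expect[\Pi_{\Delta,t}(f(w_t)-f^*)]<\infty$ (Lemma~\ref{lem_g}) and, separately, that all moments of $\Pi_{\Delta,\infty}^{-1}$ are finite via Burkholder-type martingale estimates (Lemma~\ref{bounded_moment}); only then does Lemma~\ref{lem_v_t} yield $\sup_t\Sigma_{v_t}<\infty$ a.s., and only then does Lemma~\ref{lem_sum_1} give $\sum_t\eta_t\|\nabla f(w_t)\|^2<\infty$ a.s. What you describe as a ``preliminary lemma'' handled by ``an induction/supermartingale argument'' is in fact the crux of the analysis, and it cannot be decoupled from the descent inequality in the way your outline assumes.
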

This theorem shows that the gradients evaluated at the iterates converge to zero almost surely, indicating that the algorithm approaches a critical point of the loss function along almost every trajectory.
\begin{rem}\textbf{(Almost sure vs $L_{1}$ convergence)}
As stated in the introduction, it is important to note that the almost sure convergence does not imply $L_1$ convergence. To illustrate this concept, let us consider a sequence of random variables $\{\zeta_{n}\}_{n\ge 1},$ where $\Pro(\zeta_{n}=0)=1-1/n^{2}$ and $\Pro(\zeta_{n}=n^{2})=1/n^{2}.$ According to \textit{the Borel-Cantelli lemma}, it follows that $\lim_{n\rightarrow+\infty}\zeta_{n}=0$ almost surely. However, it can be shown that $\Expect(\zeta_{n})=1$ for all $n>0$ by simple calculations.
\end{rem}
\begin{thm}[\textbf{Asymptotic \( L_1 \)-Convergence}]
\label{thm:l2}
Under Assumptions \ref{ass_1}--\ref{ass_3}, consider the Adam with hyperparameters specified in Subsection \ref{setting} with \( \gamma > 1 \) and \( \delta > 0 \). Then, the gradients of the Adam converge to zero in the \( L_{1} \) sense, i.e.,
\(
\lim_{t \to \infty}\Expect[\|\nabla f(w_t)\|] = 0.
\)
\end{thm}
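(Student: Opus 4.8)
The plan is to derive $L_1$ convergence by combining the almost sure convergence result of Theorem~\ref{thm:almost_sure} with a uniform integrability argument for the sequence $\{\|\nabla f(w_t)\|\}_{t\ge 1}$. Recall the standard fact that if a sequence of nonnegative random variables converges almost surely and is uniformly integrable, then it converges in $L_1$. Since Theorem~\ref{thm:almost_sure} already gives $\|\nabla f(w_t)\|\to 0$ a.s. under the stated hyperparameters, the entire burden is to establish uniform integrability of the gradient-norm sequence. A clean sufficient condition for this would be a uniform $L_1$ or (better) $L_p$ bound for some $p>1$: it suffices to show $\sup_{t}\Expect[\|\nabla f(w_t)\|^2]<\infty$, which by Property~\ref{property_-1} reduces to showing $\sup_t \Expect[f(w_t)-f^*]<\infty$, i.e. a uniform bound on the expected suboptimality.

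First I would set up the one-step descent estimate (this should be exactly Lemma~\ref{sufficient:decrease} referenced earlier in the excerpt, used via $L$-smoothness and the Adam update $w_{t+1}=w_t-\eta_{v_t}\circ m_t$), take expectations, and use the ABC inequality (through Property~\ref{property_-1}) to control the second-moment term $\Expect[\|g_t\|^2\mid\mathscr{F}_{t-1}]$ by $(A+2L_fB)(f(w_t)-f^*)+C$. The goal is an inequality of the form $\Expect[f(w_{t+1})-f^*]\le (1+a_t)\Expect[f(w_t)-f^*] + b_t$ where $\sum_t a_t<\infty$ and $\sum_t b_t<\infty$; the summability of $a_t$ and $b_t$ should follow from the hyperparameter choices $\eta_t=t^{-1/2-\delta}$ with $\delta>0$ and $\beta_{2,t}=1-t^{-\gamma}$ with $\gamma>1$, since the effective step-size-squared terms and the adaptive-learning-rate fluctuation terms will carry enough negative powers of $t$ to be summable (this is essentially the same Robbins–Siegmund-type bookkeeping that underlies the a.s.\ convergence proof). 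Then by the Robbins–Siegmund lemma (or simply by iterating and using $\prod_t(1+a_t)<\infty$), $\Expect[f(w_t)-f^*]$ stays bounded uniformly in $t$, hence so does $\Expect[\|\nabla f(w_t)\|^2]$ by Property~\ref{property_-1}.

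Having the uniform second-moment bound, I would conclude: for any $\varepsilon>0$ and any $K>0$, $\Expect[\|\nabla f(w_t)\|\,\mathbf{1}\{\|\nabla f(w_t)\|>K\}]\le \frac{1}{K}\Expect[\|\nabla f(w_t)\|^2]\le \frac{M}{K}$ for a constant $M$ independent of $t$, which makes the family uniformly integrable; combined with $\|\nabla f(w_t)\|\to 0$ a.s., Vitali's convergence theorem gives $\Expect[\|\nabla f(w_t)\|]\to 0$. Alternatively, and perhaps more self-contained, I would split $\Expect[\|\nabla f(w_t)\|] = \Expect[\|\nabla f(w_t)\|\,\mathbf{1}\{\|\nabla f(w_t)\|\le K\}] + \Expect[\|\nabla f(w_t)\|\,\mathbf{1}\{\|\nabla f(w_t)\|> K\}]$; bound the second term by $M/K$ uniformly, and for the first term note it is at most $K$ and tends to $0$ by a.s.\ convergence plus dominated convergence (the indicator-truncated variable is bounded by $K$), so $\limsup_t \Expect[\|\nabla f(w_t)\|]\le M/K$ for every $K$, hence the limit is $0$.

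The main obstacle I anticipate is the uniform-in-$t$ bound on $\Expect[f(w_t)-f^*]$. The descent lemma for Adam is considerably messier than for SGD because the adaptive preconditioner $\eta_{v_t}=\eta_t/(\sqrt{v_t}+\mu)$ is correlated with $g_t$, so the cross term $\langle\nabla f(w_t), \eta_{v_t}\circ m_t\rangle$ is not simply $-\eta_t\|\nabla f(w_t)\|^2$ up to lower-order terms; one must control the discrepancy between $\eta_{v_t}$ and an $\mathscr{F}_{t-1}$-measurable surrogate (e.g.\ replacing $v_t$ by $\beta_{2,t}v_{t-1}$), which is where the $\beta_{2,t}\to 1$ condition and the $\gamma>1$ choice enter to make the error terms summable, and where the delicate point of avoiding any $\mathrm{poly}(1/\mu)$ dependence lies. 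Since these estimates are presumably already carried out in the proof of Theorem~\ref{thm:almost_sure} (the a.s.\ convergence proof must produce an almost sure finite limit of $f(w_t)-f^*$ via Robbins–Siegmund, which in expectation yields precisely the uniform bound), I expect to be able to invoke them directly rather than redo the computation, so the incremental content of this theorem over Theorem~\ref{thm:almost_sure} is just the uniform integrability upgrade.
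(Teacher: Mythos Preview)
Your overall strategy---almost sure convergence plus uniform integrability implies $L_1$ convergence---is correct and is in fact a cleaner framing than the paper's (which establishes a single dominating function $h=\sup_{k\ge 1}\|\nabla f(w_k)\|$ with $\Expect[h]<\infty$ and then applies dominated convergence). The gap is in how you propose to obtain uniform integrability.

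You plan to show $\sup_t \Expect[f(w_t)-f^*]<\infty$ via a recursion $\Expect[f(w_{t+1})-f^*]\le (1+a_t)\Expect[f(w_t)-f^*]+b_t$ with \emph{deterministic}, summable $a_t,b_t$. This is precisely the step that fails for Adam under the ABC inequality. As the paper explains in Section~\ref{sec_1}, the descent inequality naturally produces a \emph{random} multiplicative factor: after controlling $\text{Term}_{t,4}$ one gets $f(u_{t+1})-(1+C_1\overline{\Delta}_t)f(u_t)\le\cdots$ with $\overline{\Delta}_t=\sum_i\Expect[\Delta_{t,i}\mid\mathscr{F}_{t-1}]$ random (and correlated with $f(u_t)$), so taking expectations does not give the deterministic recursion you want. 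The paper's entire $\Pi_{\Delta,t}$ construction exists to absorb this random factor, and what it yields (Lemma~\ref{lem_g}) is $\sup_t \Expect[\Pi_{\Delta,t}(f(w_t)-f^*)]<\infty$, not $\sup_t \Expect[f(w_t)-f^*]<\infty$. You cannot extract the latter by H\"older because only the \emph{first} moment of $\Pi_{\Delta,t}(f(w_t)-f^*)$ is available, so you would need $\Pi_{\Delta,t}^{-1}\in L^\infty$, which is not established.

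That said, your approach is easily repaired. Instead of targeting $p=2$, aim for any $p\in(1,2)$: from $\|\nabla f(w_t)\|^2\le 2L_f(f(w_t)-f^*)$ and H\"older with exponents $2/p$ and $2/(2-p)$,
\[
\Expect[\|\nabla f(w_t)\|^p]\le (2L_f)^{p/2}\,\Expect\big[\Pi_{\Delta,t}^{-p/(2-p)}\big]^{(2-p)/2}\,\Expect\big[\Pi_{\Delta,t}(f(w_t)-f^*)\big]^{p/2},
\]
which is uniformly bounded by Lemma~\ref{bounded_moment} and Lemma~\ref{lem_g}. This gives uniform integrability and finishes your argument. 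The paper instead establishes $\Expect[\sup_t\|\nabla f(w_t)\|]<\infty$ directly, via a first-hitting-time decomposition and a Doob-type maximal inequality (see the stopping time $\tau_\lambda$ in the proof); your fixed-$t$ moment route avoids that machinery but relies on the same two lemmas.
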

This result establishes convergence in the mean sense, showing that the expected gradient norm approaches zero as the number of iterations increases. It indicates that the convergence of gradient norms across different trajectories is uniform in the \(L_1\) norm of the random variables.

In previous works (\cite{he2023convergence,xiao2024adam}), the assumption that the stochastic gradients are uniformly bounded, i.e., $\|g_{t}\|\le M\ \ \text{a.s.}\ (\forall\ t\ge 1)$, or that the gradients themselves are uniformly bounded, i.e., $\|\nabla f(w_{t})\|\le M\ (\forall\ t\ge 1),$ allows almost sure convergence to directly imply $L_1$ convergence via the \emph{Lebesgue's Dominated Convergence} theorem. However, in our framework, which deals with potentially unbounded stochastic gradients or gradients, proving $L_1$ convergence is much more challenging. We will elaborate on this in the next section.

\section{Framework for Analyzing Adam}

In this section, we present the analytical framework that underpins our convergence analysis for the Adam. Our approach is built upon the insights provided by existing methods, while introducing new techniques to address the limitations of previous analyses and provide a more comprehensive understanding of Adam's behavior under weaker assumptions. Our core innovations are detailed in Section \ref{sec_1}, Section \ref{sec_2}, and Section \ref{sec_3}.

\subsection{Key Properties of Adaptive Learning Rates}
We begin by characterizing the fundamental properties of the adaptive learning rate sequence \(\eta_{v_{t}}\). These properties are critical as they directly influence the behavior of the algorithm and are foundational to our subsequent analysis. By understanding how these properties interact with the algorithm's dynamics, we obtain more insights on the conditions under which Adam converges.
\begin{property}\label{property_0}
Each element \(\eta_{v_{t},i}\) of the sequence \(\{\eta_{v_{t}}\}_{t\ge 1} = \{[\eta_{v_{t},1}, \eta_{v_{t},2}, \ldots, \eta_{v_{t},d}]^{\top}\}_{t\ge 1}\) is monotonically decreasing with respect to \( t \).
\end{property}
This property ensures that the learning rate becomes progressively smaller as the algorithm progresses, which is a crucial factor in the stability and convergence of Adam.
\begin{property}\label{property_1}
Each element \(\eta_{v_{t},i}\) of the sequence \(\{\eta_{v_{t}}\}_{t\ge 1} = \{[\eta_{v_{t},1}, \eta_{v_{t},2}, \ldots, \eta_{v_{t},d}]^{\top}\}_{t\ge 1}\) satisfies the inequality \(t^{\gamma}v_{t,i}\ge \alpha_{1}S_{t,i},\) where we define \(\alpha_{1}:=\min\{1-\alpha_{0},\alpha_{0}\},\) \(S_{t,i}:=v+\sum_{k=1}^{t}g_{k,i}^{2}\) for all \( t\ge 1 \), and \(S_{0,i}:=v.\)
\end{property}

This property highlights the relationship between the accumulated gradient information \(S_{t,i}\) and the adaptive learning rate, ensuring that the latter appropriately scales with the former as iterations proceed.

\begin{remark}\label{S_t}
For the purpose of simplifying the proofs of subsequent theorems, we define two auxiliary parameters: \(\Sigma_{v_{t}} := \sum_{i=1}^{d}v_{t,i}\) and \(S_{t} := \sum_{i=1}^{d}S_{t,i}\). Additionally, for convenience in the subsequent proofs, we define a new initial parameter based on \(S_{0,i}\) as \(\eta_{v_{0},i} = S_{0,i}/\alpha_{1} = v/\alpha_{1}\).
\end{remark}
These definitions of auxiliary parameters help streamline the analysis, making the mathematical expressions more manageable and the proofs more concise.

With the key properties of the adaptive learning rates established, we now turn our attention to analyzing the momentum term, which plays a crucial role in the Adam.

\subsection{Handling the Momentum Term}\label{momentum_term}
To effectively analyze the momentum term in the Adam, we adopt a classical method introduced by \cite{liu2020improved}. The momentum term introduces additional complexity in the analysis due to its recursive nature, which can complicate the convergence proofs. To address this, we construct an auxiliary variable \( u_{t} \) that simplifies the analysis by decoupling the momentum term from the update process. This auxiliary variable is defined as follows:
\begin{align}
u_{t} := \frac{w_{t} - \beta_{1} w_{t-1}}{1 - \beta_{1}} = w_t + \frac{\beta_1}{1-\beta_1}(w_t - w_{t-1}) = w_{t} - \frac{\beta_{1}}{1-\beta_{1}}\eta_{v_{t-1}}\circ m_{t-1}.
\end{align}
The introduction of \( u_{t} \) allows us to handle the momentum term more effectively by transforming the recursive nature of the updates into a more tractable form. Specifically, we can express the relationship between successive iterations of \( u_{t} \) as follows:
\begin{align}\label{u_{t}}
u_{t+1}-u_{t}= -\eta_{v_{t}}\circ g_{t}+\frac{\beta_{1}}{1-\beta_{1}}(\underbrace{\eta_{v_{t-1}}-\eta_{v_{t}}}_{\Delta_{t}})\circ m_{t-1}.
\end{align}
This recursive relation is instrumental in breaking down the complex dependencies introduced by the momentum term, which will facilitate the convergence analysis. Then we establish two key properties that connect the original variable \( w_{t} \) and the auxiliary variable \( u_{t} \). These properties are crucial for bounding the changes in the momentum term and connecting the function values at different points in the iteration process.
\begin{property}\label{property_3}
For any iteration step \( t \), the following inequality holds:
\begin{align*}
m_{t,i}^{2}-m_{t-1,i}^{2}\le -(1-\beta_{1})m_{t-1,i}^{2}+(1-\beta_{1})g_{t,i}^{2} .
\end{align*}
\end{property}
This property establishes a bound on the change in the momentum term, which is critical for ensuring that the momentum does not increase indefinitely during the optimization process. Controlling the momentum in this manner is an important step in proving the convergence.
\begin{property}\label{property_2}
For any iteration step \( t \), the following inequality holds:
\begin{align*}
f(w_{t})&\leq (L_{f}+1)f(u_t) + \frac{(L_{f}+1)\beta_{1}^{2}}{2(1-\beta_1)^{2}} \left\| \eta_{v_{t-1}} \circ m_{t-1} \right\|^{2}.
\end{align*}
\end{property}
This property connects the function values at \( w_{t} \) and \( u_{t} \), which provides a foundation for analyzing the convergence of \( f(w_{t}) \). By establishing this relationship, we can relate the behavior of the original variable \( w_{t} \) to the more manageable auxiliary variable \( u_{t} \), thereby simplifying the overall convergence analysis.

\subsection{Establishing the Approximate Descent Inequality}

In the convergence analysis of stochastic gradient descent (SGD), a fundamental tool is the \emph{approximate descent inequality}, which quantifies the expected decrease in the objective function at each iteration. Specifically, for SGD, the approximate descent inequality is given by:
\begin{align}\label{sgd_descent}
f(w_{t+1})\leq f(w_{t}) - \frac{\eta_{t}}{2} \|\nabla f(w_{t})\|^{2} + \frac{\eta_{t}^{2} L}{2} \mathbb{E}[\|g_{t}\|^{2} \mid \mathscr{F}_{t}]+\eta_{t}\nabla f(w_{t})^{\top}(\nabla f(w_{t})-g_{t}),
\end{align}
where \( \eta_{t} \) is the learning rate, \( L \) is the Lipschitz constant, and \( g_{t} \) is the stochastic gradient.

Motivated by the success of this approach in analyzing SGD, we aim to establish a similar approximate descent inequality for the Adam. The goal is to develop a descent inequality that captures the adaptive nature of Adam's learning rates while maintaining the essential structure seen in the analysis of SGD. 

To this end, we present the following key result, which forms the cornerstone of our convergence analysis for Adam.

\begin{lemma}[\textbf{Approximate Descent Inequality}]\label{sufficient:decrease}
Consider the sequences \(\{w_{t}\}_{t\ge 1}\), \(\{v_{t}\}_{t\ge 1}\), and \(\{u_{t}\}_{t\ge 1}\) generated by Algorithm \ref{Adam} and Eq. \ref{u_{t}}. Under Assumptions \ref{ass_1}--\ref{ass_3}, the following sufficient decrease inequality holds:
\begin{align}\label{adam_16}
\Pi_{\Delta,t}\hat{f}(u_{t+1})-\Pi_{\Delta,t-1}\hat{f}(u_{t})&\le-\frac{1}{2}\Pi_{\Delta,t}\sum_{i=1}^{d}\zeta_{i}(t)\notag+C_{2}\|{\eta_{v_{t-1}}}\circ m_{t-1}\|^{2}+\sum_{i=1}^{d}\Delta_{t,i}|\nabla_{i} f(u_{t})m_{t-1,i}|\\&+(L_{f}+1)\sum_{i=1}^{d}\eta_{v_{t},i}^{2}g_{t,i}^{2}+\Pi_{\Delta,t}M_{t}.
\end{align}
Here,
\begin{align}\label{Delta}
&\hat{f}(u_{t}):=f(u_{t})-f^*+C\sum_{i=1}^{d}\eta_{v_{t-1},i},\notag\ \ \zeta_{i}(t):=\eta_{v_{t},i}(\nabla_{i}f(w_{t}))^{2},\notag\\
&\Pi_{\Delta,t}:=\prod_{k=1}^{t}\left(1+\left(\frac{D_1}{1-\sqrt{\beta_{1}}}+1\right)\overline{\Delta}_{\sqrt{\beta_{1}},k}\right)^{-1}\ (t\ge 1),\ \Pi_{\Delta,0}:=1,\notag\ \   \\&\overline{\Delta}_{\sqrt{\beta_{1}},k} := \sum_{i=1}^{d}\Expect\left[\sum_{t=k}^{+\infty} (\sqrt{\beta_{1}})^{t-k} \Delta_{t,i}\bigg|\mathscr{F}_{k-1}\right],\notag\\
&M_{t}:=M_{t,1}+M_{t,2}+M_{t,3}.
\end{align}
Constants \( D_{1} \) is defined in Lemma \ref{property_3.510}; \( M_{t,1} \) is defined in Eq. \ref{adam_0}; \( M_{t,2} \) and \( M_{t,3} \) are defined in Eq. \ref{adam_2}.
\end{lemma}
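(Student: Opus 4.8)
The plan is to derive the inequality in Eq.~\eqref{adam_16} by mimicking the SGD approximate descent inequality \eqref{sgd_descent}, but applied to the auxiliary variable $u_t$ rather than $w_t$, and then absorbing the extra error terms into the product factor $\Pi_{\Delta,t}$ via a ``telescoping with multiplicative correction'' trick. First I would apply $L_f$-smoothness (Assumption~\ref{ass_2}) to $f$ at the points $u_{t+1}$ and $u_t$, using the recursion \eqref{u_{t}} for $u_{t+1}-u_t = -\eta_{v_t}\circ g_t + \frac{\beta_1}{1-\beta_1}\Delta_t\circ m_{t-1}$. This produces a first-order term $\nabla f(u_t)^\top(u_{t+1}-u_t)$ and a second-order term $\frac{L_f}{2}\|u_{t+1}-u_t\|^2$. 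The first-order term splits into $-\sum_i \eta_{v_t,i}\nabla_i f(u_t) g_{t,i}$ plus a $\Delta_t$-dependent cross term bounded by $\frac{\beta_1}{1-\beta_1}\sum_i \Delta_{t,i}|\nabla_i f(u_t) m_{t-1,i}|$; the second-order term is controlled by $\|\eta_{v_t}\circ g_t\|^2$ and $\|\Delta_t \circ m_{t-1}\|^2 \le \|\eta_{v_{t-1}}\circ m_{t-1}\|^2$ (since $\Delta_{t,i}\le \eta_{v_{t-1},i}$ by Property~\ref{property_0}).

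Next I would rewrite the dominant first-order term $-\sum_i\eta_{v_t,i}\nabla_i f(u_t)g_{t,i}$. The key is that $g_{t,i}$ conditioned on $\mathscr F_{t-1}$ has mean $\nabla_i f(w_t)$, so I'd want $-\sum_i \eta_{v_t,i}\nabla_i f(w_t)^2 = -\sum_i\zeta_i(t)$ as the descent term. But $\eta_{v_t,i}$ depends on $g_t$ (through $v_t$), and the gradient is evaluated at $u_t$, not $w_t$; both mismatches must be handled. The standard device is to replace $\eta_{v_t,i}$ by the $\mathscr F_{t-1}$-measurable surrogate $\eta_{v_{t-1},i}$ (or a similarly conditioned quantity) at the cost of an error term proportional to $\eta_{v_{t},i}^2 g_{t,i}^2$ type quantities, and to replace $\nabla f(u_t)$ by $\nabla f(w_t)$ using smoothness, $\|\nabla f(u_t)-\nabla f(w_t)\|\le L_f\|u_t-w_t\| = L_f\frac{\beta_1}{1-\beta_1}\|\eta_{v_{t-1}}\circ m_{t-1}\|$, again feeding into the $C_2\|\eta_{v_{t-1}}\circ m_{t-1}\|^2$ term. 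I would collect all the genuinely zero-mean pieces (conditioned on $\mathscr F_{t-1}$, or on $\mathscr F_t$ where appropriate) into the martingale-difference remainder $M_t = M_{t,1}+M_{t,2}+M_{t,3}$, and use Property~\ref{property_2} to pass between $f(w_t)$ and $f(u_t)$ where the ABC constant $C$ and the $\eta_{v_{t-1},i}$ drift term in $\hat f$ enter. The additive constant $C\sum_i \eta_{v_{t-1},i}$ inside $\hat f(u_t)$ is there precisely to absorb the $C$ from the ABC inequality (Assumption~\ref{ass_3} / Property~\ref{property_-1}) when bounding $\Expect[\|g_t\|^2\mid\mathscr F_{t-1}]$ and to make the comparison between consecutive $\eta_{v}$ terms telescope.

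The part that needs the most care — and which I expect to be the main obstacle — is handling the $\Delta_t$-dependent error terms without the crutch of bounded gradients. The term $\sum_i \Delta_{t,i}|\nabla_i f(u_t)m_{t-1,i}|$ is not a martingale difference and cannot simply be dropped; it must be controlled by the accumulated $\overline\Delta_{\sqrt{\beta_1},k}$ quantities, and the clever step is to show that after multiplying through by $\Pi_{\Delta,t} = \prod_{k\le t}(1+(\tfrac{D_1}{1-\sqrt{\beta_1}}+1)\overline\Delta_{\sqrt{\beta_1},k})^{-1}$, the telescoped sum $\Pi_{\Delta,t}\hat f(u_{t+1}) - \Pi_{\Delta,t-1}\hat f(u_t)$ picks up exactly a $-(\Pi_{\Delta,t-1}-\Pi_{\Delta,t})\hat f(u_t)$ correction whose magnitude dominates the leftover $\Delta_t$ cross-term (this is where Lemma~\ref{property_3.510} and the constant $D_1$, together with bounding $|\nabla_i f(u_t)|$ in terms of $\hat f(u_t)$ via $L$-smoothness as in Lemma~\ref{loss_bound}, come in). One also has to be careful that $\Pi_{\Delta,t}\in(0,1]$ is well-defined, i.e.\ that $\overline\Delta_{\sqrt{\beta_1},k}$ is finite — this uses that $\sum_t \Delta_{t,i}$ is a telescoping-type series of the monotone decreasing $\eta_{v_{t-1},i}-\eta_{v_t,i}$, and that $\eta_{v,i}$ is bounded. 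Once all error terms are matched to the five terms on the right side of \eqref{adam_16}, the inequality follows by rearrangement. I would organize the write-up as: (i) smoothness expansion of $f$ along $u_t$; (ii) surrogate-learning-rate substitution and the $w_t$-vs-$u_t$ gradient swap, isolating $-\sum_i\zeta_i(t)$; (iii) bounding the $\Delta_t$ cross-terms and defining $\overline\Delta$; (iv) multiplying by $\Pi_{\Delta,t}$, telescoping, and collecting the martingale remainder into $M_t$.
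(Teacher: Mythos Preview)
Your overall architecture---smoothness expansion along $u_t$, surrogate substitution $\eta_{v_t,i}\to\eta_{v_{t-1},i}$, the $u_t\leftrightarrow w_t$ gradient swap, and multiplication by $\Pi_{\Delta,t}$ to telescope---matches the paper. But you have misidentified \emph{which} error term the multiplicative correction $\Pi_{\Delta,t}$ is designed to absorb, and this is not a cosmetic point: it changes what you actually have to prove.

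Look at the right-hand side of \eqref{adam_16}: the momentum cross-term $\sum_i\Delta_{t,i}|\nabla_i f(u_t)m_{t-1,i}|$ is \emph{still there}. It is not absorbed by $\Pi_{\Delta,t}$ in this lemma; it is simply kept and dealt with later (in Lemma~\ref{lem_sum'}). So your plan to bound $|\nabla_i f(u_t)|$ by $\sqrt{\hat f(u_t)}$ via Lemma~\ref{loss_bound} and then cancel against $(\Pi_{\Delta,t-1}-\Pi_{\Delta,t})\hat f(u_t)$ is aimed at the wrong target---and would not telescope cleanly anyway, since it produces $\sqrt{\hat f(u_t)}$, not $\hat f(u_t)$.

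The term that actually forces the introduction of $\Pi_{\Delta,t}$ comes from the surrogate substitution, which you dismissed as costing only ``$\eta_{v_t,i}^2 g_{t,i}^2$ type quantities.'' That is the gap. Replacing $\eta_{v_t,i}$ by $\eta_{v_{t-1},i}$ in $-\sum_i\eta_{v_t,i}\nabla_i f(w_t)g_{t,i}$ leaves the residual $\sum_i\Delta_{t,i}\nabla_i f(w_t)g_{t,i}$. The paper conditions this on $\mathscr F_{t-1}$, applies AM--GM together with Cauchy--Schwarz for conditional expectations, and then invokes the ABC bound (Property~\ref{property_-1}) on $\Expect[\|g_t\|^2\mid\mathscr F_{t-1}]$. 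This produces a term of size $C_1\,\overline\Delta_t\cdot f(w_t)$ with $C_1=\tfrac{1}{2}(A+2L_fB)(L_f+1)$, plus $C\sum_i\Delta_{t,i}$ (which is why the additive $C\sum_i\eta_{v_{t-1},i}$ sits inside $\hat f$), plus martingale pieces $M_{t,2},M_{t,3}$. The function-value term is converted from $f(w_t)$ to $f(u_t)$ via Property~\ref{property_2}, bounded by $C_1\overline\Delta_t\,\hat f(u_t)$, and moved to the left to give $\hat f(u_{t+1})-(1+C_1\overline\Delta_t)\hat f(u_t)\le\cdots$. One then enlarges $1+C_1\overline\Delta_t$ to $1+\big(\tfrac{D_1}{1-\sqrt{\beta_1}}+1\big)\overline\Delta_{\sqrt{\beta_1},t}$ (using $\overline\Delta_t\le\overline\Delta_{\sqrt{\beta_1},t}$) and multiplies through by $\Pi_{\Delta,t}$. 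Lemma~\ref{property_3.510} is not used in this proof; only the constant $D_1$ defined there appears, to match the factor in $\Pi_{\Delta,t}$.
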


This lemma introduces \( \Pi_{\Delta,t-1}\hat{f}(u_{t}) \) as a new Lyapunov function for Adam, which plays a crucial role in our analysis. In Eq. \ref{adam_16}, the term \( -\frac{1}{2}\Pi_{\Delta,t}\sum_{i=1}^{d}\zeta_{i}(t) \) can be interpreted as the descent term, representing the expected decrease in the Lyapunov function. We collectively refer to the 2nd, 3rd, and 4th terms on the right side of the inequality as the quadratic error terms. According to subsequent results (Lemma \ref{lem_sum'}), we can show that the expectation of the summation from \(1\) to \(T\) over \(t\) of these terms is of the same order as \(\mathcal{O}\left(\sum_{t=1}^{T} \sum_{i=1}^{d} \Expect\left[\eta_{v_{t},i}^{2} g_{t,i}^{2}\right]\right).\) The $5$th term, \( \Pi_{\Delta,t}M_{t} \), is a martingale difference sequence with respect to the filtration \( \{\mathscr{F}_{t}\}_{t\ge 1} \), which, due to its zero expectation, can be considered to have no overall impact on the algorithm's iteration process.

This structure closely resembles the approximate descent inequality commonly used in the analysis of SGD. For comparison, the approximate descent inequality for SGD is given by Eq. \ref{sgd_descent}.

We now proceed to provide the main idea of proving Lemma \ref{sufficient:decrease} and highlight the key steps and challenges involved in establishing this result for Adam.

To begin with, we calculate the difference in the loss function values between two consecutive auxiliary variables $\{u_{t}\}_{t \geq 1}$ that we introduced. We obtain the following expression (informal):
\begin{align}\label{informal}
    f(u_{t+1})-f(u_{t})\le & -\underbrace{\sum_{i=1}^{d}\Expect\left[\eta_{v_{t},i}\nabla f(w_{t})g_{t,i}|\mathscr{F}_{t-1}\right]}_{\emph{Term}_{t,1}}+\underbrace{\mathcal{O}\left(\sum_{i=1}^{d}\eta_{v_{t},i}^{2}g_{t,i}^{2}\right)}_{\emph{Term}_{t,2}}\notag\\&+\underbrace{\sum_{i=1}^{d}\Expect\left[\eta_{v_{t},i}\nabla f(w_{t})g_{t,i}|\mathscr{F}_{t-1}\right]-\sum_{i=1}^{d}\eta_{v_{t},i}\nabla f(w_{t})g_{t,i}}_{\emph{Term}_{t,3}}+R_{t}.
\end{align}
It can be observed that the above equation is simply a second-order Taylor expansion of $f(u_{t+1}) - f(u_{t})$ (since an L-smooth function is almost everywhere twice differentiable). $\emph{Term}_{t,1}$ represents the first-order term, which in general serves as the descent term. $\emph{Term}_{t,2}$ is the quadratic error, and $\emph{Term}_{t,3}$ is a martingale difference sequence. The remaining term $R_{t}$ is negligible and can be ignored. In the informal explanation provided in the sketch, these were collectively referred to as remainder terms. For the exact formulation, refer to the detailed proof in Appendix \ref{proof of sufficient}.

While handling the quadratic error term $\emph{Term}_{t,2}$ is relatively straightforward using standard scaling techniques, addressing the first-order term $\emph{Term}_{t,1}$ is more challenging due to the adaptive nature of Adam's learning rates. Specifically, $\eta_{v_{t},i}$ and $g_{t,i}$ are both $\mathscr{F}_{t}$-measurable, which necessitates the introduction of an auxiliary random variable $\tilde{\eta}_{v_{t},i}\in\mathscr{F}_{t-1}$ to facilitate the extraction of the learning rate from the conditional expectation. In this paper, we choose the auxiliary random variable $\eta_{v_{t-1},i}$ to approximate $\eta_{v_{t},i}$. There are also other forms of this approximation, as discussed by \cite{wang2023convergence,wang2024closing}. This allows us to rewrite the first-order term as:
\begin{align*}
-\emph{Term}_{t,1}&=-\sum_{i=1}^{d}\Expect\left[\eta_{v_{t},i}\nabla f(w_{t})g_{t,i}|\mathscr{F}_{t-1}\right]\\
&=-\underbrace{\sum_{i=1}^{d}\Expect\left[\eta_{v_{t-1},i}\nabla f(w_{t})g_{t,i}|\mathscr{F}_{t-1}\right]}_{\emph{Descent-Term}_{t}}+\underbrace{\sum_{i=1}^{d}\Expect\left[(\eta_{v_{t-1},i}-\eta_{v_{t},i})\nabla f(w_{t})g_{t,i}|\mathscr{F}_{t-1}\right]}_{\emph{Term}_{t,4}}.
\end{align*}
The presence of $\emph{Term}_{t,4}$ introduces an additional layer of complexity in the analysis, as it reflects the difference between successive adaptive learning rates. Addressing this extra error term is crucial for establishing robust convergence guarantees under the ABC inequality or affine noise variance conditions. Existing approaches to handling such terms, which often rely on the cancellation of errors through preceding descent terms, fall short in this context. This necessitates a more innovative strategy, which we present in the following section.

\subsubsection{Addressing the Extra Error Term: Our Innovative Approach}\label{sec_1}
The term $\emph{Term}_{t,4}$, introduced by the difference between $\eta_{v_{t-1},i}$ and $\eta_{v_{t},i}$, presents a significant challenge in the convergence analysis of Adam under the ABC inequality or affine noise variance conditions. In existing methods, it is common to attempt to cancel out such error terms by leveraging the preceding descent term $\emph{Descent-Term}_{t}$. However, this approach might not work within the ABC framework. Recent works such as \cite{wang2023convergence,wang2024closing} have shown that, under existing techniques, the best one can achieve is a weakened form of the stochastic gradient assumption, namely the coordinate affine noise variance condition.

To overcome these limitations, we introduce a novel approach to handle $\emph{Term}_{t,4}$. We scale it as follows:
\begin{align*}
\emph{Term}_{t,4}\le & \frac{1}{2}\sum_{i=1}^{d}\Expect\left[\eta_{v_{t-1},i}\nabla f(w_{t})g_{t,i}|\mathscr{F}_{t-1}\right]+ C_{1}f(u_{t}) \cdot \sum_{i=1}^{d}\mathbb{E}[\Delta_{t,i} \mid \mathscr{F}_{t-1}]\notag\\
& + C\sum_{i=1}^{d}\Delta_{t,i} + \underbrace{C\sum_{i=1}^{d}\Big(\mathbb{E}[\Delta_{t,i} \mid \mathscr{F}_{t-1}] - \Delta_{t,i}\Big)}_{\emph{Term}_5} ,
\end{align*}
where $\Delta_{t,i}:=\eta_{v_{t-1},i}-\eta_{v_{t},i},$ and $C_{1}:=\frac{A + 2L_{f}B}{2}(L_{f}+1).$ The key term in the inequality is $C_{1}f(u_{t}) \sum_{i=1}^{d}\mathbb{E}[\Delta_{t,i} \mid \mathscr{F}_{t-1}]$, which cannot be easily canceled out by existing methods. 

To handle this issue, we assign $\overline{\Delta}_{t}:=\sum_{i=1}^{d}\mathbb{E}[\Delta_{t,i} \mid \mathscr{F}_{t-1}],$ and move the term $C_{1}\overline{\Delta}_{t}f(u_{t})$ to the left-hand side of inequality \ref{informal} and combine it with the existing \(f(u_{t})\) term. This leads to a new iteration inequality of the form:
\begin{align}\label{informal1}
    f(u_{t+1})-(1+C_{1}\overline{\Delta}_{t})f(u_{t})&\le -\frac{1}{2}\emph{Descent-Term}_{t}+\emph{M-Term}_{t}+\emph{Term}_{t,2}+\emph{R-Term}_{t}.
\end{align}
In the inequality $\emph{M-Term}_t=\emph{Term}_{t,3}+\emph{Term}_{t,5}$ is a martingale difference sequence and \emph{R-Term}$_t$ is the (neglectable) remainder term by combining all other terms from the inequalities. To express this inequality in a form resembling a Lyapunov function, we introduce an auxiliary product variable:
\[\Pi_{\Delta,t}:=\prod_{k=1}^{t}(1+C_{1}\overline{\Delta}_{k})^{-1} \quad (\forall\ t\ge 2), \quad \Pi_{\Delta,1}:=1. \tag*{(Informal)}\]
Note that \(\Pi_{\Delta,t}\) here is merely a simplified version of the actual \(\Pi_{\Delta,t}\) used in the formal lemma; it is not the version we employ in practice. Multiplying both sides of the inequality by $\Pi_{\Delta,t}$, we obtain the following reformulated inequality:
\begin{align}\label{informal2}
    \Pi_{\Delta,t}f(u_{t+1})-\Pi_{\Delta,t-1}f(u_{t})&\le -\frac{1}{2}\Pi_{\Delta,t}\cdot\emph{Descent-Term}_{t}+\Pi_{\Delta,t}\cdot\emph{M-Term}_{t}\notag\\
    &+\Pi_{\Delta,t}\cdot \emph{Term}_{t,2}+\Pi_{\Delta,t+1}\cdot\emph{R-Term}_{t}.
\end{align}
This reformulation introduces $\Pi_{\Delta,t}$ as a scaling factor, which, along with the original Lyapunov function, captures the impact of $\emph{Term}_{t,4}$. The resulting inequality closely parallels the approximate descent inequality for SGD, with additional terms accounting for Adam's adaptive nature.

The handling of $\emph{Term}_{t,4}$ in our analysis framework is a significant advancement over existing methods. It allows us to establish stronger convergence guarantees under more general conditions.

\subsection{Deriving Sample Complexity and Almost Sure Convergence}\label{sec_2}
After establishing the \emph{Approximate Descent Inequality}, the next step is to derive the sample complexity and almost sure convergence results for Adam. The methodology for obtaining these results largely mirrors the approaches traditionally used in the analysis of SGD. Specifically, the inequality provides a foundation for bounding the expected decrease in the loss function, which can then be used to establish both sample complexity and almost sure Convergence.

However, a key difference in our analysis lies in the introduction of the term $\Pi_{\Delta,t}$ within the \emph{Approximate Descent Inequality}. This term introduces a new layer of complexity not present in the standard SGD analysis. In particular, we are required to bound the $p$-th moment of the reciprocal of this term, i.e., $\mathbb{E}[\Pi_{\Delta,t}^{-p}],\ (p\ge 1)$. Due to the unique structure of $\Pi_{\Delta,t+1}$, determining a bound for this $p$-th moment is a non-trivial task.

To address this challenge, we leverage tools from discrete martingale theory, particularly the \emph{Burkholder's} inequality. It allows us to establish a recursive relationship between the $p$-th moment $\mathbb{E}[\Pi_{\Delta,t}^{-p}]$ and the $p/2$-th moment $\mathbb{E}[\Pi_{\Delta,t}^{-p/2}]$. This recursive structure is crucial as it enables us to iteratively bound the higher moments of $\Pi_{\Delta,t}^{-1}$.

Once the recursive relationship is established, we apply fundamental theorems from measure theory, such as the \emph{Lebesgue's Monotone Convergence} theorem or the \emph{Lebesgue's Dominated Convergence} theorem, to obtain the final bound on the $p$-th moment. 

The detailed process for bounding $\mathbb{E}[\Pi_{\Delta,t}^{-p}]$ can be found in Lemma \ref{lem_important_-1}, Lemma \ref{lem_important_-1.0} and Lemma \ref{bounded_moment}. 

\subsection{ Establishing Asymptotic \( L_1 \) Convergence}\label{sec_3}

Since we have already proved almost sure convergence in Theorem \ref{thm:almost_sure}, it is natural to attempt to prove \( L_1 \) convergence via the \emph{Lebesgue's Dominated Convergence} theorem. To achieve this, we need to find a function \( h \) that is \( \mathscr{F}_{\infty} \)-measurable and satisfies \( \mathbb{E}|h| < +\infty \), and such that for all \( t \geq 1 \), we have \( \|\nabla f(w_{t})\| \leq |h| \). Since for all \( t \) we naturally have \( \|\nabla f(w_{t})\| \leq \sup_{k \geq 1} \|\nabla f(w_{k})\| \), we only need to prove that \( \mathbb{E}[\sup_{k \geq 1} \|\nabla f(w_{k})\|] < +\infty \). 

This task presents a significant challenge because, within our analytical framework, we cannot assume that the gradients are uniformly bounded, which means we cannot directly apply the \emph{Lebesgue's Dominated Convergence} theorem. Instead, we need to utilize advanced techniques from discrete martingale theory, specifically the first hitting time decomposition method, to obtain a bound on this maximal expectation. The detailed process can be found in Appendix \ref{qwerewqq}.

\section{Conclusion}

We have introduced a novel and comprehensive framework for analyzing the convergence properties of Adam. Our frame starts with weak assumptions such as the ABC inequality.
By identifying the key properties of the learning rate, handling the momentum term, and establishing the approximate descent inequality, the frame concludes the sample complexity, almost surely convergence, and asymptotic \( L_1 \) convergence results of Adam. Our techniques overcome the limitations of existing analyses, and show that Adam's convergence guarantees can be aligned with those of SGD, which justifies the applicability of Adam across a wide range of machine learning problems.

\bibliography{iclr2025_conference}
\bibliographystyle{iclr2025_conference}

\appendix
\newpage

\tableofcontents

\newpage


\section{Proofs of the Properties of Adaptive Learning Rates and Momentum Term}
\subsubsection{Proof of Property \ref{property_0}}
\begin{proof}
Due to Algorithm \ref{Adam}, we observe that
\[v_{t+1}=\beta_{2,t+1}v_{t}+(1-\beta_{2,t+1})g_{t+1}^{\circ 2}=\Big(1-\frac{1}{(t+1)^{\gamma}}\Big)v_{t}+\frac{1}{(t+1)^{\gamma}}g_{t+1}^{\circ 2},\ (\forall\ t\ge 1).\]
which means
\begin{align}\label{jh_100}
(t+1)^{\gamma}v_{t+1,i}=\big((t+1)^{\gamma}-1\big)v_{t,i}+ g_{t+1,i}^{2}\ge t^{\gamma}v_{t,i}.
\end{align}
This implies that \(tv_{t,i}\) is monotonically non-decreasing. Subsequently, we have
\begin{align*}
\eta_{v_{t},i}=\frac{\eta_{t}}{\sqrt{v_{t,i}}+\mu}=\frac{\sqrt{t^{\gamma}}\eta_{t}}{\sqrt{t^{\gamma}v_{t,i}}+\sqrt{t^{\gamma}}\mu}=\frac{\frac{1}{t^{\delta-\frac{\gamma-1}{2}}}}{\sqrt{tv_{t,i}}+\sqrt{t^{\gamma}}\mu}.
\end{align*} Because the numerator is monotonically decreasing and greater than 0, while the denominator is monotonically non-increasing and greater than $0$, we deduce the monotonic non-increasing property of \(\eta_{v_{t}}.\)
\end{proof}

\subsubsection{Proof of Property \ref{property_1}}
\begin{proof}
For $v_{1,i}$, we derive the following estimate
$$v_{1,i} =\beta_{2,1}v_{0,i} + (1-\beta_{2,1})g_{1,i}^{2} =(1-\alpha_{0})v+\alpha_{0}g_{1,i}^{2}=(1-\alpha_{0})v+g_{1,i}^{2}-(1-\alpha_{0})g_{1,i}^{2}.$$
It is immediate to find that $\alpha_{1}S_{1,i}\le v_{1,i}\le S_{1,i}.$ For $\forall\ k\ge 2,$ by Equation (\ref{jh_100}), we have $k^{\gamma}v_{k,i}\ge(k-1)^{\gamma}v_{k-1,i}+ g_{k,i}^{2}.$ Then, by summing up the above iterative equations, we obtain $\forall\ t\ge 2,$ $$t^{\gamma}v_{t,i}\ge v_{1,i}+\sum_{k=2}^{t}g_{k,i}^{2}.$$
Combining the estimate for $v_{1,i}$, we have $\forall\ t\ge 2$:
\[t^{\gamma}v_{t,i}\ge (1-\alpha_{0})v+\alpha_{0}g_{1,i}^{2}+\sum_{k=2}^{t}g_{k,i}^{2}.\]
Then \( t^{\gamma}v_{t,i}\ge \alpha_{1}S_{t,i}\), which completes the proof.
\end{proof}
\subsubsection{Proof of Property \ref{property_3}}
\begin{proof}
According to Algorithm \ref{Adam}, we have the following iterative equations
\begin{align*}
m_{t,i}=\beta_{1}m_{t-1,i}+(1-\beta_{1})g_{t,i}.
\end{align*}
We take the square of the 2-norm on both sides, which yields
\begin{align*}
m_{t,i}^{2}&=(\beta_{1}m_{t-1,i}+(1-\beta_{1})g_{t,i})^{2}\notag\\&=\beta_{1}^{2}m_{t-1,i}^{2}+2\beta_{1}(1-\beta_{1})m_{t-1,i}g_{t,i}+(1-\beta_{1})^{2}g_{t,i}^{2}\notag\\&\mathop{\le}^{(a)}\beta_{1}m_{t-1,i}^{2}+(1-\beta_{1})g_{t,i}^{2}.
\end{align*}
In step (a), we used the \emph{AM-GM} inequality, i.e.,
\[2\beta_{1}(1-\beta_{1})m_{t-1,i}g_{t,i}\le \beta_{1}(1-\beta_{1})m_{t-1,i}^{2}+\beta_{1}(1-\beta_{1})g_{t,i}^{2},\] that is,
\begin{align*}
m_{t,i}^{2}-m_{t-1,i}^{2}\le -(1-\beta_{1})m_{t-1,i}^{2}+(1-\beta_{1})g_{t,i}^{2},
\end{align*}
which completes the proof.
\end{proof}

\subsubsection{Proof of Property \ref{property_2}}
\begin{proof}
Due to
\begin{align*}
|f(w_t)-f(u_t)|&=\bigg|\nabla f(u_{t})^{\top}(w_t-u_t)+\frac{L_{f}}{2}\|w_t-u_t\|^{2}\bigg|\le \|\nabla f(u_{t})\|\|w_t-u_t\|+\frac{L_{f}}{2}\|w_t-u_t\|^{2}\\&\le \frac{1}{2}\|\nabla f(u_{t})\|^{2}+\frac{L_{f}+1}{2}\|w_{t}-u_{t}\|^{2}\\&={L} f(u_{t})+\frac{(L_{f}+1)\beta_{1}^{2}}{2(1-\beta_1)^{2}}\|\eta_{v_{t-1}}\circ m_{t-1}\|^{2},
\end{align*}
we have
\begin{align*}
    f(w_{t})\le f(u_{t})+|f(w_t)-f(u_t)|\le (L_{f}+1)f(u_{t})+\frac{(L_{f}+1)\beta_{1}^{2}}{2(1-\beta_1)^{2}}\|\eta_{v_{t-1}}\circ m_{t-1}\|^{2}. \tag*{\qedhere}
\end{align*}
\end{proof}

\section{Lemmas in Probability Theory and Real Analysis}
\begin{lemma}\label{exchange}
If $0<\mu<1$ and $0<\sigma<1$ ($\sigma< \mu$) are two constants, then for any positive sequence $\{\psi_{n}\}$, there is
\begin{equation}\nonumber\begin{aligned}
\sum_{i=1}^{n}\mu^{n-i}\psi_{i}<\sum_{k=1}^{n}\mu^{n-k}\sum_{i=1}^{k}\sigma^{k-i}\psi_{i}\le 1/{\big(1-\omega_{0}\big)}\sum_{i=1}^{n}\mu^{n-i}\psi_{i}, \end{aligned}\end{equation} where $\omega_{0}:=\sigma/\mu.$
\end{lemma}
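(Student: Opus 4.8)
The plan is to prove both inequalities by a single change in the order of summation followed by an elementary geometric-series estimate. First I would interchange the two sums in the middle expression: since for fixed $i$ the index $k$ ranges over $i \le k \le n$, one gets
\[
\sum_{k=1}^{n}\mu^{n-k}\sum_{i=1}^{k}\sigma^{k-i}\psi_{i} = \sum_{i=1}^{n}\psi_{i}\sum_{k=i}^{n}\mu^{n-k}\sigma^{k-i}.
\]
Then, substituting $j = k-i$ in the inner sum and pulling out $\mu^{n-i}$, I would rewrite $\sum_{k=i}^{n}\mu^{n-k}\sigma^{k-i} = \mu^{n-i}\sum_{j=0}^{n-i}\omega_{0}^{\,j}$, where $\omega_{0} = \sigma/\mu$, and here the hypothesis $\sigma < \mu$ is exactly what guarantees $\omega_{0}\in(0,1)$.

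The second step is to insert the elementary two-sided bound for a finite geometric sum of ratio $\omega_{0}\in(0,1)$: for every integer $m \ge 0$,
\[
1 \le \sum_{j=0}^{m}\omega_{0}^{\,j} = \frac{1-\omega_{0}^{\,m+1}}{1-\omega_{0}} < \frac{1}{1-\omega_{0}}.
\]
Applying this with $m = n-i$, multiplying by $\mu^{n-i}\psi_{i} > 0$, and summing over $i = 1,\dots,n$ immediately gives the upper bound $\sum_{k=1}^{n}\mu^{n-k}\sum_{i=1}^{k}\sigma^{k-i}\psi_{i} \le \frac{1}{1-\omega_{0}}\sum_{i=1}^{n}\mu^{n-i}\psi_{i}$ and the (non-strict) lower bound. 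To upgrade the left inequality to a strict one, I would observe that for every $i < n$ the factor $\sum_{j=0}^{n-i}\omega_{0}^{\,j}$ is strictly larger than $1$, and since each $\psi_{i}$ is strictly positive the aggregated sum is strictly larger than $\sum_{i=1}^{n}\mu^{n-i}\psi_{i}$ whenever $n \ge 2$ (for $n = 1$ both sides coincide, so the statement is really for $n\ge 2$, which is the regime in which it is used).

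I do not anticipate a genuine obstacle here: the argument is purely computational. The only points requiring a little care are the bookkeeping of the index ranges after swapping the two sums (ensuring $k$ runs from $i$ to $n$ and $j = k-i$ from $0$ to $n-i$) and the remark that the finiteness of the upper bound $\tfrac{1}{1-\omega_{0}}$ hinges precisely on $\omega_{0} < 1$, i.e.\ on the assumed ordering $\sigma < \mu$. Everything else follows from the closed form of a finite geometric series.
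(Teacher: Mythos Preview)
Your proposal is correct and follows essentially the same route as the paper: swap the order of summation, factor out $\mu^{n-i}$ to expose the finite geometric series $\sum_{j=0}^{n-i}\omega_{0}^{j}$, and then use $1\le \sum_{j=0}^{n-i}\omega_{0}^{j}<\tfrac{1}{1-\omega_{0}}$. Your observation that the strict lower inequality needs $n\ge 2$ is a nice refinement that the paper glosses over.
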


\begin{lemma}\label{loss_bound} 
Suppose that $f(x)$ is differentiable and lower bounded, i.e. $ f^{\ast} = \inf_{x\in \ \mathbb{R}^{d}}f(x) >-\infty$, and $\nabla f(x)$ is Lipschitz continuous with parameter $\mathcal{L} > 0$, then $\forall \ x\in \ \mathbb{R}^{d}$, we have
\begin{align*}
\big\|\nabla f(x)\big\|^{2}\le {2\mathcal{L}}\big(f(x)-f^{*}\big).
\end{align*}
\end{lemma}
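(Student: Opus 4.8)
\textbf{Proof proposal for Lemma \ref{loss_bound}.}

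The plan is to exploit the fundamental consequence of $\mathcal{L}$-smoothness, namely the descent lemma (quadratic upper bound), applied at a cleverly chosen point. First, recall that $\mathcal{L}$-Lipschitz continuity of $\nabla f$ implies, for all $x, y \in \mathbb{R}^d$,
\begin{align*}
f(y) \le f(x) + \nabla f(x)^\top (y - x) + \frac{\mathcal{L}}{2}\|y - x\|^2.
\end{align*}
This is the standard integral estimate $f(y) - f(x) - \nabla f(x)^\top(y-x) = \int_0^1 \big(\nabla f(x + s(y-x)) - \nabla f(x)\big)^\top (y-x)\, ds$, bounded in absolute value by $\int_0^1 \mathcal{L} s \|y-x\|^2\, ds = \frac{\mathcal{L}}{2}\|y-x\|^2$.

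Next I would specialize this inequality to $y = x - \frac{1}{\mathcal{L}}\nabla f(x)$, the one-step gradient descent point from $x$ with stepsize $1/\mathcal{L}$. Substituting gives $y - x = -\frac{1}{\mathcal{L}}\nabla f(x)$, so
\begin{align*}
f(y) \le f(x) - \frac{1}{\mathcal{L}}\|\nabla f(x)\|^2 + \frac{\mathcal{L}}{2}\cdot\frac{1}{\mathcal{L}^2}\|\nabla f(x)\|^2 = f(x) - \frac{1}{2\mathcal{L}}\|\nabla f(x)\|^2.
\end{align*}
Then, since $f$ is lower bounded by $f^*$, we have $f^* \le f(y)$, and combining the two displays yields $f^* \le f(x) - \frac{1}{2\mathcal{L}}\|\nabla f(x)\|^2$, which rearranges to the claimed bound $\|\nabla f(x)\|^2 \le 2\mathcal{L}(f(x) - f^*)$.

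There is no real obstacle here; the only subtlety worth noting is the need to justify the descent lemma from the Lipschitz-gradient hypothesis (the integral argument above), and the choice of the evaluation point $y$, which is exactly the minimizer of the quadratic upper bound in $y$ and hence extracts the tightest possible estimate. Everything else is direct substitution and the lower-boundedness assumption.
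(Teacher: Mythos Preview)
Your proof is correct and follows essentially the same idea as the paper's: both evaluate the $\mathcal{L}$-smooth quadratic upper bound at the gradient-descent step $y=x-\tfrac{1}{\mathcal{L}}\nabla f(x)$ and compare with $f^*$. The only cosmetic difference is that the paper first restricts to the one-dimensional function $g(t)=f\big(x+t\,\tfrac{x'-x}{\|x'-x\|}\big)$ along the gradient direction and carries out the same computation there, whereas you work directly in $\mathbb{R}^d$ via the descent lemma.
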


\begin{lemma}\label{lem_important_-1}
Let \(\{(X_n, \mathscr{F}_n)\}_{n \geq 1}\) be a non-negative adapted process such that \(\sum_{n=1}^{+\infty} X_n = M < +\infty\) almost surely, where \(M\) is a finite constant. Define the partial sum of conditional expectations as \(\Lambda_T := \sum_{n=1}^{T} \mathbb{E}[X_n \mid \mathscr{F}_{n-1}]\). Then the following properties hold.
\begin{itemize}
    \item[(i)] The sequence \(\{\Lambda_T\}_{T \geq 1}\) converges almost surely, i.e., \(\Lambda_T \xrightarrow{a.s.} \Lambda\), where \(\Lambda := \sum_{n=1}^{+\infty} \mathbb{E}[X_n \mid \mathscr{F}_{n-1}]\).
    
    \item[(ii)] For any \(p \geq 1\), the sequence \(\{\Lambda_T\}_{T \geq 1}\) converges in \(L_p\), i.e., \(\lim_{T \to \infty} \mathbb{E}\left[|\Lambda_T - \Lambda|^p\right] = 0\). Meanwhile, the \(p\)-th moment of the limit \(\Lambda\) is bounded by a constant \(C_{\Lambda}(p) > 0,\) where \(C_{\Lambda}(p) = o(p^{\sqrt{p}}).\)

\end{itemize}
\end{lemma}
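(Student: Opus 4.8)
\textbf{Proof plan for Lemma~\ref{lem_important_-1}.} The overall strategy is to treat $\{\Lambda_T\}$ as the increasing partial sums of a non-negative series whose terms are the conditional expectations $\mathbb{E}[X_n\mid\mathscr{F}_{n-1}]$, and to control it by comparison with $\sum_n X_n = M$. For part~(i), monotonicity is immediate: since each $X_n\ge 0$, each conditional expectation is $\ge 0$, so $\{\Lambda_T\}$ is non-decreasing, hence converges (possibly to $+\infty$) almost surely to $\Lambda = \sum_{n=1}^\infty \mathbb{E}[X_n\mid\mathscr{F}_{n-1}]$; the real content is that the limit is a.s.\ \emph{finite}. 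First I would show $\mathbb{E}[\Lambda] = \mathbb{E}\big[\sum_n \mathbb{E}[X_n\mid\mathscr{F}_{n-1}]\big] = \sum_n \mathbb{E}[X_n] = \mathbb{E}[\sum_n X_n] = \mathbb{E}[M]$ by the Monotone Convergence theorem (tower property applied termwise), which already forces $\Lambda<\infty$ a.s.\ whenever $\mathbb{E}[M]<\infty$. Since the statement says $M$ is a finite \emph{constant}, $\mathbb{E}[M]=M<\infty$ and part~(i) follows.

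For part~(ii), the $L_p$ convergence $\mathbb{E}|\Lambda_T-\Lambda|^p\to 0$ will follow from the Dominated Convergence theorem once I establish the key quantitative claim: $\mathbb{E}[\Lambda^p]\le C_\Lambda(p)<\infty$ with $C_\Lambda(p)=o(p^{\sqrt p})$. Indeed $0\le \Lambda_T\le\Lambda$ and $\Lambda_T\to\Lambda$ a.s., so $|\Lambda_T-\Lambda|^p\le \Lambda^p$ which is integrable, giving the convergence. So everything reduces to bounding the moments of $\Lambda$. The natural route is a martingale argument: write $N_T := \sum_{n=1}^T (X_n - \mathbb{E}[X_n\mid\mathscr{F}_{n-1}])$, a martingale with respect to $\{\mathscr{F}_T\}$, so that $\Lambda_T = \sum_{n=1}^T X_n - N_T \le M - N_T$, hence $\Lambda\le M + \sup_T(-N_T)\le M + \sup_T|N_T|$. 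Then $\mathbb{E}[\Lambda^p]\le 2^{p-1}(M^p + \mathbb{E}[\sup_T|N_T|^p])$, and Doob's maximal inequality plus the Burkholder--Davis--Gundy inequality bound $\mathbb{E}[\sup_T|N_T|^p]$ by $C_p\,\mathbb{E}[\langle N\rangle_\infty^{p/2}]$ where the bracket is $\sum_n (X_n-\mathbb{E}[X_n\mid\mathscr{F}_{n-1}])^2$.

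The main obstacle is therefore controlling $\mathbb{E}[\langle N\rangle_\infty^{p/2}]$ and tracking the dependence of the constant on $p$ so as to obtain the $o(p^{\sqrt p})$ rate. I expect one must bound the quadratic variation crudely by $\langle N\rangle_\infty \le \sum_n (X_n - \mathbb{E}[X_n\mid\mathscr{F}_{n-1}])^2$, and then since each term is dominated (after splitting) by $X_n^2 + \mathbb{E}[X_n\mid\mathscr{F}_{n-1}]^2$ and $\sum_n X_n = M$, one gets $\langle N\rangle_\infty \lesssim M\cdot\sup_n X_n + (\text{something like } M\cdot\Lambda)$ — which reintroduces $\Lambda$ and suggests setting up a \emph{recursive} inequality relating $\mathbb{E}[\Lambda^p]$ to $\mathbb{E}[\Lambda^{p/2}]$ (the paper's own text flags exactly this recursion via Burkholder's inequality). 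Solving that recursion $a_p \le (\text{const})\cdot p^{c}\, a_{p/2} + \dots$ by iterating $\log_2 p$ times produces a product of the form $\prod_{j} (Cp/2^j)^{c}$, whose logarithm is $O((\log p)^2)$, i.e.\ a bound of type $p^{O(\log p)}$; refining the constants in the Burkholder/BDG step (where the optimal constant grows like $\sqrt p$) is what pins it down to $o(p^{\sqrt p})$. The remaining parts of part~(ii) — that the constant is uniform in the specific process and that $L_p$ convergence holds — are then routine given this moment bound, so the crux is entirely the careful bookkeeping of the $p$-dependence through the martingale moment inequalities and the resulting recursion.
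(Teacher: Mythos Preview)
Your proposal is correct and follows essentially the same route as the paper: for (i) monotonicity plus $\mathbb{E}[\Lambda_T]\le M$ and Monotone Convergence; for (ii) the decomposition $\Lambda = \sum_n X_n + (\text{martingale})$, Burkholder on the martingale piece, and a recursion relating the $p$-th moment to the $(p/2)$-th moment. The one concrete simplification the paper adds that you were vague about is to \emph{normalize} first, setting $Y_n = X_n/(2M)$ so that $\sum_n Y_n = \tfrac{1}{2}$ and hence $|\mathbb{E}[Y_n\mid\mathscr{F}_{n-1}]-Y_n|\le 1$; this immediately gives the quadratic-variation bound $\sum_n(\mathbb{E}[Y_n\mid\mathscr{F}_{n-1}]-Y_n)^2 \le \sum_n|\mathbb{E}[Y_n\mid\mathscr{F}_{n-1}]-Y_n| \le \tfrac{1}{2}+\Lambda^{Y}$, which is exactly the clean ``$\langle N\rangle_\infty \lesssim \text{const} + \Lambda$'' step you were reaching for and which drives the recursion $M(p)\le \tfrac{1}{2}+2^{p-2}C_p+2^{3p/2-2}C_p\,M(p/2)$.
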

\begin{lemma}\label{esti}
Let \( l \in (0, 1). \) Then, for sufficiently large \( n\in N_{+} \), we have
\[
\sum_{k=0}^{\infty} l^k k^{\sqrt{n}} \sim \frac{\Gamma\left( \sqrt{n} + 1 \right)}{\left( \ln \frac{1}{l} \right)^{\sqrt{n} + 1}}, \quad n \to \infty.
\]
\end{lemma}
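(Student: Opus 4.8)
\textbf{Proof proposal for Lemma \ref{esti}.}

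The plan is to recognize the sum $\sum_{k=0}^{\infty} l^k k^{\sqrt{n}}$ as a polylogarithm-type series and to extract its leading-order asymptotics as the exponent $s := \sqrt{n} \to \infty$. First I would write $l = e^{-a}$ with $a = \ln(1/l) > 0$, so that the sum becomes $\sum_{k=0}^{\infty} k^{s} e^{-ak}$. The natural comparison is with the Gamma integral $\int_{0}^{\infty} x^{s} e^{-ax}\,dx = \Gamma(s+1)/a^{s+1}$, which is exactly the claimed right-hand side; so the entire content of the lemma is that the discrete sum and the continuous integral are asymptotically equivalent as $s \to \infty$.

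The key steps, in order. (1) Substitute $l = e^{-a}$ and set $F(s) := \sum_{k=0}^{\infty} k^{s} e^{-ak}$, $I(s) := \int_0^\infty x^s e^{-ax}\,dx = \Gamma(s+1)/a^{s+1}$; the goal is $F(s)/I(s) \to 1$. (2) Identify where the mass of the summand concentrates: the function $\phi(x) = x^s e^{-ax}$ is unimodal with maximum at $x^\star = s/a$, and its width around the peak is of order $\sqrt{s}/a$ (by a Laplace-method expansion, $\log \phi(x^\star + t) \approx \log\phi(x^\star) - a^2 t^2/(2s)$). (3) Use the integral test / Euler–Maclaurin comparison on the two monotone pieces $k \le x^\star$ and $k \ge x^\star$: since $\phi$ increases then decreases, $\sum_k \phi(k)$ and $\int \phi(x)\,dx$ differ by at most $O(\max_x \phi(x)) = O(\phi(x^\star))$. (4) Show $\phi(x^\star) = o(I(s))$: by Stirling, $I(s) = \Gamma(s+1)/a^{s+1} \sim \sqrt{2\pi s}\, (s/a)^s e^{-s}/a = \sqrt{2\pi s}\,\phi(x^\star)/a$, so $\phi(x^\star)/I(s) = O(1/\sqrt{s}) \to 0$. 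Combining (3) and (4) gives $F(s) = I(s)(1 + O(1/\sqrt{s}))$, hence $F(s) \sim \Gamma(s+1)/a^{s+1}$, which upon resubstituting $s = \sqrt{n}$, $a = \ln(1/l)$ is precisely the statement — and "sufficiently large $n$" is what lets us invoke the asymptotic regime.

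The main obstacle is making step (3)–(4) quantitatively honest: one must be careful that the error in the integral–sum comparison is genuinely controlled by a single evaluation $\phi(x^\star)$ uniformly in $s$, and that Stirling's formula is applied with enough precision to see that this error is lower-order than the main term (a factor $\sqrt{s}$ smaller), rather than comparable to it. A clean way to package this is to split $F(s) = \sum_{k \le \lfloor x^\star\rfloor} \phi(k) + \sum_{k > \lfloor x^\star\rfloor}\phi(k)$ and bound each monotone block above and below by the corresponding integral shifted by one index, so that $\big|F(s) - \int_0^\infty \phi\,\big| \le \phi(x^\star)$ outright; then the remaining work is purely the Stirling estimate of $\Gamma(s+1)$. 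Everything else — the substitution, the location of the peak, the final resubstitution — is routine.
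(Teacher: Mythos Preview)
Your proposal is correct and follows essentially the same integral-comparison strategy as the paper: locate the peak of $\phi(x) = l^{x} x^{\sqrt{n}}$ at $x^{\star} = \sqrt{n}/\ln(1/l)$, split the sum there, and sandwich the monotone pieces by the Gamma integral $\int_{0}^{\infty}\phi(x)\,dx = \Gamma(\sqrt{n}+1)/(\ln(1/l))^{\sqrt{n}+1}$.

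Your execution is in fact sharper than the paper's, and the difference matters. You bound the sum--integral discrepancy by the single peak value $\phi(x^{\star}) = e^{-\sqrt{n}}(x^{\star})^{\sqrt{n}}$ and then verify via Stirling that $\phi(x^{\star})/I(\sqrt{n}) = O(n^{-1/4})\to 0$. The paper instead bounds the initial segment $\sum_{k\le x^{\star}+1} l^{k}k^{\sqrt{n}}$ by $\tfrac{1}{1-l}(x^{\star}+1)^{\sqrt{n}}$, discarding the factor $l^{k}$; the resulting error term $O((x^{\star})^{\sqrt{n}})$ lacks the $e^{-\sqrt{n}}$ and, by Stirling, is actually \emph{larger} than the main term $\Gamma(\sqrt{n}+1)/(\ln(1/l))^{\sqrt{n}+1}\sim \sqrt{2\pi\sqrt{n}}\,e^{-\sqrt{n}}(x^{\star})^{\sqrt{n}}/\ln(1/l)$. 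So your insistence on controlling the discrepancy by $\phi(x^{\star})$ with the exponential factor retained is precisely what is needed to make the asymptotic rigorous.
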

\begin{lemma}\label{lem_important_-1.0}
Let \(\{(X_n, \mathscr{F}_n)\}_{n \geq 1}\) be a non-negative adapted process such that \(\sum_{n=1}^{+\infty} X_n = M < +\infty\) almost surely, where \(M\) is a finite constant. For any $k>1,$ define the partial sum of conditional expectations as \(\Lambda_{k,T} := \sum_{n=k}^{T} \mathbb{E}[X_n \mid \mathscr{F}_{n-k}]\). Then the following properties hold.
\begin{itemize}
    \item[(i)] The sequence \(\{\Lambda_{k,T}\}_{T \geq 1}\) converges almost surely, i.e., \(\Lambda_{k,T} \xrightarrow{a.s.} \Lambda^{(k)}\), where \(\Lambda := \sum_{n=k}^{+\infty} \mathbb{E}[X_n \mid \mathscr{F}_{n-k}]\).
    
    \item[(ii)] For any \(p \geq 1\), the sequence \(\{\Lambda_{k,T}\}_{T \geq 1}\) converges in \(L_p\), i.e., \(\lim_{T \to \infty} \mathbb{E}\left[|\Lambda_{k,T} - \Lambda^{(k)}|^p\right] = 0\). Meanwhile, the \(p\)-th moment of the limit \(\Lambda^{(k)}\) is bounded by a constant \(C_{\Lambda^{(k)}}(p) > 0,\) where \(C_{\Lambda}(p) = o((kp)^{\sqrt{p}}).\)
    \item[(iii)] For any $0<l<1,$ the arbitrary \( p \)-th moment of the random variable \( e^{\Lambda(l)} \) exists, where
   \[
   \Lambda(l) = \sum_{k=1}^{+\infty} \Expect\left[\left(\sum_{t=k}^{+\infty} l^{t-k} X_{t}\right) \bigg|\mathscr{F}_{k-1}\right].
\]
The upper bound of this \( p \)-th moment depends only on \( p \), \( l \), and \( M \). We denote this upper bound by \( C_{e^{\Lambda(l)}}(p, M) \).

    \end{itemize}
    \end{lemma}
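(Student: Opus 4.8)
The statement to prove is Lemma \ref{lem_important_-1.0}, whose parts (i) and (ii) essentially mirror Lemma \ref{lem_important_-1} with the lag $k$ in the conditioning, and whose part (iii) controls arbitrary moments of $e^{\Lambda(l)}$.

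\medskip

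\textbf{Proof proposal.}

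The plan is to first dispatch parts (i) and (ii) by reducing to Lemma \ref{lem_important_-1} through a standard splitting-by-residue-class argument, and then build part (iii) on top of part (ii) combined with Lemma \ref{esti} and a Stirling-type summation.

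For part (i), I would fix $k>1$ and split the index set $\{n\ge k\}$ into the $k$ residue classes modulo $k$. For each residue $r\in\{0,1,\dots,k-1\}$, the subsequence $X^{(r)}_j := X_{r+jk}$ together with the filtration $\mathscr{G}^{(r)}_j := \mathscr{F}_{r+jk}$ forms a non-negative adapted process whose total sum is bounded by $M<\infty$ almost surely, and $\mathbb{E}[X_n\mid\mathscr{F}_{n-k}]$ for $n=r+jk$ is exactly $\mathbb{E}[X^{(r)}_j\mid\mathscr{G}^{(r)}_{j-1}]$. Applying Lemma \ref{lem_important_-1}(i) to each of the $k$ subsequences gives almost sure convergence of each partial sum $\sum_j \mathbb{E}[X^{(r)}_j\mid\mathscr{G}^{(r)}_{j-1}]$; summing the $k$ (finitely many) limits yields that $\Lambda_{k,T}\to\Lambda^{(k)}$ a.s. For part (ii), the same decomposition applies: $\Lambda_{k,T}$ is a finite sum of $k$ sequences each converging in $L_p$ by Lemma \ref{lem_important_-1}(ii), so $\Lambda_{k,T}\to\Lambda^{(k)}$ in $L_p$; the moment bound follows by the triangle inequality in $L_p$, $\|\Lambda^{(k)}\|_p \le \sum_{r=0}^{k-1}\|\Lambda^{(r)}_\infty\|_p \le k\cdot C_\Lambda(p)$, and since $C_\Lambda(p) = o(p^{\sqrt p})$ we get $C_{\Lambda^{(k)}}(p) = o((kp)^{\sqrt p})$ after absorbing the factor $k$ (using $k \le (kp)^{\sqrt p}$ for $p\ge1$).

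For part (iii), write $\Lambda(l) = \sum_{k\ge1}\mathbb{E}\big[\sum_{t\ge k} l^{t-k}X_t\,\big|\,\mathscr{F}_{k-1}\big]$. Interchanging the order of summation (Tonelli, all terms non-negative), the inner double sum equals $\sum_{t\ge1}X_t\sum_{k=1}^{t}l^{t-k} \le \frac{1}{1-l}\sum_{t\ge1}X_t \le \frac{M}{1-l}$ pathwise — \emph{but} this crude bound only controls $\Lambda(l)$ when $M$ is a deterministic constant, which it is here; so in fact $\Lambda(l)\le M/(1-l)$ almost surely and $e^{\Lambda(l)}$ has all moments trivially, bounded by $e^{pM/(1-l)}$. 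If instead one wants the $p$-th moment bound to degrade gracefully and not rely so bluntly on the deterministic-$M$ assumption (matching the flavor of Lemma \ref{lem_important_-1}(ii)), I would expand $e^{\Lambda(l)} = \sum_{m\ge0}\Lambda(l)^m/m!$, take expectations, and use $\mathbb{E}[\Lambda(l)^m] \le C_{\widetilde\Lambda}(m)$ where $\widetilde\Lambda := \sum_{k\ge1}\mathbb{E}[\widetilde X_k\mid\mathscr{F}_{k-1}]$ with $\widetilde X_k := \sum_{t\ge k}l^{t-k}X_t$; one checks $\sum_k \widetilde X_k \le M/(1-l)$ a.s., so Lemma \ref{lem_important_-1}(ii) applies to $\{\widetilde X_k\}$ and gives $\mathbb{E}[\Lambda(l)^m] \le C_\Lambda(m) = o(m^{\sqrt m})$. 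Then $\mathbb{E}[e^{p\Lambda(l)}] = \sum_m p^m\,\mathbb{E}[\Lambda(l)^m]/m! \le \sum_m p^m\, o(m^{\sqrt m})/m!$, and since $m^{\sqrt m} = e^{\sqrt m\ln m} = o(m!)$ (indeed $\ln(m!)\sim m\ln m \gg \sqrt m\ln m$), this series converges for every $p$, with sum depending only on $p$, $l$, and $M$; call it $C_{e^{\Lambda(l)}}(p,M)$.

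\medskip

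The main obstacle I anticipate is part (iii): one must be careful that the summation interchange producing $\sum_t X_t\sum_{k\le t}l^{t-k}$ is legitimate (handled by Tonelli since everything is non-negative) and, more delicately, that the application of Lemma \ref{lem_important_-1}(ii) to the auxiliary process $\{\widetilde X_k\}$ is valid — this requires verifying that $\{\widetilde X_k\}$ is indeed $\{\mathscr{F}_k\}$-adapted (it is, since $\widetilde X_k$ depends only on $X_t$ for $t\ge k$... wait, this needs care: $\widetilde X_k$ involves future $X_t$, so it is $\mathscr{F}_\infty$-measurable but \emph{not} $\mathscr{F}_k$-measurable). The correct fix, and the genuinely technical point, is that we do not need $\widetilde X_k$ adapted — we only need $\mathbb{E}[\Lambda(l)^m]$ bounded, and $\Lambda(l)$ as defined is already $\mathscr{F}_\infty$-measurable with $\Lambda(l)\le M/(1-l)$ a.s. pathwise; hence the expansion of $e^{p\Lambda(l)}$ and termwise bounding via the deterministic envelope $M/(1-l)$ suffices, and the Lemma \ref{lem_important_-1} machinery is only needed for the sharper (non-deterministic-$M$) generalization. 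I would present the clean deterministic-envelope argument as the main proof and remark on the sharper route.
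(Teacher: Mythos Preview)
Your treatment of parts (i) and (ii) via splitting into residue classes modulo $k$ is the same idea the paper uses, and is correct in spirit. One caveat: the triangle inequality in $L_p$ gives $\|\Lambda^{(k)}\|_p \le k\,\|\Lambda\|_p$, hence $\mathbb{E}[(\Lambda^{(k)})^p] \le k^p\, C_\Lambda(p) = o(k^p\, p^{\sqrt p})$, which is \emph{not} $o((kp)^{\sqrt p}) = o(k^{\sqrt p}p^{\sqrt p})$ for $p>1$. The paper obtains the sharper $k$-dependence by redoing the Burkholder recursion directly on the $k$ residue-class martingales rather than invoking Lemma~\ref{lem_important_-1} as a black box; this sharper growth in $k$ is precisely what makes their part (iii) argument go through.

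Your part (iii), however, has a genuine gap. The claim ``$\Lambda(l)\le M/(1-l)$ almost surely'' is false: the Tonelli interchange you perform computes $\sum_k\sum_{t\ge k}l^{t-k}X_t\le M/(1-l)$, but $\Lambda(l)$ is $\sum_k\mathbb{E}[\,\cdot\mid\mathscr{F}_{k-1}]$, and conditional expectations of non-negative terms can be much larger pathwise than the terms themselves. A two-point counterexample: $X_1=B$, $X_2=1-B$ with $B$ Bernoulli$(1/2)$ and $\mathscr{F}_0$ trivial gives $\sum X_n=1=M$ but $\Lambda(l)=\tfrac12+\tfrac{l}{2}+(1-B)$, which equals $\tfrac32+\tfrac{l}{2}>1/(1-l)$ on $\{B=0\}$ for small $l$. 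So the ``deterministic envelope'' argument collapses, and you correctly diagnose that the alternative route via $\widetilde X_k$ fails for lack of adaptedness.

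The missing idea is the decomposition the paper uses: swap the order of summation \emph{inside the conditional expectation} to write
\[
\Lambda(l)=\sum_{k\ge 1}\sum_{t\ge 0}l^t\,\mathbb{E}[X_{k+t}\mid\mathscr{F}_{k-1}]=\sum_{t\ge 0}l^t\,\Lambda^{(t+1)},
\]
expressing $\Lambda(l)$ as a geometric combination of the lag-$t$ objects from part (ii). Then H\"older gives $\Lambda(l)^n\le (1-l)^{-(n-1)}\sum_t l^t(\Lambda^{(t+1)})^n$, part (ii) bounds $\mathbb{E}[(\Lambda^{(t)})^n]$ by something growing like $t^{\sqrt n}$ in $t$, Lemma~\ref{esti} handles $\sum_t l^t t^{\sqrt n}\sim\Gamma(\sqrt n+1)/(\ln\tfrac1l)^{\sqrt n+1}$, and a Stirling comparison shows $\sum_n p^n\cdot\Gamma(\sqrt n+1)/n!$ converges. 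This is also why the paper cares about the precise $(kp)^{\sqrt p}$ rate in (ii): the sub-exponential growth in $k$ is what makes $\sum_t l^t\,\mathbb{E}[(\Lambda^{(t)})^n]$ summable with the right $n$-dependence.
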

\subsection{Proofs of These Lemmas}
\subsubsection{Proof of Lemma~\ref{exchange}}
\begin{proof}
The proof of this lemma is through identities. We assume $\mu>\sigma$ (the case  $\mu<\sigma$ is the similar), and let $\omega_{0}=\log_{\mu}\sigma>1$. Then we derive
\begin{equation}\nonumber\begin{aligned}
&\sum_{k=1}^{n}\mu^{n-k}\sum_{i=1}^{k}\sigma^{k-i}\psi_{i}=\sum_{k=1}^{n}\sum_{i=1}^{k}\mu^{n-k}\sigma^{k-i}\psi_{i}=\sum_{i=1}^{n}\sum_{k=i}^{n}\mu^{n-k}\sigma^{k-i}\psi_{i}=\sum_{i=1}^{n}\left(\sum_{k=i}^{n}\left(\frac{\sigma}{\mu}\right)^{k-i}\right)
\mu^{n-i}\psi_{i},
\end{aligned}\end{equation}
 where $\omega_{0}=\sigma/\mu.$ Then combining $1<\sum_{k=i}^{n}\left(\frac{\sigma}{\mu}\right)^{k-i}<\frac{1}{1-\omega_{0}}$ we get the result.
 \end{proof}
\subsubsection{Proof of Lemma \ref{loss_bound}}
\begin{proof}
	{For $\forall x\in \mathbb{R}^{d}$,	define the function
	\begin{equation}\nonumber
	\begin{aligned}
	g(t)=f\bigg(x+t\frac{x'-x}{\|x'-x\|}\bigg),
	\end{aligned}
	\end{equation}where $x'$ is a constant point such that   $x'-x$ is parallel to $\nabla f(x)$. By taking the derivative, we obtain
	\begin{equation}\label{qcxzd}
	\begin{aligned}
	g'(t)=\nabla_{x+t\frac{x'-x}{\|x'-x\|}}f\bigg(x+t\frac{x'-x}{\|x'-x\|}\bigg)^{\top}\frac{x'-x}{\|x'-x\|}.
	\end{aligned}
	\end{equation}Through the Lipschitz condition of $\nabla f(x)$, we get $\forall t_{1}, \ t_{2}$
	\begin{equation}\nonumber
	\begin{aligned}
	&\big|g'(t_{1})-g'(t_{2})\big|=\Bigg|\Bigg(\nabla_{x+t\frac{x'-x}{\|x'-x\|}}f\bigg(x+t_{1}\frac{x'-x}{\|x'-x\|}\bigg)-\nabla_{x+t\frac{x'-x}{\|x'-x\|}}f\bigg(x+t_{2}\frac{x'-x}{\|x'-x\|}\bigg)\Bigg)^{\top}\frac{x'-x}{\|x'-x\|}\\&\le\Bigg\|\nabla_{x+t\frac{x'-x}{\|x'-x\|}}f\bigg(x+t_{1}\frac{x'-x}{\|x'-x\|}\bigg)-\nabla_{x+t\frac{x'-x}{\|x'-x\|}}f\bigg(x+t_{2}\frac{x'-x}{\|x'-x\|}\bigg)\Bigg\|\bigg\|\frac{x'-x}{\|x'-x\|}\bigg\|\le \mathcal{L} |t_{1}-t_{2}|.
	\end{aligned}
	\end{equation}
 This indicates that $g'(t)$ satisfies the Lipschitz condition as well. Then $\inf_{t\in \mathbb{R}}g(t)\geq\inf_{x\in\mathbb{R}^{d}}f(x)>-\infty$. Let $g^{*}=\inf_{x\in{\mathbb{R}}}g(x)$. Subsequently, $\forall \ t_{0}\in \ \mathbb{R},$
	\begin{equation}\label{qwcdfs}
	\begin{aligned}
	g(0)-g^{*}\geq g(0)-g(t_{0}).
	\end{aligned}
	\end{equation}By using the \emph{Newton-Leibniz's} formula, 
	\begin{equation}\nonumber
	\begin{aligned}
	g(0)-g(t_{0})=\int_{t_{0}}^{0}g'(\alpha)d\alpha=\int_{t_{0}}^{0}\big(g'(\alpha)-g'(0)\big)d\alpha+\int_{t_{0}}^{0}g'(0)d\alpha.
	\end{aligned}
	\end{equation}Through the Lipschitz condition of $g'$, we get that
	\begin{equation}\nonumber
	\begin{aligned}
	g(0)-g(t_{0})\geq\int_{t_{0}}^{0}-\mathcal{L}|\alpha-0|d\alpha+\int_{t_{0}}^{0}g'(0)d\alpha=\frac{1}{2\mathcal{L}}\big(g'(0)\big)^{2}.
	\end{aligned}
	\end{equation}
{	Then we take a special value of $t_{0}$. Let $t_{0}=-g'(0)/\mathcal{L}$. We obtain}
	\begin{equation}\label{8unii}
	\begin{aligned}
	&g(0)-g(t_{0})\geq-\int_{t_{0}}^{0}\mathcal{L}|\alpha|d\alpha+\int_{t_{0}}^{0}g(0)dt=-\frac{\mathcal{L}}{2}(0-t_{0})^{2}+g'(0)(-t_{0})\\&=-\frac{1}{2\mathcal{L}}\big(g'(0)\big)^{2}+\frac{1}{\mathcal{L}}\big(g'(0)\big)^{2}=\frac{1}{2\mathcal{L}}\big(g'(0)\big)^{2}.
	\end{aligned}
	\end{equation}Substituting Eq. \eqref{8unii} into Eq. \eqref{qwcdfs}, we have   
	\begin{equation}\nonumber
	\begin{aligned}
	g(0)-g^{*}\geq\frac{1}{2\mathcal{L}}\big(g'(0)\big)^{2}.
	\end{aligned}
	\end{equation}Due to $g^{*}\geq f^{*}$ and $\big(g'(0)\big)^{2}=\|\nabla f(x)\|^{2}$, it follows that
	\[
	\big\|\nabla f(x)\big\|^{2}\le 2\mathcal{L}\big(f(x)-f^{*}\big). \tag*{\qedhere}
	\]
}
\end{proof}
\subsubsection{Proof of Lemma \ref{lem_important_-1}}
\begin{proof}
(i) Consider the non-negative adapted process \(\{X_n, \mathscr{F}_n\}_{n \geq 1}\) and define the partial sum of conditional expectations as \(\Lambda_T := \sum_{n=1}^{T} \mathbb{E}[X_n \mid \mathscr{F}_{n-1}]\). 

First, we compute the expectation of \(\Lambda_T\)
\[
\mathbb{E}[\Lambda_T] = \mathbb{E}\left[\sum_{n=1}^{T} \mathbb{E}[X_n \mid \mathscr{F}_{n-1}]\right] = \sum_{n=1}^{T} \mathbb{E}[X_n] \leq M.
\]
Since \(X_n\) are non-negative, we know that \(\Lambda_T\) is a non-decreasing sequence. Because \(\Expect(\Lambda_T)\ (\forall\ T\ge 1)\) is also bounded by $M$, we apply the \emph{Lebesgue's Monotone Convergence} theorem. Thus, \(\Lambda_T\) converges almost surely to a limit \(\Lambda\):
\[
\Lambda := \lim_{T \to \infty} \Lambda_T = \sum_{n=1}^{\infty} \mathbb{E}[X_n \mid \mathscr{F}_{n-1}] \quad \text{a.s.}
\]
This concludes that the sequence of conditional expectation sums converges almost surely.

(ii) We begin by normalizing \(X_{n}\) by considering the expression \(Y_{n}= \frac{X_{n}}{2M}\). According to the \emph{Lebesgue's Monotone Convergence} theorem, we only need to prove that $$\forall\ p \ge 1,\ \ \Expect\left[\sum_{n=1}^{\infty}\Expect[Y_{n}|\mathscr{F}_{n-1}]\right]^{p}:=M(p)<+\infty.$$
Next, we proceed with the calculation, and we obtain that $\forall\ p\ge 2,$ there is
\begin{align}\label{iteration}
M(p)&=\Expect\left[\sum_{n=1}^{\infty}\Expect[Y_{n}|\mathscr{F}_{n-1}]\right]^{p}= \Expect\left[\sum_{n=1}^{\infty}Y_{n}+\sum_{n=1}^{\infty}(\Expect[Y_{n}|\mathscr{F}_{n-1}]-Y_{n})\right]^{p}\notag\\&\mathop{\le}^{(a)} \Expect\left[\frac{1}{2}+\sum_{n=1}^{\infty}(\Expect[Y_{n}|\mathscr{F}_{n-1}]-Y_{n})\right]^{p}  \mathop{\le}^{(b)}2^{p-1}\left(\frac{1}{2^p}+\Expect\left[\sum_{n=1}^{\infty}(\Expect[Y_{n}|\mathscr{F}_{n-1}]-Y_{n})\right]^{p}\right)\notag\\&\mathop{\le}^{(c)}\frac{1}{2}+2^{p-1}C_{p}\Expect\left[\sum_{n=1}^{\infty}|\Expect[Y_{n}|\mathscr{F}_{n-1}]-Y_{n}|^{2}\right]^{p/2}\mathop{\le}^{(d)}\frac{1}{2}+2^{p-1}C_{p}\Expect\left[\sum_{n=1}^{\infty}|\Expect[Y_{n}|\mathscr{F}_{n-1}]-Y_{n}|\right]^{p/2}\notag\\&\mathop{\le}^{(f)}\frac{1}{2}+2^{p-2}C_{p}+2^{\frac{3}{2}p-2}C_{p}\Expect\left[\sum_{n=1}^{\infty}\Expect[Y_{n}|\mathscr{F}_{n-1}]\right]^{p/2}\notag\\&=\frac{1}{2}+2^{p-2}C_{p}+2^{\frac{3}{2}p-2}C_{p}M(p/2).
\end{align}
In the above derivation, Inequality $(a)$ requires noting that $\sum_{n=1}^{+\infty}Y_{n}=\frac{1}{2}.$ Inequality $(b)$ uses the \emph{AM-GM} inequality, specifically,
\[
\bigg(\frac{a+b}{2}\bigg)^{p} \leq \frac{a^{p} + b^{p}}{2}.
\]
Inequality $(c)$ involves using \emph{Burkholder's} inequality,\footnote{\textbf{\emph{Burkholder's} inequality}:
For any martingale \((M_n, \mathscr{F}_n)\) with \(M_0 = 0\) almost surely, and for any \(1 \leq p < \infty\), there exist constants \(c_p > 0\) and \(C_p > 0\) depending only on \(p\) such that:
\[c_p \, \mathbb{E}[(S(M))^p] \leq \mathbb{E}[(M^*)^p] \leq C_p \, \mathbb{E}[(S(M))^p],\]
where \(M^* = \sup_{n \geq 0} |M_n|\) and \(S(M) = \left( \sum_{i \geq 1} (M_i - M_{i-1})^2 \right)^{1/2}\).} where $C_{p}$ is a constant depending only on $p$, and its order with respect to $p$ is $\mathcal{O}(p)$. Inequality $(d)$ requires noting that $$|\Expect[Y_{n}|\mathscr{F}_{n-1}]-Y_{n}|^{2} \leq |\Expect[Y_{n}|\mathscr{F}_{n-1}]-Y_{n}|.$$
By repeatedly using Equation (\ref{iteration}) and using the fact that $C_{p} = \mathcal{O}(p)$, we obtain the following estimate
\[
M(p) = o(p^{\sqrt{p}}),
\] that is,
    \[\Expect[\Lambda^{p}]=o((2M)^{p}\cdot p^{\sqrt{p}}). \tag*{\qedhere}\]

\end{proof}

\subsubsection{Proof of Lemma \ref{esti}}
\begin{proof}
    Consider the function
\[
f(x) = l^x x^{\sqrt{n}},
\]
with its derivative
\[
f'(x) = l^x x^{\sqrt{n} - 1} (\ln l \cdot x + \sqrt{n}).
\]
We observe that \( f \) is decreasing for \( x > \frac{\sqrt{n}}{\ln \frac{1}{l}} \). Therefore, we have the following estimate
\[
0 \leq \sum_{0 \leq k \leq \frac{\sqrt{n}}{\ln \frac{1}{l}} + 1} l^k k^{\sqrt{n}} \leq \left( \frac{\sqrt{n}}{\ln \frac{1}{l}} + 1 \right)^{\sqrt{n}} \sum_{k=0}^{\infty} l^k = \frac{1}{1 - l} \left( \frac{\sqrt{n}}{\ln \frac{1}{l}} + 1 \right)^{\sqrt{n}} = O\left( \frac{\sqrt{n}}{\ln \frac{1}{l}} \right)^{\sqrt{n}}.
\]
On the other hand, we can bound the remainder as follows:
\[
\begin{aligned}
\sum_{k > \frac{\sqrt{n}}{\ln \frac{1}{l}} + 1} l^k k^{\sqrt{n}} &\geq \sum_{k > \frac{\sqrt{n}}{\ln \frac{1}{l}} + 1} \int_{k}^{k+1} l^x x^{\sqrt{n}} \, \mathrm{d}x \\
&= \sum_{k=0}^{\infty} \int_{k}^{k+1} l^x x^{\sqrt{n}} \, \mathrm{d}x - \sum_{0 \leq k \leq \frac{\sqrt{n}}{\ln \frac{1}{l}} + 1} \int_{k}^{k+1} l^x x^{\sqrt{n}} \, \mathrm{d}x \\
&\geq \int_{0}^{\infty} l^x x^{\sqrt{n}} \, \mathrm{d}x - \left( \frac{\sqrt{n}}{\ln \frac{1}{l}} + 2 \right)^{\sqrt{n}} \sum_{k=0}^{\infty} \int_{k}^{k+1} l^x \, \mathrm{d}x \\
&= \frac{\Gamma\left( \sqrt{n} + 1 \right)}{\left( \ln \frac{1}{l} \right)^{\sqrt{n} + 1}} + O\left( \frac{\sqrt{n}}{\ln \frac{1}{l}} \right)^{\sqrt{n}}.
\end{aligned}
\]
Similarly, we have the upper bound
\[
\begin{aligned}
\sum_{k > \frac{\sqrt{n}}{\ln \frac{1}{l}} + 1} l^k k^{\sqrt{n}} &\leq \sum_{k > \frac{\sqrt{n}}{\ln \frac{1}{l}} + 1} \int_{k-1}^{k} l^x x^{\sqrt{n}} \, \mathrm{d}x \\
&= \sum_{k=1}^{\infty} \int_{k-1}^{k} l^x x^{\sqrt{n}} \, \mathrm{d}x - \sum_{1 \leq k \leq \frac{\sqrt{n}}{\ln \frac{1}{l}} + 1} \int_{k-1}^{k} l^x x^{\sqrt{n}} \, \mathrm{d}x \\
&\leq \int_{0}^{\infty} l^x x^{\sqrt{n}} \, \mathrm{d}x = \frac{\Gamma\left( \sqrt{n} + 1 \right)}{\left( \ln \frac{1}{l} \right)^{\sqrt{n} + 1}}.
\end{aligned}
\]
Combining the estimates above, we have
\[
\sum_{k=0}^{\infty} l^k k^{\sqrt{n}} \sim \frac{\Gamma\left( \sqrt{n} + 1 \right)}{\left( \ln \frac{1}{l} \right)^{\sqrt{n} + 1}}, \quad n \to \infty. \tag*{\qedhere}
\]
\end{proof}
\subsubsection{Proof of Lemma \ref{lem_important_-1.0}}
\begin{proof}

(i) Consider the non-negative adapted process \(\{X_n, \mathscr{F}_n\}_{n \geq 1}\) and define the partial sum of conditional expectations as \(\Lambda_{k,T} := \sum_{n=k}^{T} \mathbb{E}[X_n \mid \mathscr{F}_{n-k}]\). 

First, we compute the expectation of \(\Lambda_{k,T}\)
\[
\mathbb{E}[\Lambda_{k,T}] = \mathbb{E}\left[\sum_{n=k}^{T} \mathbb{E}[X_n \mid \mathscr{F}_{n-k}]\right] = \sum_{n=k}^{T} \mathbb{E}[X_n]< \sum_{n=1}^{T} \mathbb{E}[X_n]\leq M.
\]
Since \(X_n\) are non-negative, we know that \(\Lambda_{k,T}\) is a non-decreasing sequence, and considering that \(\Expect(\Lambda_{k,T})\ (\forall\ T\ge 1)\) is also bounded by $M$, we apply the \emph{Lebesgue's Monotone Convergence} theorem. Thus, \(\Lambda_{k,T}\) converges almost surely to a limit \(\Lambda^{(k)}\)
\[
\Lambda^{(k)} := \lim_{T \to \infty} \Lambda_{k,T} = \sum_{n=k}^{\infty} \mathbb{E}[X_n \mid \mathscr{F}_{n-k}] \quad \text{a.s.}
\]
This concludes that the sequence of conditional expectation sums converges almost surely.

(ii) We begin by normalizing \(X_{n}\) by considering the expression \(Y_{n}= \frac{X_{n}}{2M}\). According to the \emph{Lebesgue's Monotone Convergence} theorem, we only need to prove that $$\forall\ p \ge 1,\ \ \Expect\left[\sum_{n=k}^{\infty}\Expect[Y_{n}|\mathscr{F}_{n-k}]\right]^{p}:=M_{k}(p)<+\infty.$$
Next, we proceed with the calculation, and we obtain\ $\forall\ p\ge 2,$ there is:
\begin{align}\label{iteration2}
M(p)&=\Expect\left[\sum_{i=0}^{k-1}\sum_{n=k,n\ \text{mod}\  k=i}^{\infty}\Expect[Y_{n}|\mathscr{F}_{n-k}]\right]^{p}= \Expect\left[\sum_{n=k}^{\infty}Y_{n}+\sum_{i=0}^{k-1}\sum_{n=k,n\ \text{mod}\  k=i}^{\infty}(\Expect[Y_{n}|\mathscr{F}_{n-k}]-Y_{n})\right]^{p}\notag\\&\mathop{\le}^{(a)} \Expect\left[\frac{1}{2}+\sum_{i=0}^{k-1}\sum_{n=k,n\ \text{mod}\ k=i}^{\infty}(\Expect[Y_{n}|\mathscr{F}_{n-k}]-Y_{n})\right]^{p}  \mathop{\le}^{(b)}2^{p-1}\left(\frac{1}{2^p}+\Expect\left[\sum_{n=1}^{\infty}(\Expect[Y_{n}|\mathscr{F}_{n-1}]-Y_{n})\right]^{p}\right)\notag\\&\mathop{\le}^{(c)}\frac{1}{2}+2^{p-1}k^{p-1}C_{p}\sum_{i=0}^{k-1}\Expect\left[\sum_{n=k,n\ \text{mod}\ k=i}^{\infty}(\Expect[Y_{n}|\mathscr{F}_{n-k}]-Y_{n})\right]^{p}\notag\\&\mathop{\le}^{(d)}\frac{1}{2}+2^{p-1}k^{{p}-1}C_{p}\sum_{i=0}^{k-1}\Expect\left[\sum_{n=k,n\ \text{mod}\ k=i}^{\infty}(\Expect[Y_{n}|\mathscr{F}_{n-k}]-Y_{n})^{2}\right]^{p/2}\notag\\&\mathop{\le}^{(e)}\frac{1}{2}+2^{p-1}k^{{p}-1}C_{p}\sum_{i=0}^{k-1}\Expect\left[\sum_{n=k,n\ \text{mod}\ k=i}^{\infty}|\Expect[Y_{n}|\mathscr{F}_{n-k}]-Y_{n}|\right]^{p/2}\notag\\&\mathop{\le}^{(f)}\frac{1}{2}+2^{p-1}k^{{p}-1}C_{p}\Expect\left[\sum_{n=k}^{\infty}|\Expect[Y_{n}|\mathscr{F}_{n-k}]-Y_{n}|\right]^{p/2}\notag\\&\le 
\frac{1}{2}+2^{p-2}k^{p-1}C_{p}+2^{\frac{3}{2}p-2}k^{p-1}C_{p}\Expect\left[\sum_{n=1}^{\infty}\Expect[Y_{n}|\mathscr{F}_{n-1}]\right]^{p/2}\notag\\&=\frac{1}{2}+2^{p-2}k^{p-1}C_{p}+2^{\frac{3}{2}p-2}k^{p-1}C_{p}M(p/2).
\end{align}
In the above derivation, Inequality $(a)$ is by noting that $\sum_{n=1}^{+\infty}Y_{n}=\frac{1}{2}.$ Inequality $(b)$ uses the \emph{AM-GM} inequality, specifically,
\[
\bigg(\frac{a+b}{2}\bigg)^{p} \leq \frac{a^{p} + b^{p}}{2}.
\]
Inequality $(c)$ involves using the \emph{AM-GM} inequality for $k$ variables, specifically,
\[\left(\frac{a_{1}+a_{2}+...+a_{k}}{k}\right)^{p}\le \left(\frac{a_{1}^{p}+a_{2}^{p}+...+a_{k}^{p}}{k}\right).\]
Inequality $(e)$ involves using \emph{Burkholder's} inequality, where $C_{p}$ is a constant depending only on $p$, and its order with respect to $p$ is $\mathcal{O}(p)$. Inequality $(d)$ is by noting that $$|\Expect[Y_{n}|\mathscr{F}_{n-k}]-Y_{n}|^{2} \leq |\Expect[Y_{n}|\mathscr{F}_{n-k}]-Y_{n}|.$$
By repeatedly using Equation (\ref{iteration2}) and using the fact that $C_{p} = \mathcal{O}(p)$, we obtain the following estimate
\[
M(p) = o((kp)^{\sqrt{p}}),
\] that is,
\[\Expect[(\Lambda^{(k)})^{p}]=o((2M)^{p}\cdot (kp)^{\sqrt{p}}).\]

(iii) For any \(0 < l < 1\), we obtain:
\begin{align*}
    \Lambda(l)&=\sum_{k=1}^{+\infty}\Expect\left[\left(\sum_{t=k}^{+\infty}l^{t-k}X_{t}\right)\bigg|\mathscr{F}_{k-1}\right]=\sum_{k=1}^{+\infty}\Expect\left[\left(\sum_{t=0}^{+\infty}l^{t}X_{k+t}\right)\bigg|\mathscr{F}_{k-1}\right]\\&=\sum_{t=0}^{+\infty}\sum_{k=1}^{+\infty}\Expect\left[l^{t}X_{k+t}\bigg|\mathscr{F}_{k-1}\right]=\sum_{t=0}^{+\infty}l^{t}\Lambda^{(t)}.
\end{align*}
Next, we apply Hölder's inequality, we obtain $\forall\ n\ge 2$
\begin{align*}
    \Lambda(l)^{n}&=\left(\sum_{t=0}^{+\infty}l^{t}\Lambda^{(t)}\right)^{n}\le \left(\frac{1}{1-l}\right)^{n-1}\sum_{t=0}^{+\infty}l^{t}(\Lambda^{(t)})^{n}.
\end{align*}
(iv) Then we have:
\begin{align*}
\mathbb{E}[e^{p\Lambda(l)}] & = \sum_{n=0}^{+\infty} \frac{p^{n}\mathbb{E}[\Lambda(l)^n]}{n!}\le\sum_{n=0}^{+\infty} \left(\frac{p}{1-l}\right)^{n}\frac{\sum_{t=0}^{+\infty}l^{t}(\Lambda^{(t)})^{n}}{n!}  =\sum_{n=0}^{+\infty} \sum_{t=0}^{+\infty}l^{t}(\Lambda^{(t)})^{n}\left(\frac{p}{1-l}\right)^{n}\frac{1}{n!}  \\&\mathop{=}^{(iii)}\mathcal{O}\left(\sum_{n=0}^{+\infty} \sum_{t=0}^{+\infty}l^{t}(2M)^{n}\cdot (tp)^{\sqrt{n}}\left(\frac{p}{1-l}\right)^{n}\frac{1}{n!}\right)\\&=\mathcal{O}\left(\sum_{n=0}^{+\infty} \left(\sum_{t=0}^{+\infty}l^{t}t^{\sqrt{n}}\right)\left(\frac{p}{1-l}\right)^{n}\frac{(2M)^{n}\cdot (p)^{\sqrt{n}}}{n!}\right)\\&\mathop{=}^{\text{Lemma \ref{esti}}}\mathcal{O}\left(\sum_{n=0}^{+\infty} \Gamma(\sqrt{n}+1)\frac{1}{\left( \ln \frac{1}{l} \right)^{\sqrt{n} + 1}}\left(\frac{p}{1-l}\right)^{n}\frac{(2M)^{n}\cdot (p)^{\sqrt{n}}}{n!}\right).
\end{align*}
By substituting the factorial in the denominator with Stirling's approximation, it is evident that the series inside the $\mathcal{O}$ notation converges and depends only on $p,\ l$ and $M$. The lemma follows.
\end{proof}


\section{Supporting Lemmas}
This section introduces key lemmas that are essential for the proofs.  We start with a diagram illustrating their relationships with the theorems. Rigorous proofs for all lemmas and theorems follow in the subsequent subsections. Due to its isolated, lengthy proof, Lemma \ref{lem_sum'} is addressed separately at the end of the paper (see Section \ref{jiqweqwe} for details).

\subsection{Dependency Graph of Lemmas and Theorems}

Due to the large number of lemmas, we have combined these lemmas with those in the main text and theorems to create a lemma-theorem dependency graph. We refer the audience to this graph for a whole picture of our proofs, while the reader may also find the lemmas needed for a specific statement.

\begin{tikzpicture}[
  node distance=2cm and 3cm,
  every node/.style={draw, rectangle, minimum width=2.5cm, minimum height=1.2cm, text centered, font=\small},
  >=Stealth, 
  thick 
]

\node (lemma31) {Lemma \ref{sufficient:decrease}};
\node (lemma33) [below =of lemma31] {Lemma \ref{lem_sum'}};
\node (lemma34) [below =of lemma33] {Lemma \ref{vital_1}};
\node (lemma35) [below =of lemma34] {Lemma \ref{vital_0}};
\node (lemma36) [below =of lemma35] {Lemma \ref{lem_sum}};
\node (lemma37) [below =of lemma36] {Lemma \ref{lem_g}};
\node (lemma38) [right =of lemma36] {Lemma \ref{lem_S_T}};
\node (lemma39) [right =of lemma37] {Lemma \ref{lem_sum_1}};
\node (lemma40) [right =of lemma39] {Lemma \ref{lem_v_t}};
\node (lemma41) [right =of lemma31] {Lemma \ref{lem_important_-1}};
\node (lemma42) [right =of lemma33] {Lemma \ref{bounded_moment}};
\node (lemma43) [right =of lemma35] {Theorem \ref{thm:complexity}};
\node (lemma44) [right =of lemma38] {Theorem \ref{thm:almost_sure}};
\node (lemma45) [right =of lemma42] {Theorem \ref{thm:l2}};
\node (lemma50) [below =of lemma42] {Lemma \ref{sub}};

\draw[->] (lemma33) to[out=315, in=45] (lemma35);
\draw[->] (lemma33) to[out=315, in=45] (lemma36);
\draw[->] (lemma34) -- (lemma35);
\draw[->] (lemma35) -- (lemma36);
\draw[->] (lemma36) -- (lemma38);
\draw[->] (lemma36) -- (lemma39);
\draw[->] (lemma40) -- (lemma39);
\draw[->] (lemma41) -- (lemma42);
\draw[->] (lemma41) -- (lemma42);
\draw[->] (lemma36) -- (lemma43);
\draw[->] (lemma38) -- (lemma43);
\draw[->] (lemma42) to[out=315, in=45] (lemma43);
\draw[->] (lemma42) -- (lemma44);
\draw[->] (lemma39) -- (lemma44);
\draw[->] (lemma37) -- (lemma44);
\draw[->] (lemma44) -- node[midway, left=0.3cm, draw=none, text width=3cm, rotate=90] {\text{\emph{ Lebesgue's Dominated Convergence} theorem}} (lemma45);
\draw[->] (lemma42) -- (lemma45);
\draw[->] (lemma31) -- (lemma45);
\draw[->] (lemma31) to[out=315, in=45] (lemma36);
\draw[->] (lemma31) to[out=315, in=45] (lemma37);
\draw[->] (lemma33) to[out=315, in=45] (lemma37);
\draw[->] (lemma43) -- (lemma50);
\draw[->] (lemma50) -- (lemma44);

\end{tikzpicture}

\subsection{Statements of the Lemmas}

\begin{lemma}\label{bounded_moment}
For $\Pi_{\Delta,T}$ defined in Equation (\ref{Delta}), for any $T \geq 0$ and any $p \geq 1$, the $p$-th moment of its reciprocal is bounded, i.e.,
\[\Expect\left[\Pi^{-p}_{\Delta,T}\right] < C_{v,d,p} < +\infty,\]
where $C_{v,d,p}$ is a constant that depends only on $v$, $d$, and $p$.

Moreover, we have that $\Pi^{-1}_{\Delta,\infty} := \lim_{t \rightarrow +\infty} \Pi^{-1}_{\Delta,t} < +\infty \ \ \text{a.s.}$, and for any $p \geq 1$, the $p$-th moment of $\Pi^{-1}_{\Delta,\infty}$ exists, with
\[\Expect\left[\Pi^{-p}_{\Delta,\infty}\right] \leq C_{v,d,p} < +\infty.\]
\end{lemma}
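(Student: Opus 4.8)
The plan is to dominate $\Pi^{-1}_{\Delta,t}$, uniformly in $t$, by the exponential of a coordinatewise tail-averaged sum of conditional expectations of the increments $\Delta_{t,i}=\eta_{v_{t-1},i}-\eta_{v_t,i}$, and then invoke the exponential-moment estimate of Lemma~\ref{lem_important_-1.0}. Write $c:=\tfrac{D_1}{1-\sqrt{\beta_1}}+1>0$, so that $\Pi^{-1}_{\Delta,t}=\prod_{k=1}^{t}\bigl(1+c\,\overline{\Delta}_{\sqrt{\beta_1},k}\bigr)$. First I would record two structural facts. By Property~\ref{property_0} each coordinate $\eta_{v_t,i}$ is non-increasing in $t$, so, with the convention $\eta_{v_0,i}=v/\alpha_1$ of Remark~\ref{S_t}, every $\Delta_{t,i}\ge 0$; and the telescoping identity together with $\eta_{v_t,i}\le \eta_t/\mu\to 0$ gives $\sum_{t=1}^{\infty}\Delta_{t,i}=\eta_{v_0,i}=v/\alpha_1$ almost surely, a \emph{deterministic} constant. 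Hence every factor satisfies $1+c\,\overline{\Delta}_{\sqrt{\beta_1},k}\ge 1$, the sequence $\{\Pi^{-1}_{\Delta,t}\}_{t\ge 0}$ is non-decreasing, and its almost-sure limit $\Pi^{-1}_{\Delta,\infty}\in[1,+\infty]$ exists; it then suffices to bound $\Expect[\Pi^{-p}_{\Delta,t}]$ uniformly in $t$ and pass to the limit by monotone convergence.

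For the uniform bound I would apply $1+x\le e^{x}$ to each factor and interchange the resulting (non-negative) double sum by Tonelli:
\[
\Pi^{-1}_{\Delta,t}\le \exp\Bigl(c\sum_{k=1}^{\infty}\overline{\Delta}_{\sqrt{\beta_1},k}\Bigr)=\exp\Bigl(c\sum_{i=1}^{d}\Lambda_i\Bigr),\qquad \Lambda_i:=\sum_{k=1}^{\infty}\Expect\Bigl[\sum_{t=k}^{\infty}(\sqrt{\beta_1})^{t-k}\Delta_{t,i}\,\Big|\,\mathscr{F}_{k-1}\Bigr].
\]
The key observation is that $\Lambda_i$ is exactly the random variable $\Lambda(l)$ of Lemma~\ref{lem_important_-1.0}(iii) applied to the non-negative adapted process $X_t=\Delta_{t,i}$ with $l=\sqrt{\beta_1}\in[0,1)$ and total mass $M=v/\alpha_1$ (when $\beta_1=0$ one uses Lemma~\ref{lem_important_-1} instead). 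Lemma~\ref{lem_important_-1.0}(iii) then yields, for every $q\ge 1$, a finite bound $\Expect[e^{q\Lambda_i}]\le C_{e^{\Lambda(l)}}(q,\,v/\alpha_1)$ depending only on $q$, $\beta_1$, and $v$ (the remaining hyperparameters $\alpha_1,D_1$ being fixed). Raising the display to the power $p$ and applying Hölder's inequality with $d$ factors, each of exponent $d$,
\[
\Expect\bigl[\Pi^{-p}_{\Delta,t}\bigr]\le \Expect\Bigl[\prod_{i=1}^{d}e^{pc\Lambda_i}\Bigr]\le \prod_{i=1}^{d}\bigl(\Expect[e^{pcd\,\Lambda_i}]\bigr)^{1/d}\le C_{e^{\Lambda(l)}}\bigl(pcd,\,v/\alpha_1\bigr)=:C_{v,d,p}<+\infty,
\]
which is independent of $t$. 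Since $\Pi^{-p}_{\Delta,t}\uparrow\Pi^{-p}_{\Delta,\infty}$, the \emph{Monotone Convergence} theorem gives $\Expect[\Pi^{-p}_{\Delta,\infty}]=\lim_{t\to\infty}\Expect[\Pi^{-p}_{\Delta,t}]\le C_{v,d,p}$, and taking $p=1$ forces $\Pi^{-1}_{\Delta,\infty}<+\infty$ a.s., which completes all assertions.

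I expect the real difficulty to lie entirely upstream, in Lemma~\ref{lem_important_-1.0}(iii) (and the Burkholder-based recursion of Lemma~\ref{lem_important_-1}), which produces the exponential moments of $\Lambda(l)$; given that input, the present statement is essentially bookkeeping. Within this argument the only points needing care are (i) recognizing $\Pi^{-1}_{\Delta,t}$ as the exponential of a coordinatewise tail-averaged conditional-expectation sum and verifying that the hypotheses of Lemma~\ref{lem_important_-1.0} hold — nonnegativity of each $\Delta_{t,i}$ (Property~\ref{property_0}) and the exact deterministic value $v/\alpha_1$ of $\sum_{t}\Delta_{t,i}$ — and (ii) lifting the coordinatewise bounds to a bound on the product over all $d$ coordinates and then to the $t=\infty$ limit, which is handled by Hölder together with monotone convergence as above.
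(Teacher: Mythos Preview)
Your proposal is correct and follows precisely the route the paper takes: the paper's proof is the single sentence ``Using the exponential-logarithmic substitution and the result of Lemma~\ref{lem_important_-1.0}, the lemma follows immediately,'' and you have spelled out exactly that argument --- bounding $1+x\le e^x$, identifying each coordinate sum as the $\Lambda(l)$ of Lemma~\ref{lem_important_-1.0}(iii) with $X_t=\Delta_{t,i}$, $l=\sqrt{\beta_1}$, and deterministic total mass $M=\eta_{v_0,i}$, then combining via H\"older and passing to the limit by monotone convergence.
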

\begin{lemma}\label{lem_sum'}
Consider the Adam algorithm in Algorithm \ref{Adam} and suppose that Assumption \ref{ass_1} - \ref{ass_3} hold. Then for any initial point, and $T \geq 1$, the following results hold
\begin{align*}
&\Expect[\Pi_{\Delta,T}(f(w_{T})-f^*)]=\mathcal{O}\left(\sum_{t=1}^{T}\Expect\|\eta_{v_{t}}\circ g_{t}\|^{2}\right)+\mathcal{O}(1),  
\\&\sum_{t=1}^{T}\Expect\left[\Pi_{\Delta,t}\sum_{i=1}^{d}\zeta_{i}(t)\right]=\mathcal{O}\left(\sum_{t=1}^{T}\Expect\|\eta_{v_{t}}\circ g_{t}\|^{2}\right)+\mathcal{O}(1),
\\&\sum_{t=1}^{T}\sum_{i=1}^{d}\Expect\left[\Pi_{\Delta,t}\Delta_{t,i}|\nabla_{i} f(u_{t})m_{t-1,i}|\right]=\mathcal{O}\left(\sum_{t=1}^{T}\Expect\|\eta_{v_{t}}\circ g_{t}\|^{2}\right)+\mathcal{O}(1).
\end{align*}
The specific form of the constants hidden behind the \(\mathcal{O}()\) notation can be found in Equation (\ref{adam_w_2'}) and Equation (\ref{adam_w_2''}). All constants depend on the initial point and the constants in our required assumptions (excluding $1/\mu$).
\end{lemma}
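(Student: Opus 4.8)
The plan is to start from the \emph{Approximate Descent Inequality} (Lemma~\ref{sufficient:decrease}), which after summing over $t=1,\dots,T$ and taking expectations gives a telescoping bound. Concretely, taking $\Expect$ on both sides of Eq.~\ref{adam_16}, the martingale-difference term $\Pi_{\Delta,t}M_t$ vanishes in expectation (one must be slightly careful that $\Pi_{\Delta,t}$ is $\mathscr{F}_{t-1}$-measurable by construction, so that $\Expect[\Pi_{\Delta,t}M_t]=0$), and summing the left side telescopes to $\Expect[\Pi_{\Delta,T}\hat f(u_{T+1})]-\hat f(u_1)$. Rearranging, $\Expect[\Pi_{\Delta,T}\hat f(u_{T+1})]$ plus $\tfrac12\sum_{t}\Expect[\Pi_{\Delta,t}\sum_i\zeta_i(t)]$ plus $\sum_{t,i}\Expect[\Pi_{\Delta,t}\Delta_{t,i}|\nabla_i f(u_t)m_{t-1,i}|]$ is bounded by $\hat f(u_1)$ plus $C_2\sum_t\Expect\|\eta_{v_{t-1}}\circ m_{t-1}\|^2$ plus $(L_f+1)\sum_t\sum_i\Expect[\eta_{v_{t},i}^2 g_{t,i}^2]$. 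Since all three quantities on the left are non-negative (note $\hat f(u_t)=f(u_t)-f^*+C\sum_i\eta_{v_{t-1},i}\ge 0$ by Assumption~\ref{ass_1}), each of them is individually bounded by the right-hand side, which is exactly the claimed form $\mathcal{O}(\sum_t\Expect\|\eta_{v_t}\circ g_t\|^2)+\mathcal{O}(1)$ once we handle the momentum term.

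The main remaining work is to reduce $\sum_t\Expect\|\eta_{v_{t-1}}\circ m_{t-1}\|^2$ to $\mathcal{O}(\sum_t\Expect\|\eta_{v_t}\circ g_t\|^2)+\mathcal{O}(1)$. Here I would use Property~\ref{property_3} (the per-coordinate bound $m_{t,i}^2-m_{t-1,i}^2\le -(1-\beta_1)m_{t-1,i}^2+(1-\beta_1)g_{t,i}^2$) together with the monotonicity of $\eta_{v_t,i}$ in $t$ (Property~\ref{property_0}). Multiplying Property~\ref{property_3} by $\eta_{v_{t-1},i}^2$ and using $\eta_{v_{t},i}\le\eta_{v_{t-1},i}$, one gets a recursive inequality relating $\eta_{v_{t},i}^2 m_{t,i}^2$ to $\eta_{v_{t-1},i}^2 m_{t-1,i}^2$ with a contraction factor $\beta_1<1$ and a driving term $\eta_{v_{t-1},i}^2 g_{t,i}^2$; unrolling this geometric recursion (and shifting indices, using again that $\eta_{v_{t},i}$ is decreasing so $\eta_{v_{k},i}\le\eta_{v_{j},i}$ for $k\ge j$) yields $\sum_{t=1}^T\eta_{v_{t-1},i}^2 m_{t-1,i}^2 \le \frac{1}{1-\beta_1}\sum_{t=1}^T\eta_{v_{t},i}^2 g_{t,i}^2 + (\text{initial term})$. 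Summing over $i$ and taking expectations then gives $C_2\sum_t\Expect\|\eta_{v_{t-1}}\circ m_{t-1}\|^2=\mathcal{O}(\sum_t\Expect\|\eta_{v_t}\circ g_t\|^2)+\mathcal{O}(1)$, with the $\mathcal{O}(1)$ absorbing the $m_0=0$ boundary contribution (so in fact the boundary term vanishes, but it is harmless to keep it).

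The third left-side quantity, $\sum_{t,i}\Expect[\Pi_{\Delta,t}\Delta_{t,i}|\nabla_i f(u_t)m_{t-1,i}|]$, then comes for free since it is already on the left of the telescoped inequality and is non-negative; but to present it cleanly I would state it as a separate consequence. The genuinely delicate point — and the one I expect to be the main obstacle — is making the telescoping rigorous despite $\Pi_{\Delta,t}$ being a random (infinite-horizon-defined) multiplier: one needs $\Pi_{\Delta,t}\in(0,1]$ almost surely with $\Pi_{\Delta,t}$ non-increasing in $t$ and $\mathscr{F}_{t-1}$-measurable so that both the telescoping structure $\Pi_{\Delta,t}\hat f(u_{t+1})-\Pi_{\Delta,t-1}\hat f(u_t)$ sums correctly and the martingale term has zero expectation, and one must invoke Lemma~\ref{bounded_moment} to know $\Expect[\Pi_{\Delta,T}^{-1}]<\infty$ (hence $\Pi_{\Delta,T}>0$ a.s.) so that dividing through or bounding constants like $C\sum_i\eta_{v_{0},i}$ stays finite. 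A secondary subtlety is that the right-hand side constants (the $C_2$, $D_1$, etc.) implicitly involve $\overline{\Delta}_{\sqrt{\beta_1},k}$, whose summability and moment control rely on Lemma~\ref{lem_important_-1.0}(iii); I would cite that to certify the constants are finite and $\mu$-independent, and then collect the explicit expressions into Eq.~\ref{adam_w_2'} and Eq.~\ref{adam_w_2''} as the statement promises.
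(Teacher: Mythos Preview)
Your plan has a sign error that makes the argument circular. In Eq.~\ref{adam_16}, the cross term $\sum_{i=1}^{d}\Delta_{t,i}|\nabla_{i} f(u_{t})m_{t-1,i}|$ sits on the \emph{right-hand side} with a positive sign; it is one of the error terms, not a descent term. After summing and telescoping you obtain
\[
\Expect[\Pi_{\Delta,T}\hat f(u_{T+1})] + \tfrac12\sum_{t}\Expect\Big[\Pi_{\Delta,t}\sum_i\zeta_i(t)\Big]
\;\le\; \hat f(u_1) + C_2\sum_t\Expect\|\eta_{v_{t-1}}\circ m_{t-1}\|^2 + \sum_{t,i}\Expect\big[\Delta_{t,i}|\nabla_i f(u_t)m_{t-1,i}|\big] + (L_f+1)\sum_t\Expect\|\eta_{v_t}\circ g_t\|^2,
\]
so the third quantity is an \emph{obstacle} you must bound before you can read off the first two conclusions---it does not ``come for free.'' Since your proposed proof of the third conclusion relies on already having the first two, the scheme is circular as written.

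The paper avoids this trap by \emph{not} using Lemma~\ref{sufficient:decrease} at all here. It works directly with $f(w_{t+1})-f(w_t)$ (no $u_t$), then takes an exponentially weighted sum $\sum_{t=1}^{n}(\sqrt{\beta_1})^{n-t}(\cdot)$ and telescopes the auxiliary quantity $F_n:=\sum_{t=1}^{n}(\sqrt{\beta_1})^{n-t}\Pi_{\Delta,t-1}(f(w_t)-f^*)$. The momentum cross terms are handled via two auxiliary results (Lemmas~\ref{property_3.510} and~\ref{lemma__0}) that bound $\sum_{i,t}(\sqrt{\beta_1})^{n-t}\Delta_{t,i}Z(t)|\nabla_i f(w_t)m_{t,i}|$ by a small multiple of the descent term plus quantities of the desired order; the ``small multiple'' is what lets the descent term absorb the cross term and break the circularity. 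Only after Eqs.~\ref{adam_w_2'} and~\ref{adam_w_2''} are in hand does the paper return and bound the third conclusion by Cauchy--Schwarz. If you want to salvage your route through Lemma~\ref{sufficient:decrease}, you would need an independent a~priori bound of the form $\sum_{t,i}\Expect[\Delta_{t,i}|\nabla_i f(u_t)m_{t-1,i}|]\le \epsilon\sum_t\Expect[\Pi_{\Delta,t}\sum_i\zeta_i(t)]+\mathcal{O}(\sum_t\Expect\|\eta_{v_t}\circ g_t\|^2)+\mathcal{O}(1)$ for some $\epsilon<\tfrac12$, which is essentially the content of Lemma~\ref{property_3.510}.
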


{\begin{lemma}\label{vital_1}
Consider the Adam algorithm defined in Algorithm \ref{Adam} and suppose that Assumption \ref{ass_1}$ - $\ref{ass_3} hold. Then for any initial point and $\forall\ \phi>0$, we have for any $T \geq 1,$ the following inequality holds
\begin{align}\label{adam_-11.5}
\frac{\Pi_{\Delta,T}\sqrt{S_{T}}}{(T+1)^{\phi}}\le  {\sqrt{dv}}+\sum_{t=1}^{T}\Pi_{\Delta,t}\Lambda_{\phi,t},
\end{align}
where 
\[\Lambda_{\phi,t}:=\frac{\|g_{t}\|^{2}}{(t+1)^{\phi}\sqrt{S_{t-1}}},\] and $S_{T}$ is defined in Remark \ref{S_t}.
\end{lemma}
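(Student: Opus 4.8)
I would prove this by induction on $T$, treating the inequality as a weighted telescoping estimate for the increasing sequence $S_t = dv + \sum_{k=1}^{t}\|g_k\|^2$ (so $S_t - S_{t-1} = \|g_t\|^2 \ge 0$ and $S_t \ge S_0 = dv > 0$ since $v>0$). Two structural facts are needed first. By Property \ref{property_0} the coordinates of the adaptive step size are non-increasing in $t$, hence $\Delta_{t,i} = \eta_{v_{t-1},i} - \eta_{v_{t},i} \ge 0$ for all $t,i$; therefore $\overline{\Delta}_{\sqrt{\beta_1},k} \ge 0$ and each factor $\bigl(1 + (\tfrac{D_1}{1-\sqrt{\beta_1}}+1)\overline{\Delta}_{\sqrt{\beta_1},k}\bigr)^{-1}$ lies in $(0,1]$. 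Consequently $\{\Pi_{\Delta,t}\}_{t\ge 0}$ is non-increasing with $\Pi_{\Delta,0}=1$ (and if some $\overline{\Delta}_{\sqrt{\beta_1},k} = +\infty$ then $\Pi_{\Delta,t}=0$ thereafter, making the claim trivial). For $T=0$ the asserted inequality reads $\sqrt{dv} \le \sqrt{dv}$, which holds with equality since the sum on the right is empty.

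\textbf{Inductive step.} Assume $\dfrac{\Pi_{\Delta,T-1}\sqrt{S_{T-1}}}{T^{\phi}} \le \sqrt{dv} + \sum_{t=1}^{T-1}\Pi_{\Delta,t}\Lambda_{\phi,t}$. It then suffices to control the one-step increment of the left-hand side by the new summand, i.e.\ to show
\[
\frac{\Pi_{\Delta,T}\sqrt{S_{T}}}{(T+1)^{\phi}} - \frac{\Pi_{\Delta,T-1}\sqrt{S_{T-1}}}{T^{\phi}} \le \Pi_{\Delta,T}\Lambda_{\phi,T} = \frac{\Pi_{\Delta,T}\,\|g_T\|^2}{(T+1)^{\phi}\sqrt{S_{T-1}}}.
\]
Because $\Pi_{\Delta,T} \le \Pi_{\Delta,T-1}$ and $(T+1)^{-\phi} \le T^{-\phi}$ and $\sqrt{S_{T-1}} > 0$, we have $\dfrac{\Pi_{\Delta,T-1}\sqrt{S_{T-1}}}{T^{\phi}} \ge \dfrac{\Pi_{\Delta,T}\sqrt{S_{T-1}}}{(T+1)^{\phi}}$, so it is enough to prove
\[
\frac{\Pi_{\Delta,T}}{(T+1)^{\phi}}\bigl(\sqrt{S_{T}} - \sqrt{S_{T-1}}\bigr) \le \frac{\Pi_{\Delta,T}\,\|g_T\|^2}{(T+1)^{\phi}\sqrt{S_{T-1}}}.
\]
Cancelling the positive factor $\Pi_{\Delta,T}/(T+1)^{\phi}$ (if $\Pi_{\Delta,T}=0$ there is nothing to prove), this reduces to $\sqrt{S_{T}} - \sqrt{S_{T-1}} \le \dfrac{S_{T}-S_{T-1}}{\sqrt{S_{T-1}}}$, which is the elementary bound $\sqrt{a} - \sqrt{b} = \dfrac{a-b}{\sqrt{a}+\sqrt{b}} \le \dfrac{a-b}{\sqrt{b}}$ applied with $a = S_T \ge b = S_{T-1} > 0$. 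Adding the inductive hypothesis closes the induction.

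\textbf{Where the work is.} There is no deep obstacle here; the only delicate point is bookkeeping. The left-hand side carries two competing monotone factors, the decreasing $\Pi_{\Delta,t}$ and the decreasing $(t+1)^{-\phi}$, and one must verify that replacing each by its value at the \emph{later} index $T$ enlarges the quantity being subtracted, so that the residual $\sqrt{S_T}-\sqrt{S_{T-1}}$ can be absorbed into $\Lambda_{\phi,T}$ via the concavity of $\sqrt{\cdot}$. This requires the sign information $\Delta_{t,i} \ge 0$ from Property \ref{property_0} (to make $\Pi_{\Delta,t}$ monotone) together with $S_t$ being non-decreasing; everything else is the standard AdaGrad-type telescoping of $\sum_t \|g_t\|^2/\sqrt{S_{t-1}}$ against $\sqrt{S_T}$, here arranged to lower-bound the sum.
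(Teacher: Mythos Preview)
Your proof is correct. The paper takes a slightly more direct (non-inductive) route: it writes
\[
\frac{\sqrt{S_T}}{(T+1)^{\phi}} \;=\; \frac{S_T}{(T+1)^{\phi}\sqrt{S_T}} \;=\; \frac{S_0}{(T+1)^{\phi}\sqrt{S_T}} + \sum_{t=1}^{T}\frac{\|g_t\|^{2}}{(T+1)^{\phi}\sqrt{S_T}},
\]
bounds the first piece by $\sqrt{S_0}=\sqrt{dv}$ and each summand via the monotonicity $(T+1)^{\phi}\sqrt{S_T}\ge (t+1)^{\phi}\sqrt{S_{t-1}}$, and only then multiplies through by $\Pi_{\Delta,T}$ and uses $\Pi_{\Delta,T}\le\Pi_{\Delta,t}$. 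Your induction replaces this one-shot bound by a telescoping step driven by the concavity estimate $\sqrt{S_T}-\sqrt{S_{T-1}}\le (S_T-S_{T-1})/\sqrt{S_{T-1}}$. Both arguments rest on exactly the same monotonicity facts (for $S_t$, $(t+1)^{-\phi}$, and $\Pi_{\Delta,t}$) and are equally elementary; the paper's version is a line or two shorter since it avoids the inductive scaffolding, while yours makes the per-step accounting of the factors $\Pi_{\Delta,t}$ and $(t+1)^{-\phi}$ more explicit.
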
}

\begin{lemma}\label{vital_0}
Consider the Adam algorithm defined in Algorithm \ref{Adam} and suppose that Assumptions \ref{ass_1} - \ref{ass_3} hold. Then for any initial point and for all $T \geq 1$, there exists a random variable $\zeta$ such that the following results hold
\begin{itemize}
\item[(a)] $0 \leq \zeta < +\infty$ almost surely, and $\mathbb{E}(\zeta)$ is uniformly bounded above by a constant $C_{\zeta}$, which depends on the initial point and the constants in the required assumptions (excluding $1/\mu$). The explicit form of this upper bound is provided in Equation (\ref{bound'}).
\item[(b)] $\sqrt{S_{T}} \leq (T+1)^{4} \Pi^{-1}_{\Delta,\infty}\zeta,$ and $\ln\left(\frac{S_{T}}{v}\right)\le \ln(T+1)\zeta',$ where $\zeta'\le 4\left(1+\frac{1}{2}{\ln\left(\max\left\{e,\Pi_{\Delta,\infty}^{-1}\zeta\right\}\right)}\right).$
\end{itemize}
\end{lemma}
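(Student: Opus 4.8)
The plan is to combine Lemma \ref{vital_1} (with a suitable choice of the exponent $\phi$) with the moment bounds on $\Pi_{\Delta,t}^{-1}$ from Lemma \ref{bounded_moment} and the almost sure summability of $\sum_t \Lambda_{\phi,t}$. First I would set $\phi$ to a small positive constant (or directly take $\phi = 0$ is too weak; something like $\phi$ large enough that $\sum_t (t+1)^{-\phi} \cdot (\text{stuff})$ is controlled, while $(T+1)^{4}$ in part (b) leaves ample room) and invoke Lemma \ref{vital_1} to get
\[
\sqrt{S_T} \le (T+1)^{\phi}\,\Pi_{\Delta,T}^{-1}\left(\sqrt{dv} + \sum_{t=1}^{T} \Pi_{\Delta,t}\Lambda_{\phi,t}\right).
\]
Since $\Pi_{\Delta,t}$ is (essentially) decreasing in $t$ and $\Pi_{\Delta,t}^{-1} \le \Pi_{\Delta,\infty}^{-1}$, and since $\Pi_{\Delta,T}^{-1} \le \Pi_{\Delta,\infty}^{-1}$ as well, I would define
\[
\zeta := \sqrt{dv} + \sum_{t=1}^{+\infty} \Pi_{\Delta,t}\Lambda_{\phi,t},
\]
so that $\sqrt{S_T} \le (T+1)^{\phi}\,\Pi_{\Delta,\infty}^{-1}\,\zeta \le (T+1)^{4}\,\Pi_{\Delta,\infty}^{-1}\,\zeta$ once $\phi \le 4$, giving the first inequality in (b).

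For part (a), I need $\zeta < +\infty$ a.s. and $\mathbb{E}[\zeta] \le C_\zeta$. The finiteness of $\mathbb{E}\left[\sum_t \Pi_{\Delta,t}\Lambda_{\phi,t}\right]$ should follow from a Cauchy--Schwarz (or Hölder) split: $\mathbb{E}[\Pi_{\Delta,t}\Lambda_{\phi,t}] \le (\mathbb{E}[\Pi_{\Delta,t}^{2}])^{1/2}(\mathbb{E}[\Lambda_{\phi,t}^{2}])^{1/2}$, where $\mathbb{E}[\Pi_{\Delta,t}^{2}] \le 1$ is trivial since $\Pi_{\Delta,t} \le 1$, and the sum $\sum_t (\mathbb{E}[\Lambda_{\phi,t}^{2}])^{1/2}$ (or more directly $\sum_t \mathbb{E}[\Lambda_{\phi,t}]$, since $\Lambda_{\phi,t} \ge 0$ and $\Pi_{\Delta,t}\le 1$) converges: unfolding $\Lambda_{\phi,t} = \|g_t\|^2/((t+1)^\phi \sqrt{S_{t-1}})$ and using the ABC inequality together with Property \ref{property_-1} to bound $\mathbb{E}[\|g_t\|^2 \mid \mathscr{F}_{t-1}]$ by $\mathcal{O}(f(w_t) - f^*) + C$, and the crude lower bound $\sqrt{S_{t-1}} \ge \sqrt{v}$, reduces matters to summing $(t+1)^{-\phi}\mathbb{E}[f(w_t) - f^*]$; but this needs $\mathbb{E}[f(w_t)-f^*]$ to not grow too fast, which is where I would need either the first bound of Lemma \ref{lem_sum'} or a self-improving argument — alternatively, and more cleanly, one bounds $\|g_t\|^2/\sqrt{S_{t-1}} \le \|g_t\|^2/\sqrt{S_{t-1}}$ and telescopes using $S_t - S_{t-1} = \|g_t\|^2$ so that $\sum_t \|g_t\|^2/\sqrt{S_{t-1}} = \mathcal{O}(\sqrt{S_T})$, which is circular unless the extra $(t+1)^{-\phi}$ factor with $\phi > 0$ breaks the circularity via an Abel summation / Kronecker-type argument. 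I expect this summability estimate to be the technical heart; the explicit constant $C_\zeta$ then gets recorded as Equation (\ref{bound'}).

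For the logarithmic bound in (b), I would take logarithms of the first inequality: $\ln(S_T/v) = 2\ln\sqrt{S_T} - \ln v \le 2\ln\left((T+1)^{4}\Pi_{\Delta,\infty}^{-1}\zeta\right) - \ln v = 8\ln(T+1) + 2\ln(\Pi_{\Delta,\infty}^{-1}\zeta) - \ln v$. To get the stated form $\ln(S_T/v) \le \ln(T+1)\,\zeta'$ with $\zeta' \le 4(1 + \tfrac12 \ln(\max\{e, \Pi_{\Delta,\infty}^{-1}\zeta\}))$, I would absorb the constant and the $\ln(\Pi_{\Delta,\infty}^{-1}\zeta)$ term into a multiple of $\ln(T+1)$ using $\ln(T+1) \ge \ln 2$, and use $\max\{e, \cdot\}$ to keep the logarithm nonnegative; the finiteness of $\mathbb{E}[\zeta']$ (if needed downstream) follows from $\mathbb{E}[\ln(\Pi_{\Delta,\infty}^{-1})] < \infty$ and $\mathbb{E}[\ln^+ \zeta] < \infty$, both consequences of Lemma \ref{bounded_moment} and part (a) via Jensen. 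The main obstacle, as noted, is establishing the a.s. finiteness and expectation bound for $\sum_t \Pi_{\Delta,t}\Lambda_{\phi,t}$ without circularity — I would resolve it by exploiting the strictly positive exponent $\phi$ so that $\sum_t (t+1)^{-\phi}(\sqrt{S_t} - \sqrt{S_{t-1}})/\sqrt{S_{t-1}}$ can be compared, via summation by parts, to a convergent series, together with the $f(w_t)-f^*$ control already available from Lemma \ref{lem_sum'}.
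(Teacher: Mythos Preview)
Your skeleton matches the paper's proof closely: take $\phi=4$ in Lemma \ref{vital_1}, define $\zeta := \sqrt{dv} + \sum_{t\ge 1}\Pi_{\Delta,t}\Lambda_{4,t}$, show $\mathbb{E}[\zeta]<\infty$ via monotone convergence, and then take logarithms for the second half of (b). The one place where your plan drifts is the suggestion to drop $\Pi_{\Delta,t}$ via $\Pi_{\Delta,t}\le 1$ (or to Cauchy--Schwarz it away). That discards exactly the factor you need: Lemma \ref{lem_sum'} controls $\mathbb{E}[\Pi_{\Delta,t}(f(w_t)-f^*)]$, not $\mathbb{E}[f(w_t)-f^*]$. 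The paper keeps $\Pi_{\Delta,t}$ inside, uses that $\Pi_{\Delta,t}\in\mathscr{F}_{t-1}$ and $S_{t-1}\in\mathscr{F}_{t-1}$ to take a conditional expectation, applies Property \ref{property_-1}, and then invokes the first bound of Lemma \ref{lem_sum'} to get $\mathbb{E}[\Pi_{\Delta,t}(f(w_t)-f^*)]=\mathcal{O}(t)$; with $\phi=4$ the series $\sum_t t\,(t+1)^{-4}$ converges and $\mathbb{E}[\zeta]=\mathcal{O}(1)$ follows directly. No Abel summation, telescoping, or self-improving argument is needed, and there is no circularity once you retain $\Pi_{\Delta,t}$.

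Your treatment of the logarithmic bound is correct in spirit; the paper uses $\ln(T+1)\ge \tfrac12$ for $T\ge 1$ to absorb the additive $\ln(\max\{e,\Pi_{\Delta,\infty}^{-1}\zeta\})$ into a multiplicative factor of $\ln(T+1)$, exactly as you outline.
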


\begin{lemma}\label{lem_sum}
Consider the Adam algorithm defined in Algorithm \ref{Adam} and suppose that Assumption \ref{ass_1} - \ref{ass_3} hold. Then for any initial point and $T \geq 1$, the following results hold
\begin{align*}
    \sum_{t=1}^{T}\Expect\bigg[\Pi_{\Delta,t}\sum_{i=1}^{d}\zeta_{i}(t)\bigg]\le\begin{cases} 
C_{4,\delta}, & \text{if } \delta \in (0,1], \\
C_{5}+C_{6}\Expect\left[\ln(S_{T})\right], & \text{if } \delta = 0,\end{cases}
\end{align*}
where $C_{5}$ and $C_{6}$ are constants that depend on the initial point and the constants in our required assumptions (excluding $1/\mu$), and $C_{4,\delta}$ is a constant that depends on the initial point, $\delta$, and the constants in our required assumptions (excluding $1/\mu$).
\end{lemma}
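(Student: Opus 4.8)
\textbf{Proof proposal for Lemma \ref{lem_sum}.}

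The plan is to start from the Approximate Descent Inequality (Lemma \ref{sufficient:decrease}), sum it over $t=1,\dots,T$, take expectations, and use the martingale property to kill the $\Pi_{\Delta,t}M_t$ term and the conditional-expectation centered terms hidden inside $M_{t,1},M_{t,2},M_{t,3}$. After telescoping the left-hand side I would get
\[
\Expect[\Pi_{\Delta,T}\hat f(u_{T+1})] - \hat f(u_1) \le -\tfrac12\sum_{t=1}^{T}\Expect\Big[\Pi_{\Delta,t}\sum_{i=1}^d\zeta_i(t)\Big] + (\text{quadratic error terms}).
\]
Since $\hat f(u_{T+1})\ge 0$ (it is $f(u_{T+1})-f^*$ plus a nonnegative sum), rearranging yields
\[
\tfrac12\sum_{t=1}^{T}\Expect\Big[\Pi_{\Delta,t}\sum_{i=1}^d\zeta_i(t)\Big] \le \hat f(u_1) + \sum_{t=1}^{T}\Expect\big[C_2\|\eta_{v_{t-1}}\circ m_{t-1}\|^2 + \textstyle\sum_i \Delta_{t,i}|\nabla_i f(u_t)m_{t-1,i}| + (L_f+1)\sum_i\eta_{v_t,i}^2 g_{t,i}^2\big].
\]
By Lemma \ref{lem_sum'} (its second and third displayed bounds, together with the bound on $\|\eta_{v_{t-1}}\circ m_{t-1}\|^2$ that the paper reduces via Property \ref{property_3} and Lemma \ref{exchange} to a geometric average of the $\|\eta_{v_k}\circ g_k\|^2$), the entire right-hand side is $\mathcal{O}\big(\sum_{t=1}^T\Expect\|\eta_{v_t}\circ g_t\|^2\big)+\mathcal{O}(1)$. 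So the whole lemma reduces to estimating $\sum_{t=1}^T\Expect\|\eta_{v_t}\circ g_t\|^2 = \sum_{t=1}^T\sum_{i=1}^d\Expect[\eta_{v_t,i}^2 g_{t,i}^2]$.

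Next I would bound $\eta_{v_t,i}^2 g_{t,i}^2$ pathwise. Using Property \ref{property_1}, $v_{t,i}\ge \alpha_1 S_{t,i}/t^\gamma$, hence $\eta_{v_t,i}\le \eta_t\sqrt{t^\gamma}/\sqrt{\alpha_1 S_{t,i}} = t^{(\gamma-1)/2-\delta}/\sqrt{\alpha_1 S_{t,i}}$, so
\[
\eta_{v_t,i}^2 g_{t,i}^2 \le \frac{t^{\gamma-1-2\delta}}{\alpha_1}\cdot\frac{g_{t,i}^2}{S_{t,i}} \le \frac{1}{\alpha_1}\cdot\frac{g_{t,i}^2}{S_{t,i}},
\]
since $\gamma\le 2\delta+1$. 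Now $S_{t,i}-S_{t-1,i}=g_{t,i}^2$, and for a nondecreasing positive sequence the standard telescoping/integral bound gives $\sum_{t=1}^T (S_{t,i}-S_{t-1,i})/S_{t,i}\le \ln(S_{T,i}/S_{0,i})=\ln(S_{T,i}/v)$ when $\delta>0$-ish... more carefully: when $\delta>0$ the extra factor $t^{\gamma-1-2\delta}$ is summable-against... actually it is $\le 1$, so in all cases $\sum_{t=1}^T \eta_{v_t,i}^2 g_{t,i}^2\le \alpha_1^{-1}\ln(S_{T,i}/v)+\alpha_1^{-1}$, and summing over $i$ with $S_{T,i}\le S_T$ gives $\sum_{t=1}^T\|\eta_{v_t}\circ g_t\|^2\le \alpha_1^{-1}(d\ln(S_T/v)+d)$, hence $\sum_{t=1}^T\Expect\|\eta_{v_t}\circ g_t\|^2 = \mathcal{O}(1)+\mathcal{O}(\Expect[\ln S_T])$. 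When $\delta>0$ I would instead exploit that $\eta_t^2=t^{-1-2\delta}$ carries a genuinely summable factor once combined with $\eta_{v_t,i}\le \mu^{-1}\eta_t$ (or better, with the $\ln$-bound refined), giving a finite constant $C_{4,\delta}$ with no $\ln S_T$; concretely, $\sum_t \eta_{v_t,i}^2 g_{t,i}^2\le \sum_t t^{-2\delta}\cdot g_{t,i}^2/S_{t,i}$ and one can peel off a factor $t^{-2\delta}$ against the Abel-summation of $\sum g_{t,i}^2/S_{t,i}$ to close it without the logarithm when $\delta>0$. Feeding these two cases back through the $\mathcal{O}(\cdot)$ from Lemma \ref{lem_sum'} produces exactly the claimed dichotomy: a pure constant $C_{4,\delta}$ for $\delta\in(0,1]$, and $C_5+C_6\Expect[\ln S_T]$ for $\delta=0$.

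The main obstacle I anticipate is the $\delta>0$ case: naively every bound on $\eta_{v_t,i}^2 g_{t,i}^2$ still produces a $\ln(S_{T,i})$ term, and one must argue that the additional decay $t^{-2\delta}$ (which is what distinguishes $\delta>0$ from $\delta=0$ in the hyperparameter schedule) genuinely converts the divergent-in-$T$ logarithmic sum into a finite constant. This requires an Abel summation / summation-by-parts argument pairing the decreasing sequence $t^{-2\delta}$ with the partial sums of $g_{t,i}^2/S_{t,i}$ — which are themselves only $\mathcal{O}(\ln S_{t,i})$, so one needs that $\sum_t t^{-2\delta}\ln(S_{t,i})/\!\ldots$ stays bounded, using $S_{t,i}\le S_{T,i}$ crudely is not enough and one must instead track the increments carefully. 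A secondary technical point is ensuring the martingale difference terms $M_{t,1},M_{t,2},M_{t,3}$ inside $\Pi_{\Delta,t}M_t$ are genuinely mean-zero after multiplication by $\Pi_{\Delta,t}$ — this needs $\Pi_{\Delta,t}$ to be $\mathscr{F}_{t-1}$-measurable (or at least that the products remain integrable martingale differences), which follows from the definition of $\overline{\Delta}_{\sqrt{\beta_1},k}$ as a conditional expectation given $\mathscr{F}_{k-1}$, but should be checked explicitly; once that is in place, taking expectations annihilates those terms cleanly.
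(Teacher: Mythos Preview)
Your reduction to bounding $\sum_{t=1}^T\Expect\|\eta_{v_t}\circ g_t\|^2$ is correct and matches the paper, though the paper invokes the second conclusion of Lemma \ref{lem_sum'} directly rather than detouring through Lemma \ref{sufficient:decrease}; your route is fine, just slightly redundant. The $\delta=0$ case is also handled correctly and is essentially the paper's argument: $\eta_{v_t,i}^2g_{t,i}^2\le \alpha_1^{-1}g_{t,i}^2/S_{t,i}$ and $\sum_t g_{t,i}^2/S_{t,i}\le \ln(S_{T,i}/v)$ by the integral/telescoping trick.

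The $\delta>0$ case is where there is a genuine gap. Your Abel summation reduces matters to controlling $\sum_t\bigl(t^{-2\delta}-(t+1)^{-2\delta}\bigr)B_t$ with $B_t\le\ln(S_{t,i}/v)$, but this still does not close unless you know $\ln(S_{t,i})=\mathcal{O}(\ln t)$ almost surely with a random constant of finite moments; you correctly flag this as the main obstacle but do not resolve it. The missing ingredient is exactly Lemma \ref{vital_0}, a bootstrapping bound (built on Lemma \ref{vital_1} and Lemma \ref{lem_sum'}) showing $\sqrt{S_T}\le (T+1)^4\,\Pi_{\Delta,\infty}^{-1}\zeta$ with $\Expect[\zeta]<\infty$. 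With this in hand the paper does \emph{not} use Abel summation: instead it inverts the bound to get $(t+1)^{-2\delta}\le (\Pi_{\Delta,\infty}^{-1}\zeta)^{\delta/2}S_t^{-\delta/4}$, so that pathwise
\[
\sum_{t}\frac{g_{t,i}^2}{(t+1)^{2\delta}S_{t,i}}
\;\le\;(\Pi_{\Delta,\infty}^{-1}\zeta)^{\delta/2}\sum_t\frac{g_{t,i}^2}{S_{t,i}^{1+\delta/4}}
\;\le\;(\Pi_{\Delta,\infty}^{-1}\zeta)^{\delta/2}\int_{v}^{\infty}x^{-1-\delta/4}\,dx
\;=\;(\Pi_{\Delta,\infty}^{-1}\zeta)^{\delta/2}\cdot\frac{4}{\delta}\,v^{-\delta/4},
\]
and then H\"older's inequality combined with Lemma \ref{bounded_moment} (all moments of $\Pi_{\Delta,\infty}^{-1}$ are finite) and Lemma \ref{vital_0}(a) bounds the expectation of the random prefactor. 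That conversion of time decay $(t+1)^{-2\delta}$ into ``space'' decay $S_{t,i}^{-\delta/4}$ is the key idea you are missing; once you have Lemma \ref{vital_0}, your Abel route would also go through, but the nontrivial step is that lemma, not the summation mechanics.
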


\begin{lemma}[\textbf{Subsequence Convergence}]\label{sub}
Under Assumptions \ref{ass_1} - \ref{ass_3}, consider the Adam algorithm (Algorithm \ref{Adam}) with hyperparameters as specified in Subsection \ref{setting}, where \( \delta > 0 \). Then, there exists a subsequence \( \{w_{c_{t}}\}_{t \geq 1} \) such that its gradients converge to zero almost surely, i.e.,
\(
\lim_{t \to \infty} \|\nabla f(w_{c_t})\| = 0 \quad \text{a.s.}
\)
\end{lemma}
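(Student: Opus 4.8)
The plan is to derive the lemma from the finite-horizon, in-probability bound of Theorem~\ref{thm:complexity}, after first observing that the claim is equivalent to the almost-sure assertion
\[
\liminf_{t\to\infty}\|\nabla f(w_t)\| = 0 .
\]
Indeed, if this holds on an event of full measure, then on that event $\inf_{k\ge n}\|\nabla f(w_k)\| = 0$ for every $n$ (a non-decreasing nonnegative sequence with limit $0$ is identically $0$), so one may define recursively $c_0 := 0$ and $c_t := \min\{k > c_{t-1} : \|\nabla f(w_k)\| < 1/t\}$; each $c_t$ is a measurable (first-hitting-time) functional of the trajectory $\{\nabla f(w_k)\}_{k\ge 1}$, and $\|\nabla f(w_{c_t})\|\to 0$, which is exactly the asserted random subsequence; the converse is immediate. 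I route the argument through Theorem~\ref{thm:complexity} rather than through the almost-sure summability $\sum_{t\ge 1}\Pi_{\Delta,t}\sum_{i}\zeta_i(t) < \infty$ (which follows from Lemma~\ref{lem_sum} together with $\Pi_{\Delta,\infty}>0$ a.s.): turning that summability into a $\liminf$ statement would require a uniform lower bound on the smallest coordinate step size, whereas Lemma~\ref{vital_0} only gives $S_t \le v(t+1)^{\zeta'}$ with a \emph{random} exponent $\zeta'$, so that path does not obviously close.

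To prove $\liminf_{t\to\infty}\|\nabla f(w_t)\| = 0$ a.s., I would argue by contradiction. Suppose $\Pro\big(\liminf_{t\to\infty}\|\nabla f(w_t)\| > 0\big) > 0$. This event equals the countable union, over integers $N\ge1$ and rationals $\epsilon>0$, of the events $E_{\epsilon,N} := \{\|\nabla f(w_t)\|\ge\epsilon\ \text{for all}\ t\ge N\}$, so some pair $(\epsilon,N)$ has $p_0 := \Pro(E_{\epsilon,N}) > 0$. On $E_{\epsilon,N}$ and for every $T\ge 2N$,
\[
\frac1T\sum_{t=1}^{T}\|\nabla f(w_t)\|^2 \;\ge\; \frac{T-N+1}{T}\,\epsilon^2 \;\ge\; \frac{\epsilon^2}{2},
\]
hence $\Pro\big(\tfrac1T\sum_{t=1}^{T}\|\nabla f(w_t)\|^2 \ge \tfrac{\epsilon^2}{2}\big)\ge p_0$ for all $T\ge 2N$. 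On the other hand, apply Theorem~\ref{thm:complexity} with the fixed parameter $s = p_0/2$ (working in the regime $\delta\in(0,1/2)$, where its bound vanishes): there is a deterministic constant $K$ (depending only on the initial point, the constants in Assumptions~\ref{ass_1}--\ref{ass_3} excluding $1/\mu$, and on $\delta,\alpha_0$) such that, for each $T\ge1$, with probability at least $1-p_0/2$,
\[
\frac1T\sum_{t=1}^{T}\|\nabla f(w_t)\|^2 \;\le\; \frac{4K}{p_0^{2}}\cdot\frac{1}{T^{1/2-\delta}} .
\]
Fixing $T$ large enough that the right-hand side is below $\epsilon^2/2$ yields $\Pro\big(\tfrac1T\sum_{t=1}^{T}\|\nabla f(w_t)\|^2 \ge \tfrac{\epsilon^2}{2}\big) \le p_0/2 < p_0$, contradicting the previous display. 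Hence $\liminf_{t\to\infty}\|\nabla f(w_t)\|=0$ almost surely, and the subsequence extraction described above completes the proof.

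The step I expect to be the main obstacle is conceptual rather than computational: upgrading Theorem~\ref{thm:complexity}, which controls a single horizon $T$ only on an event of probability $1-s$, into an almost-sure statement about the tail of the iterates. The contradiction argument goes through precisely because the constant $K$ in Theorem~\ref{thm:complexity} does not depend on $\omega$ and the guarantee is available for every $T$ at the same confidence level, so one can freeze a single failure probability $p_0/2$ strictly below $p_0$ and then let $T\to\infty$; without a deterministic constant this would fail. The remaining points — rewriting $\{\liminf>0\}$ as a countable union to reduce to a fixed pair $(\epsilon,N)$, and checking measurability of the extracted indices $c_t$ — are routine, and the boundary value $\delta=1/2$ is handled the same way once one records the corresponding (still vanishing) horizon bound.
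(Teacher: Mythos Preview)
Your argument is correct. The paper reaches the same conclusion from the same complexity analysis but by a slightly different final step: instead of quoting the in-probability statement of Theorem~\ref{thm:complexity} and arguing by contradiction, it works with the $L_1$ estimate established inside that theorem's proof (Eq.~\eqref{sqrt}), bounds $\Expect\big[\inf_{t_0<t\le T}\|\nabla f(w_t)\|\big]$ by a vanishing multiple of $\Expect\big[\sqrt{\sum_{t\le T}\|\nabla f(w_t)\|^2}\big]$, and then uses monotone convergence together with nonnegativity to get $\inf_{t>t_0}\|\nabla f(w_t)\|=0$ a.s.\ for every $t_0$. Your route is more modular---it treats Theorem~\ref{thm:complexity} as a black box rather than reaching into its proof for the intermediate $L_1$ bound---while the paper's is a line or two shorter since it avoids the $\epsilon$--$N$ decomposition and the contradiction machinery. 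The random subsequence extraction at the end is identical in both. One small caveat: your closing remark that $\delta=1/2$ ``is handled the same way once one records the corresponding (still vanishing) horizon bound'' is optimistic, since Theorem~\ref{thm:complexity} gives no vanishing rate at $\delta=1/2$; this boundary case is not addressed by the paper's argument either.
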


\begin{lemma}\label{lem_g}
Consider the Adam algorithm defined in Algorithm \ref{Adam} and assume that Assumptions \ref{ass_1} - \ref{ass_3} hold. Then, for any initial point and for all $T \geq 1$, the following results hold

- When $\delta = 0$, we have
\begin{align*}
   \sup_{t \geq 1} \frac{\Pi_{\Delta, t} (f(w_{t})-f^*)}{\ln^{2}(t+1)} < +\infty \ \ \text{a.s.,} \ \ \sup_{T \geq 1} \Expect\left[ \frac{\Pi_{\Delta, t} (f(w_{t})-f^*)}{\ln^{2}(t+1)} \right] < M_{0} < +\infty.
\end{align*}

- When $\delta > 0$, we have
\begin{align*}
   \sup_{t \geq 1} \Pi_{\Delta, t} (f(w_{t})-f^*) < +\infty \ \ \text{a.s.,} \ \ \sup_{T \geq 1} \Expect\left[ \Pi_{\Delta, t} (f(w_{t})-f^*) \right] < M_{\delta} < +\infty.
\end{align*}
In the above equations, $M_{0}$ and $M_{\delta}$ are two constants that depend on the initial point and the constants in our assumptions (excluding $1/\mu$).
\end{lemma}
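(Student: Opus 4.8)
The plan is to read off all four bounds from the approximate descent inequality of Lemma~\ref{sufficient:decrease}, summed over $t$; the regimes $\delta=0$ and $\delta>0$ will differ only in how tightly the accumulated quadratic-error terms can be estimated, and the almost-sure versions will differ from the expectation versions only in that the martingale term $\sum_t \Pi_{\Delta,t}M_t$ must be controlled pathwise rather than simply dropped.

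\textbf{Expectation bounds.} For $\sup_T\Expect[\cdot]<\infty$ I would invoke Lemma~\ref{lem_sum'} directly, which gives $\Expect[\Pi_{\Delta,T}(f(w_T)-f^*)]=\mathcal{O}\big(\sum_{t\le T}\Expect\|\eta_{v_t}\circ g_t\|^2\big)+\mathcal{O}(1)$, so the task reduces to bounding $\sum_{t\le T}\Expect\|\eta_{v_t}\circ g_t\|^2$. Combining $\sqrt{v_{t,i}}+\mu\ge\sqrt{v_{t,i}}$, Property~\ref{property_1} ($t^\gamma v_{t,i}\ge\alpha_1 S_{t,i}$) and the hyperparameter constraint $\gamma\le 2\delta+1$ (so that $\eta_t^2 t^\gamma=t^{\gamma-1-2\delta}\le 1$), one gets the pointwise estimate $\|\eta_{v_t}\circ g_t\|^2\le\alpha_1^{-1}\sum_i g_{t,i}^2/S_{t,i}$, and the telescoping-logarithm inequality $g_{t,i}^2/S_{t,i}\le\ln(S_{t,i}/S_{t-1,i})$ then yields $\sum_{t\le T}\|\eta_{v_t}\circ g_t\|^2\le\alpha_1^{-1}\sum_i\ln(S_{T,i}/v)=\mathcal{O}(\ln S_T)$. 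For $\delta=0$ this is $\mathcal{O}(\ln T)$ in expectation by Lemma~\ref{vital_0}(b) (using $\ln(S_T/v)\le\ln(T+1)\zeta'$ with $\Expect\zeta'<\infty$, the latter coming from the negative moments of $\Pi_{\Delta,\infty}$ in Lemma~\ref{bounded_moment}), so dividing by $\ln^2(T+1)$ leaves a uniformly bounded quantity. For $\delta>0$ (hence $\gamma>1$) I would instead use that $\sum_k(1-\beta_{2,k})=\sum_k k^{-\gamma}<\infty$ forces $\inf_{t,i}v_{t,i}\ge c_v>0$ with $c_v$ independent of $\mu$, whence $\|\eta_{v_t}\circ g_t\|^2\le c_v^{-1}\eta_t^2\|g_t\|^2$ with $\sum_t\eta_t^2<\infty$; feeding the ABC inequality (Property~\ref{property_-1}) into Lemma~\ref{lem_sum'} then closes a short bootstrap and gives the uniform constant $M_\delta$.

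\textbf{Almost-sure bounds.} Since Lemma~\ref{lem_sum'} delivers only expectations, I would here sum the inequality of Lemma~\ref{sufficient:decrease} itself from $1$ to $T$, discard the nonnegative descent term, and use Property~\ref{property_2} to pass from $\hat f(u_{T+1})$ to $f(w_{T+1})$ and Property~\ref{property_3} to bound $\|\eta_{v_T}\circ m_T\|^2$ by a geometric average of past $\|\eta_{v_k}\circ g_k\|^2$, obtaining pathwise
\[
\Pi_{\Delta,T}(f(w_{T+1})-f^*)\ \lesssim\ 1+\sum_{t\le T}[\text{quadratic errors}]_t+N_T,\qquad N_T:=\sum_{t=1}^T\Pi_{\Delta,t}M_t .
\]
Here $\sum_t[\text{quadratic errors}]_t$ is $\mathcal{O}(\ln S_T)$ for $\delta=0$ and $\mathcal{O}(1)$ for $\delta>0$ as above, and $N_T$ is a martingale because $\Pi_{\Delta,t}$ is $\mathscr{F}_{t-1}$-measurable and $M_t$ is a martingale difference. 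The crucial step is controlling $N_T$ without a uniform gradient bound: I would bound its predictable quadratic variation $\langle N\rangle_T$ by a constant times the same quadratic-error sum, so that for $\delta>0$ one has $\langle N\rangle_\infty<\infty$ a.s. and $N_T$ converges, while for $\delta=0$ one has $\langle N\rangle_T=\mathcal{O}(\ln S_T)$ and a martingale strong law (or Burkholder's inequality along $T=2^k$ together with Borel--Cantelli) gives $N_T=o\big((\ln(T+1))^{1+\varepsilon}\zeta'\big)$ a.s. In either case $\Pi_{\Delta,T}(f(w_{T+1})-f^*)=\mathcal{O}(\ln S_T)$ (resp. $\mathcal{O}(1)$) a.s.; substituting $\ln S_T\le\ln(T+1)\zeta'$ with $\zeta'<\infty$ a.s. from Lemma~\ref{vital_0}(b) shows the ratio to $\ln^2(T+1)$ (resp. the quantity itself) is a.s. bounded in $T$, and a shift of index from $w_{T+1}$ to $w_t$ completes the argument.

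\textbf{Main obstacle.} The hard part is the almost-sure control of $N_T$ and of the logarithmic error accumulation in the $\delta=0$ regime: with no bound on $\|g_t\|$ available, neither term can be dominated by a constant, and one must instead exploit the polynomial envelope $\sqrt{S_T}\le(T+1)^4\Pi_{\Delta,\infty}^{-1}\zeta$ of Lemma~\ref{vital_0} together with the finite negative moments of $\Pi_{\Delta,\infty}$ from Lemma~\ref{bounded_moment}; this interplay is precisely why the weaker power $\ln^2(T+1)$, rather than $\ln(T+1)$, is used. A secondary difficulty is the cross term $\sum_i\Delta_{t,i}|\nabla_i f(u_t)m_{t-1,i}|$, which contains the unbounded $\nabla f(u_t)$: it must be split using $L$-smoothness (relating $\nabla f(u_t)$ to $\nabla f(w_t)$ through $\|u_t-w_t\|$) so that its gradient part is reabsorbed into the descent term $-\tfrac12\Pi_{\Delta,t}\sum_i\zeta_i(t)$ and its momentum part into the $\|\eta_{v_{t-1}}\circ m_{t-1}\|^2$ error.
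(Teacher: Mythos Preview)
Your expectation bounds are essentially on the right track, though with one slip: the hyperparameter range is $\gamma\in[1,2\delta+1]$, so $\delta>0$ does \emph{not} force $\gamma>1$, and when $\gamma=1$ we have $\prod_{k\le t}\beta_{2,k}\to 0$, so your claimed lower bound $\inf_{t,i}v_{t,i}\ge c_v>0$ fails. The paper's argument for $\delta>0$ (Eq.~\eqref{Gamma} in the proof of Lemma~\ref{lem_sum}) instead passes through Lemma~\ref{vital_0} to replace $(t+1)^{-2\delta}$ by $S_t^{-\delta/4}$, after which the telescoping integral $\int x^{-1-\delta/4}\,dx$ converges; you could substitute this and the expectation part would go through.

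The almost-sure argument, however, has a genuine circularity. You claim that the predictable quadratic variation $\langle N\rangle_T$ is controlled by ``a constant times the same quadratic-error sum''. But the dominant increment is $\Pi_{\Delta,t}M_{t,1}=\Pi_{\Delta,t}\sum_i\eta_{v_{t-1},i}\nabla_if(w_t)(\nabla_if(w_t)-g_{t,i})$, whose conditional second moment is of order $\sum_i\eta_{v_{t-1},i}^2(\nabla_if(w_t))^2\cdot\Expect[\|g_t\|^2\mid\mathscr{F}_{t-1}]$; by the ABC inequality this picks up an extra factor $(A+2L_fB)(f(w_t)-f^*)+C$. Thus $\langle N\rangle_T$ is bounded not by the quadratic-error sum alone but by that sum multiplied by $\sup_{t\le T}\Pi_{\Delta,t}(f(w_t)-f^*)$ --- exactly the quantity you are trying to bound. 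Without an a priori gradient or function-value bound there is no way to close this loop directly.

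The paper sidesteps the issue entirely by a different organization. Rather than summing Lemma~\ref{sufficient:decrease} and then fighting the martingale, it first divides the descent inequality by $\ln^2(t+1)$ (or by $1$ when $\delta>0$), takes conditional expectation with respect to $\mathscr{F}_{t-1}$ so that $M_t$ disappears, and is left with a recursion of Robbins--Siegmund type. The work is then to show that the scaled error $\Omega_t$ satisfies $\sum_t\Expect[\Omega_t]<\infty$, which via monotone convergence gives $\sum_t\Expect[\Omega_t\mid\mathscr{F}_{t-1}]<\infty$ a.s., and the Supermartingale Convergence Theorem yields the almost-sure bound with no pathwise martingale estimate at all. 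The crucial computation for $\delta=0$ is $\sum_t\Expect[\eta_{v_t,i}^2g_{t,i}^2/\ln^2(t+1)]<\infty$: one uses $\ln(t+1)\ge \ln(S_{t,i}/v)/\zeta'$ from Lemma~\ref{vital_0}(b) to change variables, after which the series is dominated by $\Expect[\zeta'^2]\int_2^\infty\frac{dx}{x\ln^2x}<\infty$. This change of variables --- not a martingale strong law --- is the real reason the exponent on the logarithm is $2$ rather than $1$.
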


\begin{lemma}\label{lem_S_T}
Consider the Adam algorithm defined in Algorithm \ref{Adam} and suppose that Assumption \ref{ass_1} - \ref{ass_3} hold. Then for any initial point, for all $T\ge 1,\ i\in[1,d],$ there is
\[\Expect({S^{3/4}_{T}})=\begin{cases} 
\mathcal{O}({T^{3/4}}), & \text{if } \delta \in (0,1], \\
\mathcal{O}({T^{3/4}}\ln^{3/2} T), & \text{if } \delta = 0,
\end{cases}\]
where constants hidden in $\mathcal{O}()$ depend on the initial point and the constants in our required assumptions (excluding $1/\mu$).
\end{lemma}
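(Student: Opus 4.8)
The plan is to decouple the random weight $\Pi_{\Delta,T}$ from $S_T$ by Hölder's inequality, reducing $\Expect[S_T^{3/4}]$ to (i) a uniform negative-moment bound on $\Pi_{\Delta,T}$ and (ii) a bound on the $\Pi_{\Delta,T}$-weighted quantity $\Expect[\Pi_{\Delta,T}S_T]$, which is directly accessible through the conditional ABC inequality and the Lyapunov estimates of the earlier lemmas. Concretely, I would write $S_T^{3/4}=\Pi_{\Delta,T}^{-3/4}\bigl(\Pi_{\Delta,T}S_T\bigr)^{3/4}$ and apply Hölder with conjugate exponents $4$ and $4/3$:
\[
\Expect\bigl[S_T^{3/4}\bigr]\le\bigl(\Expect[\Pi_{\Delta,T}^{-3}]\bigr)^{1/4}\,\bigl(\Expect[\Pi_{\Delta,T}S_T]\bigr)^{3/4}.
\]
By Lemma~\ref{bounded_moment} the factor $\Expect[\Pi_{\Delta,T}^{-3}]\le C_{v,d,3}<+\infty$ is bounded uniformly in $T$, so it remains only to show that $\Expect[\Pi_{\Delta,T}S_T]=\mathcal{O}(T)$ when $\delta\in(0,1]$ and $\mathcal{O}(T\ln^2 T)$ when $\delta=0$; raising these to the power $3/4$ produces exactly $\mathcal{O}(T^{3/4})$ and $\mathcal{O}(T^{3/4}\ln^{3/2}T)$.

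\textbf{Bounding $\Expect[\Pi_{\Delta,T}S_T]$.} Recall $S_T=dv+\sum_{t=1}^T\|g_t\|^2$. Since $\Delta_{t,i}\ge 0$ (Property~\ref{property_0}), every factor of $\Pi_{\Delta,t}$ lies in $(0,1]$, hence $\Pi_{\Delta,T}\le\Pi_{\Delta,t}\le 1$ for all $t\le T$; moreover $\overline{\Delta}_{\sqrt{\beta_1},k}$ is $\mathscr{F}_{k-1}$-measurable, so $\Pi_{\Delta,t}$ is $\mathscr{F}_{t-1}$-measurable. Using these facts together with the tower property and Property~\ref{property_-1} (the $L$-smooth form of the ABC inequality),
\[
\Expect[\Pi_{\Delta,T}S_T]\le dv+\sum_{t=1}^T\Expect\bigl[\Pi_{\Delta,t}\,\Expect[\|g_t\|^2\mid\mathscr{F}_{t-1}]\bigr]\le dv+\sum_{t=1}^T\Bigl((A+2L_fB)\,\Expect[\Pi_{\Delta,t}(f(w_t)-f^*)]+C\Bigr).
\]
The $\Pi_{\Delta,t}$-weighted suboptimality $\Expect[\Pi_{\Delta,t}(f(w_t)-f^*)]$ is uniformly bounded by a constant when $\delta>0$ and by $\mathcal{O}(\ln^2(t+1))$ when $\delta=0$; this is exactly Lemma~\ref{lem_g}, which itself follows by summing the approximate descent inequality of Lemma~\ref{sufficient:decrease} and invoking Lemma~\ref{lem_sum} to control the accumulated quadratic error. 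Summing over $t=1,\dots,T$ then gives $\Expect[\Pi_{\Delta,T}S_T]=\mathcal{O}(T)$ for $\delta\in(0,1]$ and $\mathcal{O}(T\ln^2 T)$ for $\delta=0$, and plugging this back into the Hölder bound closes the argument.

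\textbf{Main obstacle.} The delicate point is the very first step: $\Pi_{\Delta,T}$ and $S_T$ are strongly correlated (a trajectory with large accumulated gradients tends to have small $\Pi_{\Delta,T}$, and vice versa), so one cannot naively ``cancel'' the weight, and the entire reduction rests on having a finite negative moment $\Expect[\Pi_{\Delta,T}^{-3}]$ uniformly in $T$ — i.e.\ on Lemma~\ref{bounded_moment}, whose proof in turn relies on Burkholder's inequality and the refined moment estimates of Lemma~\ref{lem_important_-1.0}. A secondary subtlety is bookkeeping of exponents: the power $3/4$ is small enough that the Hölder exponent $4$ keeps $\Expect[\Pi_{\Delta,T}^{-3}]$ in the bounded range of Lemma~\ref{bounded_moment}, while the polylogarithmic factor carried by $\Expect[\Pi_{\Delta,T}S_T]$ transfers to the final estimate with precisely the exponent $3/2$ claimed in the statement.
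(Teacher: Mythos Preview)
Your proposal is correct and follows essentially the same route as the paper: the paper also writes $S_T^{3/4}=\Pi_{\Delta,T}^{-3/4}(\Pi_{\Delta,T}S_T)^{3/4}$, applies H\"older with exponents $(4,4/3)$, invokes Lemma~\ref{bounded_moment} for $\Expect[\Pi_{\Delta,T}^{-3}]\le C_{v,d,3}$, and then bounds $\Expect[\Pi_{\Delta,T}S_T]$ by using $\Pi_{\Delta,T}\le\Pi_{\Delta,t}\in\mathscr{F}_{t-1}$, Property~\ref{property_-1}, and Lemma~\ref{lem_g} to get $\mathcal{O}(T)$ (resp.\ $\mathcal{O}(T\ln^2 T)$). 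Your write-up is in fact slightly more careful than the paper's in making the measurability step $\Pi_{\Delta,T}\le\Pi_{\Delta,t}$ explicit before applying the tower property.
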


\begin{lemma}\label{lem_v_t}
Under Assumptions \ref{ass_1} - \ref{ass_3}, consider the Adam algorithm (Algorithm \ref{Adam}) with the hyperparameters specified in Subsection \ref{setting}. Then, for any $t \geq 1$, the following inequality holds
\[
\sup_{t \geq 1} \Expect[\Pi_{\Delta,t} \Sigma_{v_{t}}] <\begin{cases} 
\Expect[\Pi_{\Delta,2}\Sigma_{v_{1}}]+(A+2L_{f}B)M_{\delta}+C, & \text{if } \delta \in (0,1], \\
\Expect[\Pi_{\Delta,2}\Sigma_{v_{1}}]+(A+2L_{f}B)M_{0}\ln^{2}t+C, & \text{if } \delta = 0.
\end{cases}
\]
Furthermore, if $\lambda > 1$, then 
\[\sup_{t \geq 1} \Expect[\Pi_{\Delta,t}\Sigma_{v_{t}}] < \begin{cases} 
\big((A+2L_{f}B)M_{\delta}+C\big)\sum_{t=1}^{+\infty}\frac{1}{(t+1)^{\lambda}}, & \text{if } \delta \in (0,1] \\
\big((A+2L_{f}B)M_{0}+C\big)\sum_{t=1}^{+\infty}\frac{\ln^{2}t}{(t+1)^{\lambda}}, & \text{if } \delta = 0
\end{cases}<+\infty.\] Additionally,
\[\sup_{t \geq 1}\Sigma_{v_{t}} < +\infty \quad \text{a.s.}\]
\end{lemma}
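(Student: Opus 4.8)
The plan is to exploit the one-step recursion for $v_t$ coming from Algorithm~\ref{Adam}, namely $\Sigma_{v_t} = \beta_{2,t}\Sigma_{v_{t-1}} + (1-\beta_{2,t})\|g_t\|^2$, take conditional expectations, use Assumption~\ref{ass_3} (in the simplified form of Property~\ref{property_-1}) to bound $\Expect[\|g_t\|^2\mid\mathscr F_{t-1}]$ by $(A+2L_fB)(f(w_t)-f^*)+C$, and then combine with the already-established moment bounds on $\Pi_{\Delta,t}(f(w_t)-f^*)$ from Lemma~\ref{lem_g}. The key algebraic point is that $\Pi_{\Delta,t}\le\Pi_{\Delta,t-1}$ (since each factor $(1+\cdots\overline\Delta)^{-1}\le 1$), so multiplying the recursion by $\Pi_{\Delta,t}$ and using $\beta_{2,t}<1$ gives
\[
\Expect[\Pi_{\Delta,t}\Sigma_{v_t}] \le \Expect[\Pi_{\Delta,t-1}\Sigma_{v_{t-1}}] \cdot\beta_{2,t} + (1-\beta_{2,t})\Big((A+2L_fB)\Expect[\Pi_{\Delta,t}(f(w_t)-f^*)] + C\Big),
\]
wait --- more carefully, since $\Pi_{\Delta,t}$ is $\mathscr F_{t-1}$-measurable we can pull it inside the conditional expectation cleanly, and the bound on $\Expect[\Pi_{\Delta,t}(f(w_t)-f^*)]$ is $M_\delta$ (or $M_0\ln^2 t$) by Lemma~\ref{lem_g}. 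First I would set up this recursion, then observe it has the form $a_t \le \beta_{2,t} a_{t-1} + (1-\beta_{2,t}) b_t$ with $b_t$ bounded by the Lemma~\ref{lem_g} constants; a standard induction (or the observation that a convex-combination recursion keeps $a_t$ below $\max$ of its inputs, plus handling the $t=1$ initial term $\Expect[\Pi_{\Delta,2}\Sigma_{v_1}]$ separately) yields the first displayed bound.

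For the second bound (the ``$\lambda>1$'' refinement), I would instead telescope the recursion rather than just take the max: unrolling $\Pi_{\Delta,t}\Sigma_{v_t}$ back to the start expresses it as a weighted sum $\sum_{k} \big(\prod_{j>k}\beta_{2,j}\big)(1-\beta_{2,k})\|g_k\|^2 \cdot(\text{products of }\Pi\text{ factors})$, and with $\beta_{2,k}=1-k^{-\gamma}$ one gets $1-\beta_{2,k}=k^{-\gamma}$; here the paper's $\lambda$ presumably refers to an exponent built from $\gamma$ (and the decay of $\Pi_{\Delta,t}$) making the relevant partial products summable. I would bound $\Expect[\Pi_{\Delta,k}(f(w_k)-f^*)]$ termwise by $M_\delta$ (resp.\ $M_0\ln^2 k$), pull the sum out, and recognize the resulting series $\sum_k (k+1)^{-\lambda}$ (resp.\ $\sum_k \ln^2 k\,(k+1)^{-\lambda}$), which converges for $\lambda>1$.

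Finally, for the almost-sure statement $\sup_{t\ge1}\Sigma_{v_t}<+\infty$ a.s., I would combine the uniform-in-$t$ bound on $\Expect[\Pi_{\Delta,t}\Sigma_{v_t}]$ with the fact from Lemma~\ref{bounded_moment} that $\Pi_{\Delta,\infty}^{-1}<+\infty$ a.s.; since $\Sigma_{v_t} = \Pi_{\Delta,t}^{-1}\cdot(\Pi_{\Delta,t}\Sigma_{v_t}) \le \Pi_{\Delta,\infty}^{-1}\cdot\sup_t(\Pi_{\Delta,t}\Sigma_{v_t})$, it suffices to show $\sup_t \Pi_{\Delta,t}\Sigma_{v_t}<+\infty$ a.s. This last point is the main obstacle: a uniform bound on the \emph{expectations} $\Expect[\Pi_{\Delta,t}\Sigma_{v_t}]$ does not by itself give an a.s.\ finite supremum of the random variables. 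The clean way around it is to use the telescoped/monotone structure: because $\Pi_{\Delta,t}\Sigma_{v_t}$ can be dominated by a nondecreasing process (a sum of nonnegative terms whose total expectation is finite by the $\lambda>1$ bound), Lemma~\ref{lem_important_-1}-type monotone-convergence reasoning shows the limit exists and is finite a.s., hence the sup is finite. I expect the bookkeeping of exactly which products of $\Pi$-factors and $\beta_{2,j}$-factors appear, and verifying the claimed value of $\lambda$, to be the fussiest part of the argument.
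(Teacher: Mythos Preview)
Your proposal is correct and follows essentially the same route as the paper: set up the one-step recursion for $\Sigma_{v_t}$, multiply by $\Pi_{\Delta,t}$, invoke Property~\ref{property_-1} and Lemma~\ref{lem_g}, then iterate. A few clarifications will save you work. First, $\lambda$ is simply $\gamma$ (the paper is notationally inconsistent here); there is no hidden exponent built from $\Pi_{\Delta,t}$. Second, for $\gamma>1$ you do not need to track the products $\prod_{j>k}\beta_{2,j}$: the paper just uses $\beta_{2,t}\le 1$ to get $\Pi_{\Delta,t+1}\Sigma_{v_{t+1}}\le \Pi_{\Delta,t}\Sigma_{v_t}+\frac{\Pi_{\Delta,t}}{(t+1)^{\gamma}}\|g_t\|^2$ and telescopes directly, yielding the series $\sum (t+1)^{-\gamma}$. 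Third, for the almost-sure statement the paper takes conditional expectations in this same inequality and invokes the Supermartingale Convergence Theorem (Robbins--Siegmund), which immediately gives almost-sure convergence of $\Pi_{\Delta,t}\Sigma_{v_t}$ once the summability of the conditional increments is checked via Lemma~\ref{lem_g}; your dominating-sum-plus-monotone-convergence argument also works and is essentially the same idea unwrapped, but the supermartingale theorem is the cleaner packaging.
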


\begin{lemma}\label{lem_sum_1}
Under Assumption \ref{ass_1} - \ref{ass_3}, consider the Adam algorithm (Algorithm \ref{Adam}) with hyperparameters in Subsection \ref{setting} with $\gamma>1,\  \delta>0.$ Then for any initial point, the following results hold
\begin{align*}
    \sum_{t=1}^{+\infty}\eta_{t}\|\nabla f(w_{t})\|^{2}<+\infty\ \ \text{a.s.,}\ \ \sum_{t=1}^{+\infty}\eta_{t}\|\nabla f(u_{t})\|^{2}<+\infty\ \ \text{a.s.},\ \ \text{and}\ \ 
\sum_{t=1}^{+\infty}\|\eta_{v_{t}}\circ m_{t}\|^{2} <+\infty\ \ \text{a.s.} 
\end{align*}
\end{lemma}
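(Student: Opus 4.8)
\textbf{Proof proposal for Lemma \ref{lem_sum_1}.}

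The plan is to combine the Approximate Descent Inequality (Lemma \ref{sufficient:decrease}) with the moment bounds on $\Pi_{\Delta,t}^{-1}$ (Lemma \ref{bounded_moment}) and the summation estimates (Lemma \ref{lem_sum'}, Lemma \ref{lem_sum}) to show that the relevant series have finite expectation after reweighting by $\Pi_{\Delta,t}$, and then to transfer this to an almost-sure statement by a Robbins--Siegmund / nonnegative-supermartingale type argument. First I would rewrite the conclusion of Lemma \ref{sufficient:decrease} in the telescoped form: summing Eq. \ref{adam_16} from $t=1$ to $T$ and rearranging, the nonnegative quantity $\Pi_{\Delta,T}\hat f(u_{T+1})$ plus $\tfrac12\sum_{t=1}^{T}\Pi_{\Delta,t}\sum_i\zeta_i(t)$ is bounded by $\hat f(u_1)$ plus the accumulated quadratic-error terms plus the martingale term $\sum_t\Pi_{\Delta,t}M_t$. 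Since $\zeta_i(t)=\eta_{v_t,i}(\nabla_i f(w_t))^2$ and, by Property \ref{property_1} together with the hyperparameter choice, $\eta_{v_t,i}\gtrsim \eta_t/\sqrt{S_{t,i}}$ is comparable to $\eta_t$ up to the $\Pi_{\Delta,t}$-controllable factor, the descent sum controls $\sum_t\eta_t\Pi_{\Delta,t}\|\nabla f(w_t)\|^2$. The key inputs are: (i) Lemma \ref{lem_sum'} and Lemma \ref{lem_sum}, which show $\sum_{t=1}^{T}\Expect[\Pi_{\Delta,t}\sum_i\zeta_i(t)]$ and the expectations of the quadratic-error terms are all $\mathcal{O}\big(\sum_t\Expect\|\eta_{v_t}\circ g_t\|^2\big)+\mathcal{O}(1)$; and (ii) a bound showing $\sum_{t=1}^{\infty}\Expect\|\eta_{v_t}\circ g_t\|^2<\infty$ when $\gamma>1,\delta>0$ — this follows because $\eta_{v_t,i}^2 g_{t,i}^2 \le \eta_t^2 \frac{g_{t,i}^2}{(\sqrt{v_{t,i}}+\mu)^2}$, and combining Property \ref{property_1} ($t^\gamma v_{t,i}\ge\alpha_1 S_{t,i}$) with the standard telescoping estimate $\sum_t \frac{g_{t,i}^2}{S_{t,i}}\le 1+\ln(S_{T,i}/v)$ and $\eta_t^2 = t^{-1-2\delta}$, the series $\sum_t \eta_t^2 t^\gamma \frac{g_{t,i}^2}{S_{t,i}}$ is summable in expectation once $\gamma<1+2\delta$ (covered by $\gamma\in[1,2\delta+1]$) after controlling $\Expect[\ln S_{T}]$ via Lemma \ref{lem_v_t} or Lemma \ref{lem_S_T}.

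With the expectation bounds in hand, I would deduce the three almost-sure summability statements. For $\sum_t\|\eta_{v_t}\circ m_t\|^2$: Property \ref{property_3} gives $m_{t,i}^2 \le \beta_1 m_{t-1,i}^2 + (1-\beta_1)g_{t,i}^2$, so $\eta_{v_t,i}^2 m_{t,i}^2$ is dominated (by monotonicity of $\eta_{v_t,i}$ from Property \ref{property_0} and a geometric-sum / Lemma \ref{exchange} argument) by a weighted average of $\eta_{v_k,i}^2 g_{k,i}^2$ for $k\le t$; summing over $t$ and swapping the order of summation shows $\sum_t\|\eta_{v_t}\circ m_t\|^2 \le \frac{1}{1-\beta_1}\sum_t\|\eta_{v_t}\circ g_t\|^2 + (\text{lower-order})$, which has finite expectation, hence is finite a.s. For $\sum_t\eta_t\|\nabla f(w_t)\|^2$: from the telescoped descent inequality, $\sum_{t=1}^{T}\Pi_{\Delta,t}\eta_{v_t,i}(\nabla_i f(w_t))^2$ is bounded in expectation uniformly in $T$; since the summands are nonnegative and $\eta_{v_t,i}\ge c\,\eta_t\,\Pi_{\Delta,t}^{?}$... more carefully, I would use Lemma \ref{lem_sum} directly — it already states $\sum_t\Expect[\Pi_{\Delta,t}\sum_i\zeta_i(t)]\le C_{4,\delta}<\infty$ for $\delta>0$ — so by monotone convergence $\sum_t\Pi_{\Delta,t}\sum_i\zeta_i(t)<\infty$ a.s., and since $\Pi_{\Delta,t}\to\Pi_{\Delta,\infty}^{-1,-1}$... i.e. $\Pi_{\Delta,t}$ decreases to a strictly positive a.s.\ limit (as $\Pi_{\Delta,\infty}^{-1}<\infty$ a.s.\ by Lemma \ref{bounded_moment}), we can divide through by $\inf_t\Pi_{\Delta,t}=\Pi_{\Delta,\infty}>0$ to get $\sum_t\sum_i\zeta_i(t)<\infty$ a.s. Finally, $\zeta_i(t)=\eta_{v_t,i}(\nabla_i f(w_t))^2$ and $\eta_{v_t,i}\ge \eta_t/(\sqrt{v_{t,i}}+\mu)$ with $\sup_t v_{t,i}<\infty$ a.s.\ (Lemma \ref{lem_v_t}) gives $\eta_{v_t,i}\ge c(\omega)\eta_t$ along almost every trajectory, whence $\sum_t\eta_t\|\nabla f(w_t)\|^2 = \sum_t\eta_t\sum_i(\nabla_i f(w_t))^2 \le c(\omega)^{-1}\sum_t\sum_i\zeta_i(t)<\infty$ a.s.

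For the $u_t$ version $\sum_t\eta_t\|\nabla f(u_t)\|^2<\infty$ a.s., I would relate $\nabla f(u_t)$ to $\nabla f(w_t)$ using $L$-smoothness: $\|\nabla f(u_t)-\nabla f(w_t)\|\le L_f\|u_t-w_t\| = L_f\frac{\beta_1}{1-\beta_1}\|\eta_{v_{t-1}}\circ m_{t-1}\|$, so $\eta_t\|\nabla f(u_t)\|^2 \le 2\eta_t\|\nabla f(w_t)\|^2 + 2\eta_t L_f^2\frac{\beta_1^2}{(1-\beta_1)^2}\|\eta_{v_{t-1}}\circ m_{t-1}\|^2$; the first series is summable a.s.\ by the previous step and the second by the already-established $\sum_t\|\eta_{v_t}\circ m_t\|^2<\infty$ a.s.\ (using $\eta_t\le 1$). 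The main obstacle I anticipate is making the reduction ``finite expectation of $\sum_t\Pi_{\Delta,t}(\cdot)$ $\Rightarrow$ a.s.\ finiteness of $\sum_t(\cdot)$'' fully rigorous: this needs that $\Pi_{\Delta,t}$ stays bounded away from $0$ along a.e.\ trajectory, which is exactly the content of $\Pi_{\Delta,\infty}^{-1}<\infty$ a.s.\ from Lemma \ref{bounded_moment}, and one must be careful that $\Pi_{\Delta,t}$ is monotonically decreasing in $t$ (true since each factor $(1+(\cdots)\overline{\Delta}_{\sqrt{\beta_1},k})^{-1}\le 1$) so that $\inf_t\Pi_{\Delta,t}=\lim_t\Pi_{\Delta,t}=\Pi_{\Delta,\infty}>0$ a.s.; similarly one needs $\sup_t v_{t,i}<\infty$ a.s., which is where invoking Lemma \ref{lem_v_t} is essential. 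The rest is bookkeeping of nonnegative series and interchanging sums, routine given the machinery already assembled.
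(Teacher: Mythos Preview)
Your proposal is correct and follows essentially the same route as the paper: use Lemma \ref{lem_sum} to get $\sum_{t}\Expect[\Pi_{\Delta,t}\sum_i\zeta_i(t)]<\infty$ for $\delta>0$, pass to an a.s.\ statement by monotone convergence, then strip off the $\Pi_{\Delta,t}$ and $\sqrt{v_{t,i}}$ factors using $\Pi_{\Delta,\infty}^{-1}<\infty$ a.s.\ (Lemma \ref{bounded_moment}) and $\sup_t\Sigma_{v_t}<\infty$ a.s.\ (Lemma \ref{lem_v_t}, which indeed requires $\gamma>1$); the $m_t$-series is handled via Property \ref{property_3}/Lemma \ref{exchange} and the finiteness of $\sum_t\Expect\|\eta_{v_t}\circ g_t\|^2$ (this is exactly Eq.~\ref{Gamma} in the paper's proof of Lemma \ref{lem_sum}, so you can cite it rather than re-derive), and the $u_t$-series via the $L$-smooth comparison $\|\nabla f(u_t)-\nabla f(w_t)\|\le L_f\frac{\beta_1}{1-\beta_1}\|\eta_{v_{t-1}}\circ m_{t-1}\|$. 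The only cosmetic difference is that the paper first passes through the intermediate quantity $\eta_{t-1}\|\nabla f(w_t)\|^2/(\sqrt{\Sigma_{v_{t-1}}}+\mu)$ before removing the a.s.-bounded factors, whereas you work with $\sum_i\zeta_i(t)$ directly; both are equivalent.
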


\subsection{Proofs of the Lemmas and the Theorems}
\subsubsection{Proof of Lemma \ref{bounded_moment}}
\begin{proof}
Using the exponential-logarithmic substitution and the result of Lemma \ref{lem_important_-1.0}, the lemma follows immediately.
\end{proof}
\subsubsection{Proofs of Lemma \ref{sufficient:decrease}}\label{proof of sufficient}
\begin{proof}
By the $L$-smoothness in Assumption \ref{ass_2}, we have:
\begin{align*}
f(u_{t+1})-f(u_{t})\le \nabla f(u_{t})^{\top}(u_{t+1}-u_{t})+\frac{L_{f}}{2}\|u_{t+1}-u_{t}\|^{2}.     
\end{align*}
Then, by substituting the iterative formula for \( u_{t} \) from Equation (\ref{u_{t}}) into the above inequality, we obtain
\begin{align}\label{adam_-1}
f(u_{t+1})-f(u_{t})\le & -\sum_{i=1}^{d}\eta_{v_{t},i}\nabla_{i} f(u_{t})g_{t,i}+\frac{\beta_{1}}{1-\beta_{1}}\sum_{i=1}^{d}\Delta_{t,i}\nabla_{i} f(u_{t})m_{t-1,i}+{L_{f}}\sum_{i=1}^{d}\eta_{v_{t},i}^{2}g_{t,i}^{2}\notag\\&+L_{f}\Big(\frac{\beta_{1}}{1-\beta_{1}}\Big)^{2}\sum_{i=1}^{d}\Delta_{t,i}^{2}m_{t-1,i}^{2}\notag\\
\mathop{=}^{(a)} &\underbrace{-\sum_{i=1}^{d}\eta_{v_{t},i}\nabla_{i}f(w_{t})g_{t,i}}_{\Theta_{t,1}}+\underbrace{\sum_{i=1}^{d}(\eta_{v_{t},i}(\nabla_{i}f(w_{t})-\nabla_{i}f(u_{t}))g_{t,i})}_{\Theta_{t,2}}\notag\\&+\frac{\beta_{1}}{1-\beta_{1}}\underbrace{\sum_{i=1}^{d}\Delta_{t,i}\nabla_{i} f(u_{t})m_{t-1,i}}_{\Theta_{t,3}}+{L_{f}}\sum_{i=1}^{d}\eta_{v_{t},i}^{2}g_{t,i}^{2}\notag\\&+L_{f}\Big(\frac{\beta_{1}}{1-\beta_{1}}\Big)^{2}\underbrace{\sum_{i=1}^{d}\Delta_{t,i}^{2}m_{t-1,i}^{2}}_{\Theta_{t,4}}.
\end{align}
Step $(a)$ employs the identity $\nabla_{i} f(u_{t})=\nabla_{i} f(w_{t}) + \nabla_{i} f(u_{t}) - \nabla_{i} f(w_{t}).$ Next, we handle $\Theta_{t,1},$ $\Theta_{t,2},$ $\Theta_{t,3}$ and $\Theta_{t,4}$ separately. First, for $\Theta_{t,1},$ we use the following identity.
\begin{align}\label{adam_0}
\Theta_{t,1}&=-\sum_{i=1}^{d}\eta_{v_{t},i}\nabla_{i}f(w_{t})g_{t,i}=-\sum_{i=1}^{d}\eta_{v_{t-1},i}\nabla_{i}f(w_{t})g_{t,i}+\sum_{i=1}^{d}\Delta_{t,i}\nabla_{i}f(w_{t})g_{t,i}\notag\\&=-\sum_{i=1}^{d}\underbrace{\eta_{v_{t-1},i}(\nabla_{i}f(w_{t}))^{2}}_{\zeta_{i}(t)}+\underbrace{\sum_{i=1}^{d}\Delta_{t,i}\nabla_{i}f(w_{t})g_{t,i}}_{\Theta_{t,1,1}}+\underbrace{\sum_{i=1}^{d}\eta_{v_{t-1},i}\nabla_{i}f(w_{t})(\nabla_{i} f(w_{t})-g_{t,i})}_{M_{t,1}},
\end{align}
where $\Delta_{t,i}$ in the above equality represents the $i$-th component of the vector $\Delta_{t}$, which is defined in Equation (\ref{u_{t}}). In this way, we decompose $\Theta_{1}$ into a descent term $-\sum_{i=1}^{d}\zeta_{i}(t)$, an error term $\Theta_{t,1,1}$, and a martingale difference term $M_{t,1}$. We will further scale and control the error term $\Theta_{t,1,1}$. Specifically, we have
\begin{align}\label{adam_2}
\Theta_{t,1,1}=&\ \sum_{i=1}^{d}\Expect\big(\Delta_{t,i}\nabla_{i}f(w_{t})g_{t,i}\mid\mathscr{F}_{t-1}\big)\notag\\
&+\underbrace{\sum_{i=1}^{d}\big(\Delta_{t,i}\nabla_{i}f(w_{t})g_{t,i}-\Expect\big[\Delta_{t,i}\nabla_{i}f(w_{t})g_{t,i}\mid\mathscr{F}_{t-1}\big]}_{M_{t,2}}\big)\notag\\
\mathop{<}^{(a)}&\ \sum_{i=1}^{d}\sqrt{\eta_{v_{t-1},i}}\nabla_{i}f(w_{t})\Expect\big[\sqrt{\Delta_{t,i}}{g_{t,i}}\mid\mathscr{F}_{t-1}\big]+M_{t,2}\notag\\
\mathop{\le}^{(b)}&\  \frac{1}{2}\sum_{i=1}^{d}\eta_{v_{t-1},i}(\nabla_{i} f(w_{t}))^{2}+\frac{1}{2}\sum_{i=1}^{d}\Expect^{2}\big[\sqrt{\Delta_{t,i}}{g_{t,i}}\mid\mathscr{F}_{t-1}\big]+M_{t,2}\notag\\
\mathop{\le}^{(c)}&\  \frac{1}{2}\sum_{i=1}^{d}\zeta_{i}(t)+\frac{1}{2}\sum_{i=1}^{d}\Expect[g^{2}_{t,i}\mid\mathscr{F}_{t-1}]\cdot\Expect[\Delta_{t,i}\mid\mathscr{F}_{t-1}]+M_{t,2}\notag\\
\le&\  \frac{1}{2}\sum_{i=1}^{d}\zeta_{i}(t)+\frac{1}{2}\sum_{i=1}^{d}\Expect[g^{2}_{t,i}\mid\mathscr{F}_{t-1}]\cdot\Expect[\Delta_{t,i}\mid\mathscr{F}_{t-1}]+M_{t,2}\notag\\
\le&\ \frac{1}{2}\sum_{i=1}^{d}\zeta_{i}(t)+\frac{1}{2}\Bigg(\sum_{i=1}^{d}\Expect[g^{2}_{t,i}\mid\mathscr{F}_{t-1}]\Bigg)\cdot\Bigg(\sum_{i=1}^{d}\Expect[\Delta_{t,i}\mid \mathscr{F}_{t-1}]\Bigg)+M_{t,2}\notag\\
\mathop{\le}^{(d)}&\ \frac{1}{2}\sum_{i=1}^{d}\zeta_{i}(t)+\frac{1}{2}\Bigg((A+2L_{f}B)f(w_{t})+C\Bigg)\cdot\Bigg(\sum_{i=1}^{d}\Expect[\Delta_{t,i}\mid \mathscr{F}_{t-1}]\Bigg)+M_{t,2}\notag\\
=&\ \frac{1}{2}\sum_{i=1}^{d}\zeta_{i}(t)+\frac{1}{2}(A+2L_{f}B)f(w_{t})\cdot\Bigg(\sum_{i=1}^{d}\Expect[\Delta_{t,i}\mid \mathscr{F}_{t-1}]\Bigg)\notag\\
&+C\Bigg(\sum_{i=1}^{d}\Expect[\Delta_{t,i}\mid \mathscr{F}_{t-1}]\Bigg)+M_{t,2}\notag\\
=&\ \frac{1}{2}\sum_{i=1}^{d}\zeta_{i}(t)+\frac{1}{2}(A+2L_{f}B)f(w_{t})\cdot\Bigg(\underbrace{\sum_{i=1}^{d}\Expect[\Delta_{t,i}\mid \mathscr{F}_{t-1}]}_{\overline{\Delta}_{t}}\Bigg)+C\sum_{i=1}^{d}\Delta_{t,i}\notag\\
&+\underbrace{C\Bigg(\sum_{i=1}^{d}\Big(\Expect[\Delta_{t,i}\mid\mathscr{F}_{t-1}]-\Delta_{t,i}\Big)\Bigg)}_{M_{t,3}}+M_{t,2}.
\end{align}
In the above derivation, in step $(a)$, we utilized the property of conditional expectation, which states that for random variables \(X \in \mathscr{F}_{n-1}\) and \(Y \in \mathscr{F}_{n}\), we have \(\Expect[XY|\mathscr{F}_{n-1}] = X\Expect[Y|\mathscr{F}_{n-1}]\). Additionally, note that \(\Delta_{t,i} = \sqrt{\Delta_{t,i}}\sqrt{\Delta_{t,i}} < \sqrt{\eta_{v_{t-1}}}\sqrt{\Delta_{t,i}}\) (due to Property \ref{property_0}, we know \(\Delta_{t,i} \ge 0\), so taking the square root is well-defined). In step $(b)$, we employed the \emph{AM-GM} inequality, which states \(ab \le \frac{a^{2} + b^{2}}{2}\). In step $(c)$, we used the \emph{Cauchy-Schwarz} inequality for conditional expectations, namely \(\Expect[XY|\mathscr{F}_{n-1}] \le \sqrt{\Expect[X^{2}|\mathscr{F}_{n-1}] \Expect[Y^{2}|\mathscr{F}_{n-1}]}.\) For step $(d)$, we used Property \ref{property_-1}. Specifically, we have
\begin{align*}
\sum_{i=1}^{d}\Expect[g_{t,i}^{2}|\mathscr{F}_{t-1}]=\Expect[\|g_{t}\|^{2}|\mathscr{F}_{t-1}]\le (A+2L_{f}B)f(w_{t})+C.
\end{align*}
Substituting the estimate of $\Theta_{t,1,1}$ into Equation (\ref{adam_2}), we obtain
\begin{align*}
\Theta_{t,1}&=-\frac{1}{2}\sum_{i=1}^{d}\zeta_{i}(t)+\frac{A+2L_{f}B}{2}\overline{\Delta}_{t}\cdot f(w_t)+C\sum_{i=1}^{d}\Delta_{t,i}+\underbrace{M_{t,1}+M_{t,2}+M_{t,3}}_{M_{t}}.
\end{align*}
Then, we use Property \ref{property_2} to replace $f(w_t)$ with $f(u_t)$ to obtain
\begin{align}\label{adam_3}
\Theta_{t,1}&=-\frac{1}{2}\sum_{i=1}^{d}\zeta_{i}(t)+\frac{(A+2L_{f}B)(L_{f}+1)}{2}\overline{\Delta}_{t}\cdot f(u_t)+C\sum_{i=1}^{d}\Delta_{t,i}\notag\\&+\frac{(L_{f}+1)\beta_{1}^{2}}{2(1-\beta_1)^{2}}\|\eta_{v_{t-1}}\circ m_{t-1}\|^{2}+M_{t}.
\end{align}
Next, we handle the term $\Theta_{t,2}$ through the following derivation.
\begin{align}\label{adam_5}
\Theta_{t,2}=&\ \frac{1}{2}\sum_{i=1}^{d}(\eta_{v_{t},i}(\nabla_{i}f(w_{t})-\nabla_{i}f(u_{t}))g_{t,i})\notag\\
\le&\ \sum_{i=1}^{d}\eta^{2}_{v_{t},i}g^{2}_{t,i}+\frac{1}{2}\sum_{i=1}^{d}(\nabla_{i}f(w_{t})-\nabla_{i}f(u_{t}))^{2}\notag\\
=&\ \sum_{i=1}^{d}\eta^{2}_{v_{t},i}g^{2}_{t,i}+\frac{1}{2}\|\nabla f(w_{t})-\nabla f(u_t)\|^{2}\notag\\ 
\le&\ \sum_{i=1}^{d}\eta^{2}_{v_{t},i}g^{2}_{t,i}+\frac{L_{f}^{2}}{2}\|w_{t}-u_{t}\|^{2}\notag\\
=&\ \sum_{i=1}^{d}\eta^{2}_{v_{t},i}g^{2}_{t,i}+\frac{\beta_{1}^{2}L_{f}^{2}}{2(1-\beta_{1})^{2}}\|\eta_{v_{t-1}}\circ m_{t-1}\|^{2}.
\end{align}
Next, we handle the term $\Theta_{t,3}$. We have
{\begin{align}\label{adam_11}
\Theta_{t,3}&=\sum_{i=1}^{d}\Delta_{t,i}\nabla_{i} f(u_{t})m_{t-1,i}\le\sum_{i=1}^{d}\Delta_{t,i}|\nabla_{i} f(u_{t})m_{t-1,i}|.
\end{align}}
For $\Theta_{t,4},$ because $\Delta_{t,i}\le \eta_{v_{t-1},i}$, we obtain that
\begin{align}\label{adam_15}
\Theta_{t,4}=\sum_{i=1}^{d}\Delta_{t,i}^{2}m_{t-1,i}^{2}<\sum_{i=1}^{d}\eta^{2}_{v_{t-1},i}m_{t-1,i}^{2}=\|\eta_{v_{t-1}}\circ m_{t-1}\|^{2}.
\end{align}
Finally, substituting the estimates of $\Theta_{t,1}$ from Equation (\ref{adam_3}), $\Theta_{t,2}$ from Equation (\ref{adam_5}), $\Theta_{t,3}$ from Equation (\ref{adam_11}), and $\Theta_{t,4}$ from Equation (\ref{adam_15}) back into Equation (\ref{adam_-1}), we obtain
\begin{align*}
&\bigg(\underbrace{{f(u_{t+1})-f^*+C\sum_{i=1}^{d}\eta_{v_{t},i}}}_{\hat{f}(u_{t+1})}\bigg)-\bigg(\underbrace{{f(u_{t})-f^*+C\sum_{i=1}^{d}\eta_{v_{t-1},i}}}_{\hat{f}(u_{t})}\bigg)\le-\frac{1}{2}\sum_{i=1}^{d}\zeta_{i}(t)+C_{1}\overline{\Delta}_{t}\cdot f(u_{t})\notag\\&+C_{2}\|{\eta_{v_{t-1}}}\circ m_{t-1}\|^{2}+\sum_{i=1}^{d}\Delta_{t,i}|\nabla_{i} f(u_{t})m_{t-1,i}|+(L_{f}+1)\sum_{i=1}^{d}\eta_{v_{t},i}^{2}g_{t,i}^{2}+M_{t},
\end{align*}
where
\begin{align}\label{adam_20}
&C_{1}:=\frac{(A+2L_{f}B)(L_{f}+1)}{2},\ \ C_{2}:=\frac{\beta_{1}^{2}L_{f}^{2}}{2(1-\beta_{1})^{2}}+L_{f}\Big(\frac{\beta_{1}}{1-\beta_{1}}\Big)^{2}.
\end{align}
To the second term on the right side of the above inequality, we apply the inequality $\$[f(u_{t}) < f(u_{t})-f^* + C\sum_{i=1}^{d}\eta_{v_{t-1},i}=\hat{f}(u_{t})$ and then move the expanded term to the left side of the inequality. This obtains
\begin{align*}
\hat{f}(u_{t+1})-(1+C_{1}\overline{\Delta}_{t})\hat{f}(u_{t})\le& -\frac{1}{2}\sum_{i=1}^{d}\zeta_{i}(t)+C_{1}\overline{\Delta}_{t}\cdot f(u_{t})\notag\\&\ +C_{2}\|{\eta_{v_{t-1}}}\circ m_{t-1}\|^{2}+\sum_{i=1}^{d}\Delta_{t,i}|\nabla_{i} f(u_{t})m_{t-1,i}|\\&\ +(L_{f}+1)\sum_{i=1}^{d}\eta_{v_{t},i}^{2}g_{t,i}^{2}+M_{t}.
\end{align*}
Next, we define
\begin{align*}
    \overline{\Delta}_{\sqrt{\beta_{1}},k} := \sum_{i=1}^{d}\Expect\left[\sum_{t=k}^{+\infty} (\sqrt{\beta_{1}})^{t-k} \Delta_{t,i}\bigg|\mathscr{F}_{k-1}\right].
\end{align*}
Observe that
\[1 + C_{1} \overline{\Delta}_{t} \leq 1 + \left(\frac{D_1}{1 - \sqrt{\beta_{1}}} + 1\right) \overline{\Delta}_{\sqrt{\beta_{1}},t},\]
where \(D_{1}\) is defined in Lemma \ref{property_3.510}. Thus, we have
\begin{align*}
\hat{f}(u_{t+1})-\left(1 + \left(\frac{D_1}{1 - \sqrt{\beta_{1}}} + 1\right) \overline{\Delta}_{\sqrt{\beta_{1}},t}\right)\hat{f}(u_{t})\le& -\frac{1}{2}\sum_{i=1}^{d}\zeta_{i}(t)+C_{1}\overline{\Delta}_{t}\cdot f(u_{t})\notag\\&\ +C_{2}\|{\eta_{v_{t-1}}}\circ m_{t-1}\|^{2}+\sum_{i=1}^{d}\Delta_{t,i}|\nabla_{i} f(u_{t})m_{t-1,i}|\\&\ +(L_{f}+1)\sum_{i=1}^{d}\eta_{v_{t},i}^{2}g_{t,i}^{2}+M_{t}.
\end{align*}
Next, we construct an auxiliary variable
\begin{align*}
\Pi_{\Delta,t}:=\prod_{k=1}^{t}\left(1+\left(\frac{D_1}{1-\sqrt{\beta_{1}}}+1\right)\overline{\Delta}_{\sqrt{\beta_{1}},k}\right)^{-1}\ (t\ge 1),\ \Pi_{\Delta,0}:=1.
\end{align*}
Multiplying both sides of the above inequality by $\Pi_{\Delta,t},$ we obtain
\begin{align*}
\Pi_{\Delta,t}\hat{f}(u_{t+1})-\Pi_{\Delta,t-1}\hat{f}(u_{t})\le &-\frac{1}{2}\Pi_{\Delta,t}\sum_{i=1}^{d}\zeta_{i}(t)\notag+C_{2}\|{\eta_{v_{t-1}}}\circ m_{t-1}\|^{2}+\sum_{i=1}^{d}\Delta_{t,i}|\nabla_{i} f(u_{t})m_{t-1,i}|\\&\ +(L_{f}+1)\sum_{i=1}^{d}\eta_{v_{t},i}^{2}g_{t,i}^{2}+\Pi_{\Delta,t}M_{t}.
\end{align*}
With the inequality, we complete the proof.
\end{proof}

\subsubsection{Proof of Lemma \ref{vital_1}}
\begin{proof}
For any $\phi \in \mathbb{R}$, we consider $\frac{\sqrt{S_{T}}}{(T+1)^{\phi}}$. We have
\begin{align*}
\frac{\sqrt{S_{T}}}{(T+1)^{\phi}}&=\frac{{S_{T}}}{(T+1)^{\phi}\sqrt{S_{T}}}=\frac{S_{0}+\sum_{t=1}^{T}\|g_{t}\|^{2}}{(T+1)^{\phi}\sqrt{S_{T}}} =\frac{S_{0}}{(T+1)^{\phi}\sqrt{S_{T}}} +\sum_{t=1}^{T}\frac{\|g_{t}\|^{2}}{(T+1)^{\phi}\sqrt{S_{T}}}\\&\le \frac{S_{0}}{(T+1)^{\phi}\sqrt{S_{T}}} +\sum_{t=1}^{T}\frac{\|g_{t}\|^{2}}{(T+1)^{\phi}\sqrt{S_{T}}}\le {\sqrt{S_{0}}}+\sum_{t=1}^{T}\frac{\|g_{t}\|^{2}}{(t+1)^{\phi}\sqrt{S_{t-1}}}\\&={\sqrt{dv}}+\sum_{t=1}^{T}\frac{\|g_{t}\|^{2}}{(t+1)^{\phi}\sqrt{S_{t-1}}}.
\end{align*}
By multiplying both sides of the above inequality by \(\Pi_{\Delta,T}\) and noting the monotonicity of \(\{\Pi_{\Delta,t}\}_{t \geq 1}\) as well as the fact that \(\Pi_{\Delta,T} \leq 1\) for all \(T \geq 1\), the lemma follows.
\end{proof}

\subsubsection{Proof of Lemma \ref{vital_0}}
\begin{proof}
We take $\phi = 4$ in Lemma \ref{vital_1} and bound the expectation of the partial sum $\sum_{t=1}^{T}\Lambda_{4,t}$. We have
\begin{align}\label{dsacdqewed}
\Expect\Bigg[\sum_{t=1}^{T}\Pi_{\Delta,t}\Lambda_{4,t}\Bigg]&=\sum_{t=1}^{T}\Expect[\Pi_{\Delta,t}\Lambda_{4,t}]=\sum_{t=1}^{T}\Expect\Bigg[\frac{\Pi_{\Delta,t}\|g_{t}\|^{2}}{(t+1)^{4}\sqrt{S_{t-1}}}\Bigg]=\sum_{t=1}^{T}\Expect\Bigg[\frac{\Pi_{\Delta,t}\Expect[\|g_{t}\|^{2}|\mathscr{F}_{t-1}]}{(t+1)^{4}\sqrt{S_{t-1}}}\Bigg]\notag\\&\mathop{\le}^{\text{Property \ref{property_-1}}}\sum_{t=1}^{T}\Expect\Bigg[\frac{(A+2L_{f}B)\Pi_{\Delta,t}(f(w_{t})-f^*)+C}{(t+1)^{4}\sqrt{S_{t-1}}}\Bigg]\notag\\&\le C_{3}\sum_{t=1}^{T}\frac{1}{(t+1)^{4}}\Expect\left[\Pi_{\Delta,t}(f(w_{t})-f^*)\right]+C_{4}\sum_{t=1}^{T}\frac{1}{(t+1)^{4}},
\end{align}
where $$C_{3}:=\frac{A+2L_{f}B}{\sqrt{S_{0}}},\ C_{4}:=\frac{C}{\sqrt{S_{0}}}.$$
Based on the results in Lemma \ref{lem_sum'}, we can compute:
\begin{align*}
\Expect\left[\Pi_{\Delta,t}(f(w_{t})-f^*)\right]=\mathcal{O}\left(\sum_{k=1}^{t}\Expect\|\eta_{v_{k}}\circ g_{k}\|^{2}\right)+\mathcal{O}(1)=\mathcal{O}(t).
\end{align*}
Substitute the above result into Equation (\ref{dsacdqewed}), and combine $\forall\ p\ge 2$
\[\sum_{t=1}^{T}\frac{1}{(t+1)^{p}}\le\sum_{t=1}^{T}\frac{1}{(t+1)^{2}}\le \sum_{t=1}^{+\infty}\frac{1}{t^{2}}=\frac{\pi^{2}}{6}.\] We get
\[\Expect\Bigg[\sum_{t=1}^{T}\Pi_{\Delta,t}\Lambda_{4,t}\Bigg]=\mathcal{O}(1).\]
It can be observed that the right-hand side of the above inequality is independent of $T$. Thus, according to the \emph{Lebesgue's Monotone Convergence} theorem, we have 
\[\sum_{t=1}^{T}\Pi_{\Delta,t}\Lambda_{4,t} \to \sum_{t=1}^{+\infty}\Pi_{\Delta,t}\Lambda_{4,t} \quad \text{a.s.},\]
and
\[
\Expect\Bigg[\sum_{t=1}^{+\infty}\Pi_{\Delta,t}\Lambda_{4,t}\Bigg] = \lim_{T \to \infty} \Expect\Bigg[\sum_{t=1}^{T}\Pi_{\Delta,t}\Lambda_{4,t}\Bigg] = \lim_{T \to \infty} \sum_{t=1}^{T} \Expect[\Pi_{\Delta,t}\Lambda_{4,t}] =\mathcal{O}(1).
\]
By setting $$\zeta := \sqrt{dv} + \sum_{t=1}^{+\infty}\Pi_{\Delta,t}\Lambda_{4,t},$$ and combining Lemma \ref{vital_1}, we have
\begin{align}\label{power}
\sqrt{S_{T}}\le \Pi_{\Delta,T}^{-1} (T+1)^{4}\zeta<\Pi_{\Delta,\infty}^{-1} (T+1)^{4}\zeta.
\end{align}
Meanwhile,
\begin{align}\label{bound'}\Expect[\zeta]=\sqrt{dv}+\Expect\Bigg[\sum_{t=1}^{+\infty}\Lambda_{4,t}\Bigg]=\mathcal{O}(1).\end{align}
Then through Equation (\ref{power}), we have
\begin{align*}
\frac{1}{2}\ln\left(\frac{S_{T}}{v}\right)&\le 4\ln(T+1)+\ln\left(\Pi_{\Delta,\infty}^{-1}\zeta\right)\\&\le4\ln(T+1)+\ln\left(\max\left\{e,\Pi_{\Delta,\infty}^{-1}\zeta\right\}\right)\\&\le 4\ln(T+1)\left(1+\frac{\ln\left(\max\left\{e,\Pi_{\Delta,\infty}^{-1}\zeta\right\}\right)}{4\ln(T+1)}\right)\\&\mathop{\le}^{\ln(T+1)\ge 1/2}4\ln(T+1)\left(1+\frac{1}{2}{\ln\left(\max\left\{e,\Pi_{\Delta,\infty}^{-1}\zeta\right\}\right)}\right).
\end{align*}
The lemma follows
\end{proof}

\subsubsection{Proof of Lemma \ref{lem_sum}}
\begin{proof}
According to the second conclusion of Lemma \ref{lem_sum'}, we have
\begin{align*}
    \sum_{t=1}^{T}\Expect\left[\Pi_{\Delta,t}\sum_{i=1}^{d}\zeta_{i}(t)\right]=\mathcal{O}\left(\sum_{t=1}^{T}\Expect\|\eta_{v_{t}}\circ g_{t}\|^{2}\right)+\mathcal{O}(1).
\end{align*}
To prove the conclusion of this lemma, we only need to bound $\sum_{t=1}^{T}\Expect\|\eta_{v_{t}}\circ g_{t}\|^{2}.$ Specifically,
\begin{align}\label{Gamma_t}
\Expect\|\eta_{v_{t}}\circ g_{t}\|^{2}&=\Expect\left[\sum_{i=1}^{d}\eta_{v_{t},i}^{2}g_{t,i}^{2}\right]=\Expect\left[\sum_{i=1}^{d}\frac{\eta_{t}^{2}g_{t,i}^{2}}{(\sqrt{v_{t,i}}+\mu)^{2}}\right]\le \Expect\left[\sum_{i=1}^{d}\frac{1}{t^{2\delta}}\frac{g_{t,i}^{2}}{{tv_{t,i}}}\right]\notag\\&\mathop{\le}^{\text{Property \ref{property_1}}}\frac{(t+1)^{2\delta}}{t^{2\delta}}\Expect\left[\sum_{i=1}^{d}\frac{1}{\alpha_{1}(t+1)^{2\delta}}\frac{g_{t,i}^{2}}{{S_{t,i}}}\right]\\
&\mathop{\le}^{(a)}\frac{2^{2\delta}}{\alpha_{1}}{\zeta}^{\frac{\delta}{2}}\Pi^{-\frac{\delta}{2}}_{\Delta,\infty}\sum_{i=1}^{d}\frac{g_{t,i}^{2}}{{S_{t,i}}^{1+\frac{\delta}{4}}}.
\end{align}
In step $(a)$ of the above derivation, we apply Lemma \ref{vital_0} to $(t+1)^{2\delta}$. Specifically, according to Lemma \ref{vital_0}, we have
\begin{align*}
\sqrt{S_{t}}\le \Pi_{\Delta,\infty}^{-1}(t+1)^{4}\zeta.
\end{align*}
Next, with the estimate for $\Expect\|\eta_{v_{t}}\circ g_{t}\|^{2}$, we can estimate $\sum_{t=1}^{T}\Expect\|\eta_{v_{t}}\circ g_{t}\|^{2}$. To achieve this, note that
\begin{align}\label{Gamma}
\sum_{t=1}^{T} \Expect\|\eta_{v_{t}}\circ g_{t}\|^{2}&=\Expect\left[\frac{2^{2\delta}}{\alpha_{1}}{\zeta}^{\frac{\delta}{2}}\Pi^{-\frac{\delta}{2}}_{\Delta,\infty}\sum_{i=1}^{d}\sum_{t=1}^{T}\frac{g_{t,i}^{2}}{{S_{t,i}}^{1+\frac{\delta}{4}}}\right]\le \Expect\left[\frac{2^{2\delta}}{\alpha_{1}}{\zeta}^{\frac{\delta}{2}}\Pi^{-\frac{\delta}{2}}_{\Delta,\infty}\sum_{i=1}^{d}\int_{S_{0,i}}^{S_{T,i}}\frac{1}{{x}^{1+\frac{\delta}{4}}}\text{d}x\right]\notag\\&\le \begin{cases} 
\frac{2^{2\delta}}{\alpha_{1}}\Expect\left[{\zeta}^{\frac{\delta}{4}}\Pi^{-\frac{\delta}{2}}_{\Delta,\infty}\right], & \text{if } \delta \in (0,1] \\
\frac{2^{2\delta}}{\alpha_{1}}\Expect\left[\ln\big(\frac{S_{T}}{dv}\big)\right], & \text{if } \delta = 0
\end{cases}\notag\\&\mathop{\le}^{(b)}\begin{cases} 
\mathcal{O}(1), & \text{if } \delta \in (0,1] \\
\frac{d2^{2\delta}}{\alpha_{1}}\Expect\left[\ln\big(\frac{S_{T}}{dv}\big)\right], & \text{if } \delta = 0
\end{cases}.
\end{align}
In step $(b)$, we used the following inequality, i.e. the \emph{Hölder's} inequality, to obtain the $\mathcal{O}(1)$ result
\[\Expect\left[{\zeta}^{\frac{\delta}{4}}\Pi^{-\frac{\delta}{2}}_{\Delta,\infty}\right]\le \Expect^{\delta/4}\left[\zeta\right]\cdot\Expect^{\frac{4-\delta}{4}}\left[\Pi^{-\frac{2\delta}{4-\delta}}_{\Delta,\infty}\right]\mathop{\le}^{\text{Lemma \ref{bounded_moment} and \ref{vital_0}}} C_{\zeta}^{\delta/4}.C_{v,d,\frac{2\delta}{4-\delta}}^{\frac{4-\delta}{4}}=\mathcal{O}(1).\]
This completes the proof.
\end{proof}

\subsubsection{Proof of Lemma \ref{lem_g}}
\begin{proof}
We only present the case where $\delta = 0$. The case where $\delta > 0$ can be treated using exactly the same approach. Recall the approximate descent inequality (Lemma \ref{sufficient:decrease})
\begin{align*}
\Pi_{\Delta,t}\hat{f}(u_{t+1})-\Pi_{\Delta,t-1}\hat{f}(u_{t})\le& -\frac{1}{2}\Pi_{\Delta,t}\sum_{i=1}^{d}\zeta_{i}(t)\notag+C_{2}\|{\eta_{v_{t-1}}}\circ m_{t-1}\|^{2}+\sum_{i=1}^{d}\Delta_{t,i}|\nabla_{i} f(u_{t})m_{t-1,i}|\\&\ +(L_{f}+1)\sum_{i=1}^{d}\eta_{v_{t},i}^{2}g_{t,i}^{2}+\Pi_{\Delta,t}M_{t}.
\end{align*}
We divide both sides of the above inequality by $\ln^{2}(t+1)$. Noticing that $\ln^{2}(t+1) < \ln^{2}(t+2)$, we obtain
\begin{align}\label{adam_7980}
\frac{\Pi_{\Delta,t}\hat{f}(u_{t+1})}{\ln^{2}(t+2)}-\frac{\Pi_{\Delta,t-1}\hat{f}(u_{t})}{\ln^{2}(t+1)}&\le C_{2}\frac{\|{\eta_{v_{t-1}}}\circ m_{t-1}\|^{2}}{\ln^{2}(t+1)}+\sum_{i=1}^{d}\frac{\Delta_{t,i}|\nabla_{i} f(u_{t})m_{t-1,i}|}{\ln^{2}(t+1)}\\&+(L_{f}+1)\sum_{i=1}^{d}\frac{\eta_{v_{t},i}^{2}g_{t,i}^{2}}{\ln^{2}(t+1)}+\Pi_{\Delta,t}\frac{M_{t}}{\ln^{2}(t+1)}.
\end{align}
Assign
$$\Omega_{t}:=C_{2}\frac{\|{\eta_{v_{t-1}}}\circ m_{t-1}\|^{2}}{\ln^{t+1}}+\sum_{i=1}^{d}\frac{\Delta_{t,i}|\nabla_{i} f(u_{t})m_{t-1,i}|}{\ln^{2}(t+1)}+(L_{f}+1)\sum_{i=1}^{d}\frac{\eta_{v_{t},i}^{2}g_{t,i}^{2}}{\ln^{2}(t+1)}.$$
For the term \(\sum_{t=1}^{+\infty} \Expect[\Omega_{t}]\), we can estimate it as 
\begin{align*}
\sum_{t=1}^{+\infty}\Expect[\Omega_{t}]:=&\ C_{2}\sum_{t=1}^{+\infty}\frac{\Expect\|{\eta_{v_{t-1}}}\circ m_{t-1}\|^{2}}{\ln^{t+1}}+\sum_{t=1}^{+\infty}\sum_{i=1}^{d}\frac{\Expect[\Delta_{t,i}|\nabla_{i} f(u_{t})m_{t-1,i}|]}{\ln^{2}(t+1)}\\&\ +(L_{f}+1)\sum_{i=1}^{d}\sum_{t=1}^{+\infty}\frac{\Expect[\eta_{v_{t},i}^{2}g_{t,i}^{2}]}{\ln^{2}(t+1)}\\ \mathop{\le}^{(a)} &\ \mathcal{O}(1)+\sum_{i=1}^{d}\mathcal{O}\left(\sum_{t=1}^{+\infty}\Expect\left[\frac{\Expect[\eta_{v_{t},i}^{2}g_{t,i}^{2}]}{\ln^{2}(t+1)}\right]\right)\\
\mathop{\le}^{(b)} &\ \mathcal{O}(1)+\sum_{i=1}^{d}\mathcal{O}\left(\sum_{t=1,S_{t,i}>2v}^{+\infty}\Expect\left[\zeta'^{2}\frac{\Expect[\eta_{v_{t},i}^{2}g_{t,i}^{2}]}{\ln^{2}\left(S_{t,i}/v\right)}\right]\right).
\end{align*}
In step $(a)$, we use Property \ref{property_3} and Lemma \ref{lem_sum'}. In step $(b)$, we use the last result from Lemma \ref{vital_0}, which states
$$\ln\Big(\frac{S_{t,i}}{v}\Big)\le\ln\Big(\frac{S_{t}}{v}\Big)\le \zeta'\ln (T+1).$$ Then, using the series-integral inequality, we bound $\sum_{t=1}^{+\infty} \Expect[\Omega_{t}]$ and obtain
\begin{align*}
    \sum_{t=1}^{+\infty}\Expect[\Omega_{t}]&\le \mathcal{O}(1)+\mathcal{O}\left(\sum_{i=1}^{d}\Expect\bigg[\sum_{t=1,S_{t,i}>2v}^{+\infty}\frac{({\zeta'}^{2})\frac{g_{t,i}^{2}}{v}}{\ln^{2}(\frac{S_{t,i}}{v})\frac{S_{t,i}}{v}}\bigg]\right)\\
    & <\mathcal{O}(1)+\mathcal{O}\left(\sum_{i=1}^{d}\Expect\Bigg[\int_{2}^{+\infty}\frac{{\zeta'}^{2}}{x\ln^{2}x}\text{d}x\Bigg]\right)\\&=\mathcal{O}(1)+\mathcal{O}(\Expect[\zeta'^{2}]).
\end{align*}
Next, we use the explicit expression for $\zeta'$ given in Lemma \ref{vital_0} to bound $\Expect[{\zeta'}^{2}]$. We have:
\begin{align*}
\Expect[{\zeta'}^{2}]&=\Expect\left[16\left(1+\frac{1}{2}{\ln\left(\max\left\{e,\Pi_{\Delta,\infty}^{-1}\zeta\right\}\right)}\right)\right]<+\infty.
\end{align*}
As a result, we have
\begin{align}\label{adam_7981}
     \sum_{t=1}^{+\infty}\Expect[\Omega_{t}]<+\infty.
\end{align}
According to the \emph{Lebesgue's Monotone Convergence} theorem, we know that the above result implies
\begin{align}\label{adam_7982}
\sum_{t=1}^{+\infty}\Expect[\Omega_{t}|\mathscr{F}_{t-1}]<+\infty\ \ \text{a.s.}
\end{align}
Next, we take the conditional expectation with respect to $\mathscr{F}_{t-1}$ on both sides of Equation (\ref{adam_7980}), which obtains
\begin{align}\label{adam_7980'}
\Expect\Bigg[\frac{\Pi_{\Delta,t+1}\hat{f}(u_{t+1})}{\ln^{2}(t+2)}\Bigg|\mathscr{F}_{t-1}\Bigg]\le \frac{\Pi_{\Delta,t}\hat{f}(u_{t})}{\ln^{2}(t+1)}+\Expect[{\Omega_{t}}|\mathscr{F}_{t-1}]+0.
\end{align}
Based on the result from Equation (\ref{adam_7982}) and the \emph{Supermartingale Convergence} theorem, we deduce that $\frac{\Pi_{\Delta,t}\hat{f}(u_{t})}{\ln^{2}(t+1)}$ converges almost surely. Then, according to Property \ref{property_2}, we bound ${f}(w_{t})-f^*$ using $\hat{f}(u_{t}).$ This concludes our first result. Next, we take the expectation on both sides of Equation (\ref{adam_7980}), which yields
\begin{align}\label{adam_7980''}
\Expect\Bigg[\frac{\Pi_{\Delta,t+1}\hat{f}(u_{t+1})}{\ln^{2}(t+2)}\Bigg]\le \Expect\Bigg[\frac{\Pi_{\Delta,t}\hat{f}(u_{t})}{\ln^{2}(t+1)}\Bigg]+\Expect[{\Omega_{t}}]+0.
\end{align}
Based on the convergence result of the expectation summation in Equation (\ref{adam_7981}) and a summation formula for a recursive sequence, we conclude our second result. Thus, the analysis for the case $\delta = 0$ has been completed. The case $\delta > 0$ could be analyzed using the same method, which concludes the lemma.
\end{proof}

\subsubsection{Proof of Lemma \ref{lem_S_T}}
\begin{proof}
Since the case of $\delta > 0$ is relatively straightforward, we first analyze the scenario where $\delta > 0$. According to the second conclusion for $\delta > 0$ in Lemma \ref{lem_g}, we obtain
\begin{align*}
\Expect[{S^{3/4}_{T}}]=\Expect[\Pi_{\Delta,T}^{-3/4}\Pi^{3/4}_{\Delta,T}S^{3/4}_{T}]\mathop{\le}^{\text{\emph{Hölder's} inequality}} \Expect^{1/4}[\Pi_{\Delta,T}^{-3}]\Expect^{3/4}[\Pi_{\Delta,T}S_{T}].
\end{align*}
Then according to Lemma \ref{bounded_moment}, we have $\Expect[\Pi_{\Delta,T}^{-3}] \leq C_{v,d,3}$. For the other term, $\Expect[\Pi_{\Delta,T} S_{T}]$, we can handle the term as follows.
\begin{align*}
 \Expect[\Pi_{\Delta,T} S_{T}]&\le S_{0}+\Expect\Bigg[\Pi_{\Delta,T}\sum_{t=1}^{T}\|g_{t}\|^{2}\Bigg] \le dv+ \Expect\Bigg[\sum_{t=1}^{T}\Pi_{\Delta,T}\|g_{t}\|^{2}\Bigg]  \\&=dv+ \sum_{t=1}^{T}\Expect\big[\Pi_{\Delta,T}\Expect[\|g_{t}\|^{2}|\mathscr{F}_{t-1}]\big]\\& \mathop{\le}^{\text{Property \ref{property_-1}}}dv+ \sum_{t=1}^{T}\Expect\big[\Pi_{\Delta,T}((A+2L_{f}B)(f(w_{t})-f^{*})+C)\big]\\&\mathop{\le}^{\text{Lemma \ref{lem_g}}}dv+((A+2L_{f}B)M_{\delta}+C)T.
\end{align*}
This implies 
\begin{align*}
\Expect[\sqrt{S_{T}}]\le {C^{1/4}_{v,d,3}}(dv+((A+2L_{f}B)M_{\delta}+C)T^{3/4}=\mathcal{O}({T^{3/4}}).
\end{align*}
For the case where $\delta = 0$, we use the same approach as in the case of $\delta > 0$ and apply the corresponding conclusion for $\delta = 0$ from Lemma \ref{lem_g}. Thus, we obtain
\[\Expect[{S^{3/4}_{T}}]=\mathcal{O}({T^{3/4}}\ln^{3/2} T). \tag*{\qedhere}\]
\end{proof}

\subsubsection{Proof of Lemma \ref{lem_v_t}}
\begin{proof}
We discuss two cases based on the value of $\lambda$. In the first case, when $\lambda = 1$, we have
\begin{align*}
v_{t+1}=\Big(1-\frac{1}{t+1}\Big)v_{t}+\frac{1}{t+1}g_{t}^{\circ 2}\ (\forall\ t\ge 1),
\end{align*}
that is
\[(t+1)v_{t+1}=tv_{t}+g_{t}^{\circ 2}.\]
Summing over all the coordinates, we obtain
\begin{align}\label{v_t}
(t+1)\Sigma_{v_{t+1}}=t\Sigma_{v_{t}}+\|g_{t}\|^{2}.
\end{align}
Multiplying both sides of the above equation by $\Pi_{\Delta,t}$, and noting that $\Pi_{\Delta,t} \geq \Pi_{\Delta,t+1}$, we have
\[(t+1)\Pi_{\Delta,t+1}\Sigma_{v_{t+1}}=t\Pi_{\Delta,t}\Sigma_{v_{t}}+\Pi_{\Delta,t}\|g_{t}\|^{2}.\]
Taking the expectation on both sides, we obtain
\begin{align*}
(t+1)\Expect[\Pi_{\Delta,t+1}\Sigma_{v_{t+1}}]&\le t\Expect[\Pi_{\Delta,t}\Sigma_{v_{t}}]+\Expect[\Pi_{\Delta,t}\|g_{t}\|^{2}]\\&=t\Expect[\Pi_{\Delta,t}\Sigma_{v_{t}}]+\Expect[\Pi_{\Delta,t}\Expect[\|g_{t}\|^{2}|\mathscr{F}_{t-1}]]\\&\mathop{\le}^{\text{Property \ref{property_-1}}}t\Expect[\Pi_{\Delta,t}\Sigma_{v_{t}}]+(A+2L_{f}B)\Expect[\Pi_{\Delta,t}(f(w_{t})-f^*)]+C\\&\le t\Expect[\Pi_{\Delta,t}\Sigma_{v_{t}}]+(A+2L_{f}B)\Big(\sup_{t\ge 1}\Expect[\Pi_{\Delta,t}(f(w_{t})-f^*)]\Big)+C \\&\mathop{\le}^{\text{Lemma \ref{lem_g}}}t\Expect[\Pi_{\Delta,t}\Sigma_{v_{t}}]+(A+2L_{f}B)M_{\delta}+C.
\end{align*}
By iterating the above inequality, we finally have
\[
(t+1)\Expect[\Pi_{\Delta,t+1}\Sigma_{v_{t+1}}]\le \begin{cases} 
\Expect[\Pi_{\Delta,2}\Sigma_{v_{1}}]+\big((A+2L_{f}B)M_{\delta}+C\big)t, & \text{if } \delta \in (0,1] \\
\Expect[\Pi_{\Delta,2}\Sigma_{v_{1}}]+\big((A+2L_{f}B)M_{0}\ln^{2}t+C\big)t, & \text{if } \delta = 0
\end{cases}.
\]
This implies that for any $t \geq 1$,
\[\Expect[\Pi_{\Delta,t+1}\Sigma_{v_{t+1}}]\le \begin{cases} 
\Expect[\Pi_{\Delta,2}\Sigma_{v_{1}}]+(A+2L_{f}B)M_{\delta}+C, & \text{if } \delta \in (0,1] \\
\Expect[\Pi_{\Delta,2}\Sigma_{v_{1}}]+(A+2L_{f}B)M_{0}\ln^{2}t+C, & \text{if } \delta = 0
\end{cases},\]
that is
\begin{align*}
    \sup_{t\ge 1}\Expect[\Pi_{\Delta,t}\Sigma_{v_{t}}]<\begin{cases} 
\Expect[\Pi_{\Delta,2}\Sigma_{v_{1}}]+(A+2L_{f}B)M_{\delta}+C, & \text{if } \delta \in (0,1] \\
\Expect[\Pi_{\Delta,2}\Sigma_{v_{1}}]+(A+2L_{f}B)M_{0}\ln^{2}t+C, & \text{if } \delta = 0
\end{cases}.
\end{align*}
Next, we discuss the scenario when $\lambda > 1$. In this case, we have the following inequality.
\[\Sigma_{v_{t+1}}\le \Sigma_{v_{t}}+\frac{1}{(t+1)^{\lambda}}\|g_{t}\|^{2}.\]
We multiply both sides of the above inequality by $\Pi_{\Delta,t}$ and, noting its monotonicity, we have
\begin{align}\label{adam_-1234321}
\Pi_{\Delta,t+1}\Sigma_{v_{t+1}}\le \Pi_{\Delta,t}\Sigma_{v_{t}}+\frac{\Pi_{\Delta,t}}{(t+1)^{\lambda}}\|g_{t}\|^{2}.
\end{align}
Taking the conditional expectation with respect to $\mathscr{F}_{t-1}$ on both sides of the inequality, we have
\[
\Expect[\Pi_{\Delta,t+1}\Sigma_{v_{t+1}}|\mathscr{F}_{t-1}]\le \Pi_{\Delta,t}\Sigma_{v_{t}}+\frac{\Pi_{\Delta,t}}{(t+1)^{\lambda}}\Expect[\Pi_{\Delta,t}\|g_{t}\|^{2}|\mathscr{F}_{t-1}].
\]
According to Property \ref{property_-1} and Lemma \ref{lem_g}, we obtain
\begin{align*}
&\sum_{t=1}^{+\infty}\frac{\Pi_{\Delta,t}}{(t+1)^{\lambda}}\Expect[\Pi_{\Delta,t}\|g_{t}\|^{2}|\mathscr{F}_{t-1}]\\\le & \begin{cases} 
\big((A+2L_{f}B)\sup_{t\ge 1}\big(\Pi_{\Delta,t}(f(w_{t})-f^*)\big)+C\big)\cdot\sum_{t=1}^{+\infty}\frac{1}{(t+1)^{\lambda}}, & \text{if } \delta \in (0,1] \\
\Big((A+2L_{f}B)\sup_{t\ge 1}\Big(\frac{\Pi_{\Delta,t}(f(w_{t})-f^*)}{\ln^{2}t}\Big)+C\Big)\cdot\sum_{t=1}^{+\infty}\frac{\ln^{2}t}{(t+1)^{\lambda}}, & \text{if } \delta = 0
\end{cases}\\< & +\infty\ \ \text{a.s.}\end{align*}
By the \textit{Supermartingale Convergence theorem}, we obtain that $\Pi_{\Delta,t}\Sigma_{v_{t}}$ converges almost surely, which implies that $\sup_{t \geq 1} \Pi_{\Delta,t}\Sigma_{v_{t}} < +\infty \ \ \text{a.s.}$ According to Lemma \ref{bounded_moment}, where $\sup_{t \geq 1} \Pi_{\Delta,t}^{-1} < +\infty \ \ \text{a.s.}$, we can immediately deduce that $\sup_{t \geq 1} \Sigma_{v_{t}} < +\infty \ \ \text{a.s.}$ Next, we prove that the expected supremum is finite. Taking the expectation on both sides of Equation (\ref{adam_-1234321}), we obtain
\[
\Expect\big[\Pi_{\Delta,t+1}\Sigma_{v_{t+1}}\big]\le \Expect\big[\Pi_{\Delta,t}\Sigma_{v_{t}}\big]+\frac{1}{(t+1)^{\lambda}}\Expect[\Pi_{\Delta,t}\|g_{t}\|^{2}].
\]
By summing the above recursive inequalities and using the results from Property \ref{property_-1} and Lemma \ref{lem_g}, we can easily prove that
\[\sup_{t \geq 1} \Expect[\Pi_{\Delta,t}\Sigma_{v_{t}}] < \begin{cases} 
\big((A+2L_{f}B)M_{\delta}+C\big)\sum_{t=1}^{+\infty}\frac{1}{(t+1)^{\lambda}}, & \text{if } \delta \in (0,1] \\
\big((A+2L_{f}B)M_{0}+C\big)\sum_{t=1}^{+\infty}\frac{\ln^{2}t}{(t+1)^{\lambda}}, & \text{if } \delta = 0
\end{cases}<+\infty.\]
With this inequality we complete the proof.
\end{proof}

\subsubsection{Proof of Lemma \ref{lem_sum_1}}
\begin{proof}
According to the result from Lemma \ref{lem_sum}, it is straightforward to see that when $\delta > 0$,
\begin{align*}
  \sum_{t=2}^{T}\Expect\bigg[\Pi_{\Delta,t}\frac{\eta_{t-1}\|\nabla f(w_{t})\|^{2}}{\sqrt{\Sigma_{v_{t-1}}+\mu}}\bigg]&\le \sum_{t=2}^{T}\Expect\bigg[\Pi_{\Delta,t}\frac{\eta_{t-1}}{\sqrt{\Sigma_{v_{t-1}}+\mu}}\sum_{i=1}^{d}(\nabla_{i}f(w_{t}))^{2}\bigg]\\&\le \sum_{t=1}^{T}\Expect\bigg[\Pi_{\Delta,t}\sum_{i=1}^{d}\zeta_{i}(t)\bigg]\\&<C_{4,\delta}<+\infty.
\end{align*}
Next, we apply the \emph{Lebesgue's Monotone Convergence theorem}
\[\sum_{t=2}^{+\infty}{\Pi_{\Delta,t}}\frac{\eta_{t-1}\|\nabla f(w_{t})\|^{2}}{\sqrt{\Sigma_{v_{t-1}}+\mu}}<+\infty\ \ \text{a.s.}\]
Then, by combining the almost surely boundedness of $\sup_{t\ge 1}\Pi^{-1}_{\Delta,t}$ and ${\sup_{t\ge 1}\Sigma_{v_{t}}}$ from Lemma \ref{bounded_moment} and Lemma \ref{lem_v_t}, it follows
\begin{align*}
\sum_{t=1}^{+\infty}\eta_{t}\|\nabla f(w_{t})\|^{2}&\le \|\nabla f(w_{1})\|^{2}+\sum_{t=2}^{+\infty}\eta_{t-1}\|\nabla f(w_{t})\|^{2}\\&<\|\nabla f(w_{1})\|^{2}+\Big(\sup_{t\ge 1}\Pi^{-3/2}_{\Delta,t+1}\Big)\cdot\Big(\sqrt{\sup_{t\ge 1}\Pi_{\Delta,t}\Sigma_{v_{t}}}+\mu\Big)\cdot\sum_{t=2}^{+\infty}\Pi_{\Delta,t}\frac{\eta_{t-1}\|\nabla f(w_{t})\|^{2}}{\sqrt{\Sigma_{v_{t-1}}}+\mu}\\&<+\infty\ \ \text{\text{a.s.}}
\end{align*}
According to the L-smoothness assumption (Assumption \ref{ass_2}), it is immediate that
\[
|\|\nabla f(w_{t})\| - \|\nabla f(u_{t})\| |\leq L_{f} \|w_{t} - u_{t}\| = \frac{L_{f} \beta_{1}}{1 - \beta_{1}} \|\eta_{v_{t-1}} \circ m_{t-1}\|,\]
that is,
\begin{align*}
 \|\nabla f(u_{t})\|^{2}&\le\left(\|\nabla f(w_{t})\|+\frac{L_{f} \beta_{1}}{1 - \beta_{1}} \|\eta_{v_{t-1}} \circ m_{t-1}\|\right)^{2}\\&\le 2\|\nabla f(w_{t})\|^{2}+\frac{2L_{f}^{2}\beta_{1}^{2}}{(1-\beta_{1})^{2}} \|\eta_{v_{t-1}} \circ m_{t-1}\|^{2}.
\end{align*}
This implies 
\begin{align*}
    \sum_{t=1}^{+\infty}\eta_{t}\|\nabla f(u_{t})\|^{2}&\le 2\sum_{t=1}^{+\infty}\eta_{t}\|\nabla f(w_{t})\|^{2}+\frac{2L_{f}^{2}\beta_{1}^{2}}{(1-\beta_{1})^{2}} \sum_{t=1}^{+\infty}\|\eta_{v_{t-1}} \circ m_{t-1}\|^{2}.
\end{align*}
According to Property \ref{property_2}, for any \(\delta > 0\), 
\begin{align*}
   \left(\frac{L_{f} \beta_{1}}{1 - \beta_{1}} \right)^{2}\sum_{t=1}^{T}\Expect\|\eta_{v_{t-1}}\circ m_{t-1}\|^{2}&\le \mathcal{O}\left(\sum_{t=1}^{T}\Expect\|\eta_{v_{t}}\circ g_{t}\|^{2}\right)+\mathcal{O}(1)\\&\mathop{\le}^{\text{Eq. \ref{Gamma}}}\mathcal{O}(1).
\end{align*}
Applying the \emph{Lebesgue's Monotone Convergence theorem}, we obtain
\begin{align}\label{m_t}
 \left(\frac{L_{f} \beta_{1}}{1 - \beta_{1}} \right)^{2}\sum_{t=1}^{T}\|\eta_{v_{t-1}}\circ m_{t-1}\|^{2} <+\infty\ \ \text{a.s.,}  
\end{align}
that is
\begin{align*}
    \sum_{t=1}^{+\infty}\eta_{t}\|\nabla f(u_{t})\|^{2}&\le 2\sum_{t=1}^{+\infty}\eta_{t}\|\nabla f(w_{t})\|^{2}+\frac{2L_{f}^{2}\beta_{1}^{2}}{(1-\beta_{1})^{2}} \sum_{t=1}^{+\infty}\|\eta_{v_{t-1}} \circ m_{t-1}\|^{2}<+\infty\ \ \text{a.s..} \tag*{\qedhere}
\end{align*}
\end{proof}

\subsubsection{Proof of Theorem \ref{thm:complexity}}
\begin{proof}
According to Lemma \ref{lem_sum}, we have:
\begin{align*}
    \sum_{t=1}^{T}\Expect\bigg[\Pi_{\Delta,t}\sum_{i=1}^{d}\zeta_{i}(t)\bigg]\le\begin{cases} 
C_{4,\delta}, & \text{if } \delta\in(0,1/2) \\
C_{5}+C_{6}\Expect\left[\ln(S_{T})\right], & \text{if } \delta = 0\end{cases}.
\end{align*}
According to the monotonicity of $\eta_{v_{t},i}$ in Property \ref{property_0} and the monotonicity of $\Pi_{\Delta,t}$ itself, we obtain the following inequality.
\begin{align*}
    \sum_{t=1}^{T}\Expect\Bigg[\Pi_{\Delta,T}\frac{\|\nabla f(w_{t})\|^{2}}{T^{\frac{1}{2}+\delta}(\sqrt{v_{T}}+\mu)}\bigg]\le  \sum_{t=1}^{T}\Expect\bigg[\Pi_{\Delta,T}\sum_{i=1}^{d}\zeta_{i}(t)\bigg]\le\begin{cases} 
C_{4,\delta}, & \text{if } \delta\in(0,1/2) \\
C_{5}+C_{6}\Expect\left[\ln(S_{T})\right], & \text{if } \delta = 0\end{cases}.
\end{align*}
For the leftmost part of the above inequality, we apply the \emph{Cauchy-Schwarz inequality} and obtain
\begin{align*}
\Expect\big[\Pi_{\Delta,T}^{-1}T^{\frac{1}{2}+\delta}(\sqrt{v_{T}}+\mu)\big]\Bigg( \sum_{t=1}^{T}\Expect\Bigg[\Pi_{\Delta,T}\frac{\|\nabla f(w_{t})\|^{2}}{T^{\frac{1}{2}+\delta}(\sqrt{v_{T}}+\mu)}\bigg]\Bigg) \ge \Expect\left[\sqrt{\sum_{t=1}^{T}\|\nabla f(w_{t})\|^{2}}\right],
\end{align*}
which means
\begin{align*}
\Expect\left[\sqrt{\sum_{t=1}^{T}\|\nabla f(w_{t})\|^{2}}\right] \le  \begin{cases} 
C_{4,\delta}\Expect\big[\Pi_{\Delta,T}^{-1}T^{\frac{1}{2}+\delta}(\sqrt{v_{T}}+\mu)\big], & \text{if } \delta\in(0,1/2) \\
C_{5}\Expect\big[\Pi_{\Delta,T}^{-1}T^{\frac{1}{2}+\delta}(\sqrt{v_{T}}+\mu)\big]+C_{6}\Expect\big[\Pi_{\Delta,T}^{-1}T^{\frac{1}{2}+\delta}(\sqrt{v_{T}}+\mu)\big]\Expect\left[\ln(S_{T})\right], & \text{if } \delta = 0\end{cases}.
\end{align*}
Combining the results from Lemma \ref{lem_S_T}, Lemma \ref{lem_v_t} and Lemma \ref{bounded_moment}, we obtain
\begin{align*}
   \Expect\big[\Pi_{\Delta,T}^{-1}T^{\frac{1}{2}+\delta}(\sqrt{v_{T}}+\mu)\big]&\le 2T^{\frac{1}{2}+\delta}\sqrt{\Expect[\Pi^{-3}_{\Delta,T}] }\sqrt{\Expect[\Pi_{\Delta,T}(v_{T}+\mu^{2})]}\\&\le \begin{cases} 
C^{1/2}_{v,d,3}\mathcal{O}(T^{\frac{1}{2}+\delta}), & \text{if } \gamma>1 \\
C^{1/2}_{v,d,3}\mathcal{O}({T}^{\frac{1}{2}+\delta}), & \text{if } \gamma=1,\ \delta \in(0,1]\\ C^{1/2}_{v,d,3}\mathcal{O}(\sqrt{T}\ln T), & \text{if } \gamma=1,\ \delta=0.\end{cases}
\end{align*} and
\begin{align*}
    \Expect[\ln(S_{T})]=\frac{4}{3}\Expect[\ln(S^{3/4}_{T})]\le \frac{4}{3}\ln(\Expect[S^{3/4}_{T}])=\begin{cases} 
\mathcal{O}(\ln T), & \text{if } \delta\in(0,1/2) \\
\mathcal{O}(\ln T)+\mathcal{O}(\ln\ln T), & \text{if } \delta = 0\end{cases}.
\end{align*}
Combining the two estimates above, we finally obtain
\begin{align}\label{sqrt}
 \Expect\left[\sqrt{\sum_{t=1}^{T}\|\nabla f(w_{t})\|^{2}}\right]\le  \begin{cases} 
\mathcal{O}(T^{\frac{1}{2}+\delta}), & \text{if }\delta\in(0,1/2)  \\
\mathcal{O}(\sqrt{T}\ln T), & \text{if } \gamma>1,\ \ \delta = 0\\
\mathcal{O}(\sqrt{T}\ln^{2} T), & \text{if } \gamma=1,\ \ \delta = 0.\end{cases}  
\end{align}
This implies that for any \( s \in (0,1) \), the inequality \begin{align*}
 \frac{1}{T}\sum_{t=1}^{T}\|\nabla f(w_{t})\|^{2}\le  \begin{cases} 
\mathcal{O}\Big(\frac{1}{s^2}\frac{1}{T^{\frac{1}{2}-\delta}}\Big), & \text{if }\delta\in(0,1/2)  \\
\mathcal{O}\Big(\frac{1}{s^2}\frac{\ln T}{\sqrt{T}}\Big), & \text{if } \gamma>1,\ \ \delta = 0\\
\mathcal{O}\Big(\frac{1}{s^2}\frac{\ln^{2} T}{\sqrt{T}}\Big), & \text{if } \gamma=1,\ \ \delta = 0\end{cases} \tag*{\qedhere}
\end{align*}
holds with probability at least \( 1 - s \).

\end{proof}

\subsubsection{Proof of Lemma \ref{sub}}
\begin{proof}
From Eq. \eqref{sqrt}, we have $\forall\ t_{0}\ge 0,$
\[
\lim_{T\rightarrow+\infty}\Expect\left[\inf_{t_0< t\le T}\|\nabla f(w_{t})\|\right] \le \lim_{T\rightarrow+\infty}\frac{1}{T-t_0+1}\Expect\left[\sqrt{\sum_{t=t_0}^{T}\|\nabla f(w_{t})\|^{2}}\right]=0.
\]
Since, for a fixed \( t_0 \), the sequence \( \left\{ \inf_{t_0 < t \leq T} \|f(w_t)\| \right\}_{T > t_0} \) is monotonically decreasing and non-negative, by the \emph{Lebesgue's Monotone Convergence theorem}, we readily obtain:
\begin{align*}
    \Expect\left[\inf_{t>t_{0}}\|\nabla f(w_{t})\|\right]= \Expect\left[\lim_{T\rightarrow+\infty}\inf_{t_{0}<t\le T}\|\nabla f(w_{t})\|\right]=\lim_{T\rightarrow+\infty}\Expect\left[\inf_{t_{0}<t\le T}\|\nabla f(w_{t})\|\right]=0
\end{align*}
The second equality follows from the \emph{Lebesgue's Monotone Convergence theorem}, which allows the interchange of the limit and expectation. Since \(\inf_{t > t_0} \|\nabla f(w_t)\| \geq 0\), we can directly deduce that 
\(\inf_{t > t_0} \|\nabla f(w_t)\| = 0\ \ \text{a.s.},\)
from the fact that 
\(\mathbb{E}\left[\inf_{t > t_0} \|\nabla f(w_t)\|\right] = 0.\)
Furthermore, given the arbitrariness of \( t_0 \), we can obtain that there exists a subsequence \( \{w_{c_{t}}\}_{t \ge 1} \) of \( \{w_{t}\}_{t \ge 1} \) such that
\[
\lim_{t \rightarrow +\infty} \|\nabla f(w_{c_{t}})\| = 0 \quad \text{a.s.}
\]
This completes the proof.
\end{proof}

\subsubsection{Proof of Theorem \ref{thm:almost_sure}}
\begin{proof}
According to Lemma \ref{lem_sum_1}, we have
\[|\|\nabla f(w_{t})\| - \|\nabla f(u_{t})\| |\leq L_{f} \|w_{t} - u_{t}\| = \frac{L_{f} \beta_{1}}{1 - \beta_{1}} \|\eta_{v_{t-1}} \circ m_{t-1}\|\rightarrow 0\ \ \text{a.s.}\]
This implies that we only need to prove $\lim_{t \rightarrow +\infty} \|\nabla f(u_{t})\| = 0 \ \ \text{a.s.}$ To achieve this objective, we proceed as follows.

For any $l > 0$, we construct the following stopping time\footnote{In this paper, we adopt the following definition of stopping time: Let $\tau$ be a random variable defined on the filtered probability space $(\Omega, \mathscr{F}, (\mathscr{F}_n)_{n \in \mathbb{N}}, \Pro)$ with values in $\mathbb{N} \cup \{+\infty\}$. Then $\tau$ is called a {stopping time} (with respect to the filtration $ (\mathscr{F}_n)_{n \in \mathbb{N}}$) if the condition 
\(
\{\tau = n\} \in \mathscr{F}_n
\) holds for all $n$.} sequence $\{\tau_{l,n}\}_{n \geq 1}:$
\begin{align*}
&\tau_{l,1}:=\min\{t\ge 1:\|\nabla f(u_{t})\|>l\},\ \ \tau_{l,2}:=\min\{t>\tau_{l,1}:\|\nabla f(u_{t})\|\le l\},\\&...,\\&\tau_{l,2k-1}:=\min\{t>\tau_{l,2k-2}:\|\nabla f(u_{t})\|>l\},\ \ \tau_{l,2k}:=\min\{t>\tau_{l,2k-1}:\|\nabla f(u_{t})\|\le l\}.
\end{align*}
According to the subsequence convergence result in Lemma \ref{sub}, we know that when $\tau_{2k-1} < +\infty \ (\forall\ k \geq 1)$, it must hold that $\tau_{2k} < +\infty \ \ \text{a.s.}$ We now discuss two cases.

1. When there exists some $k_{0} \geq 1$ such that $\tau_{2k_{0}-1} = +\infty$, this implies that eventually $\{\|\nabla f(u_{t})\|\}_{t \geq 1}$ will remain below $l$, i.e.,
\begin{align}\label{sit_1}
\limsup_{t \rightarrow +\infty} \|\nabla f(u_{t})\| < l.
\end{align}

2. Next, we focus on the second case, where for all $\tau_{2k-1}$, we have $\tau_{2k-1} < +\infty$. In this situation, we examine the behavior of $\sup_{\tau_{2k-1} \leq t < \tau_{2k}} \|\nabla f(u_{t})\|.$ We have
\begin{align*}
\sup_{\tau_{2k-1} \leq t < \tau_{2k}} \|\nabla f(u_{t})\|&\le l+\sup_{\tau_{2k-1} \leq t < \tau_{2k}} \|\nabla f(u_{t})\|-\|\nabla f(u_{\tau_{2k-1}-1})\|\\&\le l+\Bigg(
\sum_{t=\tau_{2k-1}-1}^{\tau_{2k}-1}\big|\|\nabla f(u_{t})\|-\|\nabla f(u_{t-1})\|\big|\Bigg) \\&\mathop{\le}^{\text{l-smooth}}l+\Bigg(
L_{f}\sum_{t=\tau_{2k-1}-1}^{\tau_{2k}-1}\|u_{t}-u_{t-1}\|\Bigg) \\&\mathop{\le}^{\text{Eq. \ref{u_{t}}}}l+L_{f}\underbrace{\Bigg(
\sum_{t=\tau_{2k-1}-1}^{\tau_{2k}-1}\|\eta_{v_{t}}\circ g_{t}\|\Bigg)}_{\Upsilon_{k,1}}\\&+ \frac{\beta_{1}L^{2}_{f}}{1-\beta_{1}}\underbrace{\Bigg(
\sum_{t=\tau_{2k-1}-1}^{\tau_{2k}-1}\|\Delta_{t}\circ m_{t-1}\|\Bigg)}_{\Upsilon_{k,2}}.
\end{align*}
Our next goal is to prove separately that $\limsup_{k \rightarrow +\infty} \Upsilon_{k,1} = 0 \ \ \text{a.s.}$ and $\limsup_{k \rightarrow +\infty} \Upsilon_{k,2} = 0 \ \ \text{a.s.}$ For $\Upsilon_{k,1}$, we have
\begin{align*}
\Upsilon_{k,1}&=\Bigg(
\sum_{t=\tau_{2k-1}-1}^{\tau_{2k}-1}\|\eta_{v_{t}}\circ g_{t}\|\Bigg)=\Bigg(
\sum_{t=\tau_{2k-1}-1}^{\tau_{2k}-1}\sum_{i=1}^{d}\eta_{v_{t},i}|g_{t,i}|\Bigg)\\&=\Bigg(
\sum_{t=\tau_{2k-1}-1}^{\tau_{2k}-1}\sum_{i=1}^{d}\frac{\eta_{t}|g_{t,i}|}{\sqrt{v_{t}}+\mu}\Bigg)\le \Bigg(
\sum_{t=\tau_{2k-1}-1}^{\tau_{2k}-1}\sum_{i=1}^{d}\frac{\eta_{t}|g_{t,i}|}{\mu}\Bigg)\\&=\frac{1}{\mu}\Bigg(
\sum_{t=\tau_{2k-1}-1}^{\tau_{2k}-1}\sum_{i=1}^{d}{\eta_{t}\Expect[|g_{t,i}||\mathscr{F}_{t-1}]}\Bigg)\\&+\frac{1}{\mu}\Bigg(
\sum_{t=\tau_{2k-1}-1}^{\tau_{2k}-1}\sum_{i=1}^{d}{\eta_{t}(|g_{t,i}|-\Expect[|g_{t,i}||\mathscr{F}_{t-1}])}\Bigg)\\&=\frac{1}{\mu}\underbrace{\Bigg(
\sum_{t=\tau_{2k-1}-1}^{\tau_{2k}-1}\eta_{t}{\Expect[|g_{t}||\mathscr{F}_{t-1}]}\Bigg)}_{\Upsilon_{k,1,1}}+\frac{1}{\mu}\underbrace{\Bigg(
\sum_{t=\tau_{2k-1}-1}^{\tau_{2k}-1}\sum_{i=1}^{d}{\eta_{t}(|g_{t,i}|-\Expect[|g_{t,i}||\mathscr{F}_{t-1}])}\Bigg)}_{\Upsilon_{k,1,2}}.
\end{align*}
For $\Upsilon_{k,1,1},$ we have
\begin{align*}
\Upsilon_{k,1,1}&\mathop{\le}^{\text{Property \ref{property_-1}}} \Bigg(
\sum_{t=\tau_{2k-1}-1}^{\tau_{2k}-1}\eta_{t}\big((A+2L_{f}B)(f(w_{t})-f^{*})+C\big)\Bigg)\\&\le \big((A+2L_{f}B)\sup_{t\ge 1}(f(w_{t})-f^{*})+C\big)\cdot\Bigg(
\sum_{t=\tau_{2k-1}-1}^{\tau_{2k}-1}\eta_{t}\Bigg)\\&=\big((A+2L_{f}B)\sup_{t\ge 1}(f(w_{t})-f^{*})+C\big)\cdot\Bigg(\eta_{\tau_{2k-1}}+\Bigg(
\sum_{t=\tau_{2k-1}-1}^{\tau_{2k}}\eta_{t}\Bigg)\Bigg)\\&\mathop{\le}^{(a)}\frac{1}{l^{2}}\big((A+2L_{f}B)\sup_{t\ge 1}(f(w_{t})-f^{*})+C\big)\cdot\Bigg(\eta_{\tau_{2k-1}}+\Bigg(
\sum_{t=\tau_{2k-1}}^{\tau_{2k}-1}\eta_{t}\|\nabla f(u_{t})\|^{2}\Bigg)\Bigg).
\end{align*}
In step $(a)$, this is due to the fact that, over the interval $[\tau_{2k-1}, \tau_{2k})$, we always have $\|\nabla f(u_{t})\|^{2} > l^{2}$. Based on Lemma \ref{lem_sum_1}, we know that
\[\sum_{t=1}^{+\infty}\eta_{t}\|\nabla f(u_{t})\|^{2} < +\infty \quad \text{a.s.}\]
By applying the \emph{Cauchy's} convergence principle, we can prove that
\[\lim_{k\rightarrow+\infty} \sum_{t=\tau_{2k-1}}^{\tau_{2k}-1}\eta_{t}\|\nabla f(u_{t})\|^{2}= 0 \quad \text{a.s.}\]
On the other hand, it is evident that $\lim_{k\rightarrow+\infty}{\eta_{\tau_{2k-1}}}= 0.$ 
Meanwhile, based on Lemma \ref{lem_g} and Lemma \ref{bounded_moment}, we have
\[\sup_{t \geq 1} (f(w_{t}) - f^*) \leq \Big( \sup_{t \geq 1} \Pi_{\Delta,t}^{-1} \Big) \cdot \Big( \sup_{t \geq 1} \big(\Pi_{\Delta,t+1} (f(w_{t}) - f^*) \big)\Big) < +\infty \quad \text{a.s.}\]
Therefore, 
\[\limsup_{k \to +\infty} \Upsilon_{k,1,1} = \lim_{k \to +\infty} \Upsilon_{k,1,1} = 0.\]
For $\Upsilon_{k,1,2}$, we consider the following martingale $\{(\overline{X}_{T},\mathscr{F}_{T})\}_{T\ge 1},$ where 
\[\overline{X}_{T}:=\sum_{t=1}^{T} \sum_{i=1}^{d} \eta_{t} (|g_{t,i}| - \mathbb{E}[|g_{t,i}| \mid \mathscr{F}_{t-1}]).\]
We can compute
\begin{align*}
 &\sum_{t=1}^{+\infty} \Expect\Bigg[\left( \sum_{i=1}^{d} \eta_{t} (|g_{t,i}| - \mathbb{E}[|g_{t,i}| \mid \mathscr{F}_{t-1}]) \right)^{2}\bigg|\mathscr{F}_{t-1}\Bigg]\le d\sum_{t=1}^{+\infty}\eta^{2}_{t}\sum_{i=1}^{d} \Expect[(|g_{t,i}| - \mathbb{E}[|g_{t,i}| \mid \mathscr{F}_{t-1}])^{2} ]\\
 \le& d\sum_{t=1}^{+\infty}\eta^{2}_{t}\sum_{i=1}^{d} \Expect[\|g_{t}\|^{2} \mid \mathscr{F}_{t-1}] \\
 \mathop{\le}^{\text{Property \ref{property_-1}}}& d\sum_{t=1}^{+\infty}\eta^{2}_{t}\sum_{i=1}^{d}\Big( (A+2L_{f}B)\sup_{t\ge 1}(f(w_{t})-f^*)+C\Big)\\ 
 \le & d\sum_{i=1}^{d}\bigg( (A+2L_{f}B)\Big(\sup_{t\ge 1}\Pi_{\Delta,t}\Big)\Big(\sup_{t\ge 1}(f(w_{t})-f^*)+C\Big)\bigg)\cdot\sum_{t=1}^{+\infty}\eta^{2}_{t}\\\mathop{<}^{\text{Lemma \ref{bounded_moment} and \ref{lem_g}}} & +\infty\ \ \text{a.s.}
\end{align*}
By the \emph{Martingale Convergence} theorem, we obtain
\[\lim_{T \to +\infty} \overline{X}_{T} = \sum_{t=1}^{+\infty} \sum_{i=1}^{d} \eta_{t} (|g_{t,i}| - \mathbb{E}[|g_{t,i}| \mid \mathscr{F}_{t-1}]) < +\infty\ \ \text{a.s.}\]
Using the \emph{Cauchy's Convergence} principle, we prove that
\[\limsup_{k \to +\infty} \Upsilon_{k,1,2} = 
\lim_{k\rightarrow+\infty}\sum_{t=\tau_{2k-1}-1}^{\tau_{2k}-1}\sum_{i=1}^{d}{\eta_{t}(|g_{t,i}|-\Expect[|g_{t,i}||\mathscr{F}_{t-1}])} = 0 \quad \text{a.s.}\]
Combining the above two limit proofs for $\Upsilon_{t,1,1}$ and $\Upsilon_{t,1,2}$, we conclude that
\[\limsup_{k \to +\infty} \Upsilon_{t,1} = 0 \quad \text{a.s.}\]
Similarly, it can be shown that $\lim_{k \to +\infty} \Upsilon_{k,2} = 0 \ \text{a.s.}$ 
Combining the limit results for $\Upsilon_{k,1}$ and $\Upsilon_{k,2}$, we conclude that
\[\limsup_{k \to +\infty} \sup_{\tau_{2k-1} \leq t < \tau_{2k}} \|\nabla f(u_{t})\|\leq l + 0 = l.\]
Moreover, combining $\sup_{\tau_{2k} \leq t < \tau_{2k+1}} \|\nabla f(u_{t})\| < l,$ we can deduce that
\[\limsup_{t \to +\infty} \|\nabla f(u_{t})\| \leq l \quad \text{a.s.}\]
Then, due to the arbitrariness of $l$, we conclude that
\[\limsup_{t \to +\infty} \|\nabla f(u_{t})\| = 0 \quad \text{a.s.}\]
This implies that
\[\lim_{t \to +\infty} \|\nabla f(u_{t})\| = 0 \quad \text{a.s.}\tag*{\qedhere}\]
\end{proof}

\subsubsection{Proof of Theorem \ref{thm:l2}}\label{qwerewqq}
\begin{proof}
Since we have already proved the almost sure convergence in Theorem \ref{thm:almost_sure}, it is natural to attempt to prove $L_1$ convergence via the \emph{Lebesgue's Dominated Convergence} theorem. To achieve this, we need to find a function $h$ that is $\mathscr{F}_{\infty}$-measurable and satisfies $\Expect|h| < +\infty$, and such that for all $t \geq 1$, we have $\|\nabla f(w_{t})\| \leq |h|.$ Since for all $t,$ we naturally have $\|\nabla f(w_{t})\| \leq \sup_{k \geq 1} \|\nabla f(w_{k})\|,$ we only need to prove that $\Expect[\sup_{k \geq 1} \|\nabla f(w_{k})\|] < +\infty.$ We proceed to achieve this goal in the rest of the proof.

Recall the \emph{Approximate Descent Inequality} (Lemma \ref{sufficient:decrease}). We have
\begin{align}\label{adam_16.01}
\Pi_{\Delta,t}\hat{f}(u_{t+1})-\Pi_{\Delta,t-1}\hat{f}(u_{t})\le& -\frac{1}{2}\Pi_{\Delta,t}\sum_{i=1}^{d}\zeta_{i}(t)\notag+C_{2}\|{\eta_{v_{t-1}}}\circ m_{t-1}\|^{2}+\sum_{i=1}^{d}\Delta_{t,i}|\nabla_{i} f(u_{t})m_{t-1,i}|\\&\ +(L_{f}+1)\sum_{i=1}^{d}\eta_{v_{t},i}^{2}g_{t,i}^{2}+\Pi_{\Delta,t}M_{t}.
\end{align}
For any $\lambda > 0$, define the stopping time $\tau_{\lambda}$ as the first time the sequence $\{\Pi_{\Delta,t} \hat{f}(u_{t})\}_{t \geq 1}$ exceeds $\lambda$, i.e.,
\[\tau_{\lambda} := \min\{t \geq 2 : \Pi_{\Delta,t} \hat{f}(u_{t}) > \lambda\}.\]
It can be rigorously verified that $\tau_{\lambda}$ is a stopping time with respect to the filtration $\{\mathscr{F}_{t}\}_{t \geq 1}$, and satisfies a special property $[\tau_{\lambda} = n] \in \mathscr{F}_{n-1}$ for all $n \geq 1$. This implies that the preceding time $\tau_{\lambda} - 1$ is also a stopping time.
Next, for any deterministic time $T \geq 3$, we define $\tau_{\lambda,T} := \tau_{\lambda} \wedge T$. We then sum the indices of Equation (\ref{adam_16.01}) from $1$ to $\tau_{\lambda,T} - 1$. Specifically, we have
\begin{align*}
\Pi_{\Delta,\tau_{\lambda,T}-1}\hat{f}(u_{\tau_{\lambda,T}})\le& \Pi_{\Delta,0}\hat{f}(u_{1})+C_{2}\sum_{t=1}^{\tau_{\lambda,T}-1}\|{\eta_{v_{t-1}}}\circ m_{t-1}\|^{2}+\sum_{t=1}^{\tau_{\lambda,T}-1}\sum_{i=1}^{d}\Delta_{t,i}|\nabla_{i} f(u_{t})m_{t-1,i}|\\&\ +(L_{f}+1)\sum_{t=1}^{\tau_{\lambda,T}-1}\sum_{i=1}^{d}\eta_{v_{t},i}^{2}g_{t,i}^{2}+\sum_{t=1}^{\tau_{\lambda,T}-1}\Pi_{\Delta,t}M_{t}.
\end{align*}
Taking the expectation on both sides, we obtain
\begin{align*}
\Expect\Big[\Pi_{\Delta,\tau_{\lambda,T}-1}\hat{f}(u_{\tau_{\lambda,T}})\Big]\le & \Expect\Big[\Pi_{\Delta,0}\hat{f}(u_{1})\Big]+C_{2}\Expect\left[\sum_{t=1}^{\tau_{\lambda,T}-1}\|{\eta_{v_{t-1}}}\circ m_{t-1}\|^{2}\right]\\&\ +\Expect\left[\sum_{t=1}^{\tau_{\lambda,T}-1}\sum_{i=1}^{d}\Delta_{t,i}|\nabla_{i} f(u_{t})m_{t-1,i}|\right]+(L_{f}+1)\Expect\left[\sum_{t=1}^{\tau_{\lambda,T}-1}\sum_{i=1}^{d}\eta_{v_{t},i}^{2}g_{t,i}^{2}\right]\\&\ +\Expect\left[\sum_{t=1}^{\tau_{\lambda,T}-1}\Pi_{\Delta,t}M_{t}\right].
\end{align*}
Since $\{\Pi_{\Delta,t}M_{t}, \mathscr{F}_{t}\}_{t \geq 1}$ is a martingale difference sequence and $\tau_{\lambda,T} \le  T<+\infty$, by \emph{Doob's Stopped} theorem, we know that
\[\Expect\left[\sum_{t=1}^{\tau_{\lambda,T}-1}\Pi_{\Delta,t}M_{t}\right]=0.\]
This implies
\begin{align*}
\Expect\Big[\Pi_{\Delta,\tau_{\lambda,T}-1}\hat{f}(u_{\tau_{\lambda,T}})\Big]&\le \Expect\Big[\Pi_{\Delta,0}\hat{f}(u_{1})\Big]+C_{2}\Expect\left[\sum_{t=1}^{\tau_{\lambda,T}-1}\|{\eta_{v_{t-1}}}\circ m_{t-1}\|^{2}\right]\\&+\Expect\left[\sum_{t=1}^{\tau_{\lambda,T}-1}\sum_{i=1}^{d}\Delta_{t,i}|\nabla_{i} f(u_{t})m_{t-1,i}|\right]+(L_{f}+1)\Expect\left[\sum_{t=1}^{\tau_{\lambda,T}-1}\sum_{i=1}^{d}\eta_{v_{t},i}^{2}g_{t,i}^{2}\right].
\end{align*}
Using Property \ref{property_2} and Lemma \ref{lem_sum'}, we obtain
\begin{align*}
 &C_{2}\Expect\left[\sum_{t=1}^{\tau_{\lambda,T}-1}\|{\eta_{v_{t-1}}}\circ m_{t-1}\|^{2}\right]+\Expect\left[\sum_{t=1}^{\tau_{\lambda,T}-1}\sum_{i=1}^{d}\Delta_{t,i}|\nabla_{i} f(u_{t})m_{t-1,i}|\right]\\&+(L_{f}+1)\Expect\left[\sum_{t=1}^{\tau_{\lambda,T}-1}\sum_{i=1}^{d}\eta_{v_{t},i}^{2}g_{t,i}^{2}\right]\\ \le& C_{2}\Expect\left[\sum_{t=1}^{T}\|{\eta_{v_{t-1}}}\circ m_{t-1}\|^{2}\right]+\Expect\left[\sum_{t=1}^{T}\sum_{i=1}^{d}\Delta_{t,i}|\nabla_{i} f(u_{t})m_{t-1,i}|\right]\\&+(L_{f}+1)\Expect\left[\sum_{t=1}^{T}\sum_{i=1}^{d}\eta_{v_{t},i}^{2}g_{t,i}^{2}\right] \\= & \mathcal{O}\left(\sum_{t=1}^{T}\Expect\left[\|\eta_{v_{t}}\circ g_{t}\|^{2}\right]\right)+\mathcal{O}(1)\\\mathop{=}^{\text{Eq. \ref{Gamma}}} &\mathcal{O}(1).
\end{align*}
This means
\[\Expect\Big[\Pi_{\Delta,\tau_{\lambda,T}-1}\hat{f}(u_{\tau_{\lambda,T}})\Big]\le \overline{M}<+\infty,\]
where \begin{align*}
\overline{M}&:=C_{2}\Expect\left[\sum_{t=1}^{+\infty}\|{\eta_{v_{t-1}}}\circ m_{t-1}\|^{2}\right]+\Expect\left[\sum_{t=1}^{+\infty}\sum_{i=1}^{d}\Delta_{t,i}|\nabla_{i} f(u_{t})m_{t-1,i}|\right]\\&+(L_{f}+1)\Expect\left[\sum_{t=1}^{+\infty}\sum_{i=1}^{d}\eta_{v_{t},i}^{2}g_{t,i}^{2}\right].
\end{align*}
Meanwhile, we observe the following event decomposition
\[\Big[\sup_{2\le t<T}\Pi_{\Delta,t-1}\hat{f}(u_{t})>\lambda\Big]=\bigcup_{k=2}^{T-1}[\tau_{\lambda}=k]=\bigcup_{k=2}^{T-1}[\tau_{\lambda,T}=k].\] Moreover, since for any $j \neq k$, we have $[\tau_{\lambda,T} = j] \cap [\tau_{\lambda,T} = k] = \emptyset$, it follows that
\begin{align}\label{max}
\Pro\Big[\sup_{2\le t<T}\Pi_{\Delta,t-1}\hat{f}(u_{t})>\lambda\Big]&=\sum_{k=2}^{T-1}\Pro[\tau_{\lambda,T}=k]\mathop{\le}^{\text{Markov's inequality}}\frac{1}{\lambda}\sum_{k=2}^{T-1}\Expect\big[\Pi_{\Delta,k}\hat{f}(u_{k})\I_{[\tau_{\lambda,T}=k]}\big]\notag\\&<\frac{1}{\lambda}\Expect\Big[\Pi_{\Delta,\tau_{\lambda,T}-1}\hat{f}(u_{\tau_{\lambda,T}})\Big]\le \frac{\overline{M}}{\lambda}.
\end{align}
Next, for any $K \geq 1$, we compute $\Expect\Big[\big(\sup_{2\le t<T}\Pi_{\Delta,t-1}\hat{f}(u_{t})\big)^{3/4} \wedge K\Big].$ We have
\begin{align*}
 \Expect\Big[\big(\sup_{2\le t<T}\Pi_{\Delta,t-1}\hat{f}(u_{t})\big)^{3/4} \wedge K\Big]&=-\int_{0}^{+\infty}x\ \ \text{d}\left(\Pro\left[\left(\sup_{2\le t<T}\Pi_{\Delta,t-1}\hat{f}(u_{t})\right)^{3/4} \wedge K>x\right]\right)\\&=-\int_{0}^{+\infty}\bigg(\int_{0}^{x}1\text{d}\lambda\bigg)\text{d}\left(\Pro\left[\left(\sup_{2\le t<T}\Pi_{\Delta,t-1}\hat{f}(u_{t})\right)^{3/4} \wedge K>x\right]\right)\\&\mathop{=}^{\text{\emph{Fubini's} theorem}}-\int_{0}^{+\infty}\left(\int_{\lambda}^{+\infty}1\text{d}\left(\Pro\left[\left(\sup_{2\le t<T}\Pi_{\Delta,t-1}\hat{f}(u_{t})\right)^{3/4} \wedge K>x\right]\right)\right)\text{d}\lambda\\&=\int_{0}^{+\infty}\Pro\left[\left(\sup_{2\le t<T}\Pi_{\Delta,t-1}\hat{f}(u_{t})\right)^{3/4} \wedge K>\lambda\right]\text{d}\lambda\\&\le 1+\int_{1}^{+\infty}\Pro\left[\left(\sup_{2\le t<T}\Pi_{\Delta,t-1}\hat{f}(u_{t})\right)^{3/4} \wedge K>\lambda\right]\text{d}\lambda\\&=1+\int_{1}^{+\infty}\Pro\left[\left(\sup_{2\le t<T}\Pi_{\Delta,t-1}\hat{f}(u_{t})\right) \wedge K^{4/3}>\lambda^{4/3}\right]\text{d}\lambda\\&<1+\int_{1}^{+\infty}\Pro\left[\left(\sup_{2\le t<T}\Pi_{\Delta,t-1}\hat{f}(u_{t})\right)>\lambda^{4/3}\right]\text{d}\lambda\\&\mathop{<}^{\text{Eq. \ref{max}}}1+\int_{1}^{+\infty}\frac{\overline{M}}{\lambda^{4/3}}\text{d}\lambda\\&=1+3\overline{M}.
\end{align*}
Next, we take \(K \rightarrow +\infty\) and apply the \emph{Lebesgue's Monotone Convergence} theorem, which yield
\[ \Expect\Big[\Big(\sup_{2\le t<T}\Pi_{\Delta,t-1}\hat{f}(u_{t})\Big)^{3/4}\Big]\le 1+3\overline{M}.\]
By taking $T \rightarrow +\infty$ and applying the \emph{Lebesgue's Monotone Convergence} theorem once again, we obtain
\[\Expect\Big[\Big(\sup_{t\ge 2}\Pi_{\Delta,t-1}\hat{f}(u_{t})\Big)^{3/4}\Big]\le 1+3\overline{M}.\]
Note that for any finite $t$, we have $\Pi_{\Delta,t+1} \geq \Pi_{\Delta,\infty}$ (where $\Pi_{\Delta,\infty}$ is defined in Lemma \ref{bounded_moment}). Thus, we have
\[\Expect\Big[\Pi^{3/4}_{\Delta,\infty}\Big(\sup_{t\ge 2}\hat{f}(u_{t})\Big)^{3/4}\Big] \le \Expect\Big[\Big(\sup_{t\ge 2}\Pi_{\Delta,t-1}\hat{f}(u_{t})\Big)^{3/4}\Big]\le 1+3\overline{M}.\]
Next, by applying \emph{Hölder's} inequality, we obtain
\[
\Expect\Big[\Big(\sup_{t\ge 2}\hat{f}(u_{t})\Big)^{1/2}\Big]\le \Expect^{1/3}\Big[\Pi^{-3/2}_{\Delta,\infty}\Big]\Expect^{2/3}\Big[\Pi^{3/4}_{\Delta,\infty}\Big(\sup_{t\ge 2}\hat{f}(u_{t})\Big)^{3/4}\Big]\mathop{\le}^{\text{Lemma \ref{bounded_moment}}}C^{1/3}_{v,d,3/2}(1+3\overline{M})^{2/3}.
\]
Then, according to Property \ref{property_2}, we can bound ${f}(w_{t})-f^*$ using $\hat{f}(u_{t})$, i.e.,
\begin{align*}
{f}(w_{t})-f^*&\le (L_{f}+1)(f(u_t)-f^*)+\frac{(L_{f}+1)\beta_{1}^{2}}{2(1-\beta_1)^{2}}\|\eta_{v_{t-1}}\circ m_{t-1}\|^{2}+L_{f}f^*\\&\le (L_{f}+1)\hat{f}(u_{t})+\frac{(L_{f}+1)\beta_{1}^{2}}{2\alpha_{1}(1-\beta_1)^{2}}+L_{f}f^*.
\end{align*}
That means
\[\Expect\Big[\Big(\sup_{t\ge 2}\big(f(w_{t})-f^*\big)\Big)^{1/2}\Big]\le \sqrt{L_{f}+1}C^{1/3}_{v,d,3/2}(1+3\overline{M})^{2/3}+\sqrt{\frac{(L_{f}+1)\beta_{1}^{2}}{2\alpha_{1}(1-\beta_1)^{2}}+L_{f}|f^*|}.\]
Finally, according to Lemma \ref{loss_bound}, we obtain
\begin{align*}
\Expect\Big[\sup_{t\ge 2}\|\nabla f(w_{t})\|\Big]&\le \sqrt{2L_{f}}\Expect\Big[\Big(\sup_{t\ge 2}\big(f(w_{t})-f^*\big)\Big)^{1/2}\Big]\\&<\sqrt{2L_{f}}\left(\sqrt{L_{f}+1}C^{1/3}_{v,d,3/2}(1+3\overline{M})^{2/3}+\sqrt{\frac{(L_{f}+1)\beta_{1}^{2}}{2\alpha_{1}(1-\beta_1)^{2}}+L_{f}|f^*|}\right).
\end{align*}
Adding the first term concludes
\begin{align*}
\Expect\Big[\sup_{t\ge 1}\|\nabla f(w_{t})\|\Big]&<\|\nabla f(w_{1})\|+ \sqrt{2L_{f}}\left(\sqrt{L_{f}+1}C^{1/3}_{v,d,3/2}(1+3\overline{M})^{2/3}+\sqrt{\frac{(L_{f}+1)\beta_{1}^{2}}{2\alpha_{1}(1-\beta_1)^{2}}+L_{f}|f^*|}\right)\\&<+\infty.
\end{align*}
By combining the almost sure convergence result from Theorem \ref{thm:almost_sure} with the \emph{Lebesgue's Dominated Convergence} theorem, we obtain the $L_1$ convergence result, namely
\[\lim_{t \rightarrow +\infty} \Expect[\|\nabla f(w_{t})\|] = 0. \tag*{\qedhere}\]
\end{proof}

\section{The Proof of Lemma \ref{lem_sum'}}\label{jiqweqwe}

\subsection{Auxiliary Lemmas for Proving Lemma \ref{lem_sum'}}

\begin{lemma}\label{property_3.5}
For any iteration step \( t\ge 1 \), the following inequality holds
\begin{align*}
 \|m_{t}\|^{2}&\le(1-\beta_{1})\sum_{k=1}^{t}\beta_{1}^{t-k}\|g_{k}\|^{2},\ \ \text{and}\ \ \|\eta_{v_{t}}\circ m_{t}\|^{2}\le (1-\beta_{1})\sum_{k=1}^{t}\beta_{1}^{t-k}\|\eta_{v_{t}}\circ g_{k}\|^{2}.
\end{align*}
\end{lemma}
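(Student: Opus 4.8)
The plan is to first derive a closed form for $m_{t}$ and then invoke convexity of the squared Euclidean norm. From the first-moment recursion $m_{t} = \beta_{1} m_{t-1} + (1-\beta_{1}) g_{t}$ in Algorithm~\ref{Adam} together with $m_{0} = 0$, a routine induction on $t$ gives $m_{t} = (1-\beta_{1})\sum_{k=1}^{t}\beta_{1}^{t-k} g_{k}$ for every $t \ge 1$: the base case $m_{1} = (1-\beta_{1}) g_{1}$ is immediate, and the inductive step just substitutes the hypothesis into the recursion. Set $\lambda_{k} := (1-\beta_{1})\beta_{1}^{t-k}$ for $k = 1,\dots,t$; these are nonnegative and $\sum_{k=1}^{t}\lambda_{k} = 1-\beta_{1}^{t} \le 1$. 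Padding with $\lambda_{0} := \beta_{1}^{t}$ and $g_{0} := 0$, we obtain a genuine convex combination: $\sum_{k=0}^{t}\lambda_{k} = 1$ and $m_{t} = \sum_{k=0}^{t}\lambda_{k} g_{k}$.

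Next I would apply Jensen's inequality (equivalently, convexity of $x \mapsto \|x\|^{2}$) to conclude $\|m_{t}\|^{2} = \|\sum_{k=0}^{t}\lambda_{k} g_{k}\|^{2} \le \sum_{k=0}^{t}\lambda_{k}\|g_{k}\|^{2} = (1-\beta_{1})\sum_{k=1}^{t}\beta_{1}^{t-k}\|g_{k}\|^{2}$, which is the first claimed inequality. For the second inequality, the key observation is that the adaptive step size $\eta_{v_{t}}$ is fixed at iteration $t$, so applying $\eta_{v_{t}}\circ(\cdot)$ to the closed form of $m_{t}$ yields $\eta_{v_{t}}\circ m_{t} = \sum_{k=0}^{t}\lambda_{k}\,(\eta_{v_{t}}\circ g_{k})$; that is, $\eta_{v_{t}}\circ m_{t}$ is the \emph{same} convex combination of the vectors $\eta_{v_{t}}\circ g_{k}$. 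Jensen's inequality again gives $\|\eta_{v_{t}}\circ m_{t}\|^{2} \le \sum_{k=0}^{t}\lambda_{k}\|\eta_{v_{t}}\circ g_{k}\|^{2} = (1-\beta_{1})\sum_{k=1}^{t}\beta_{1}^{t-k}\|\eta_{v_{t}}\circ g_{k}\|^{2}$.

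There is essentially no hard step here; this is a standard bound for exponential moving averages. The only point requiring a little care is that one should not try to run the induction directly on $\|\eta_{v_{t}}\circ m_{t}\|^{2}$: the recursion would relate it to $\eta_{v_{t}}\circ m_{t-1}$ rather than $\eta_{v_{t-1}}\circ m_{t-1}$, and $\eta_{v_{t}} \neq \eta_{v_{t-1}}$, so deriving the closed form first is the clean route. One should also remember to pad the weights with the $\lambda_{0} = \beta_{1}^{t}$ term (or else argue directly from $\sum_{k=1}^{t}\lambda_{k} \le 1$) so that Jensen's inequality is applied to a bona fide convex combination.
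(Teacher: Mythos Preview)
Your proof is correct but packaged differently from the paper's. The paper does not unfold $m_t$ into its closed form; instead it invokes Property~\ref{property_3}, which is the coordinatewise recursive bound $m_{t,i}^2 \le \beta_1 m_{t-1,i}^2 + (1-\beta_1) g_{t,i}^2$ obtained via AM--GM, sums over $i$, and then unrolls the resulting one-step inequality $\|m_t\|^2 \le \beta_1\|m_{t-1}\|^2 + (1-\beta_1)\|g_t\|^2$ by multiplying through by $\beta_1^{-t}$ and telescoping. For the second inequality the paper simply repeats this coordinatewise and multiplies by $\eta_{v_t,i}^2$ at the end. The underlying mechanism is the same---convexity of the square---but the paper applies it once per step inside the recursion, whereas you expand first and apply Jensen once to the full convex combination; your route is slightly more direct and avoids citing Property~\ref{property_3} as a separate lemma. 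One minor point: your caution that one ``should not run the induction directly on $\|\eta_{v_t}\circ m_t\|^2$'' is overstated. Since the target bound keeps $\eta_{v_t}$ \emph{fixed} on both sides, the coordinatewise recursion $\eta_{v_t,i}^2 m_{t,i}^2 \le \beta_1\,\eta_{v_t,i}^2 m_{t-1,i}^2 + (1-\beta_1)\,\eta_{v_t,i}^2 g_{t,i}^2$ unrolls without any issue and without needing monotonicity of $\eta_{v_t,i}$; the paper's mention of monotonicity here is not actually required.
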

\begin{proof}
Due to Property \ref{property_3}, we have
\begin{align*}
\| m_{t}\|^{2}\le\beta_{1}\| m_{t-1}\|^{2}+(1-\beta_{1})\|g_{t}\|^{2}.
\end{align*} 
We multiply $1/\beta_{1}^{t}$ on the both sides of above inequality and obtain
\begin{align*}
\beta_{1}^{-t}\| m_{t}\|^{2}\le\beta_{1}^{-(t-1)}\| m_{t-1}\|^{2}+\beta_{1}^{-t}(1-\beta_{1})\|g_{t}\|^{2} ,  
\end{align*}
Iterating the above inequality, we acquire
\begin{align*}
\beta_{1}^{-t}\| m_{t}\|^{2}\le\| m_{0}\|^{2}+(1-\beta_{1})\sum_{k=1}^{t}\beta_{1}^{-k}\|g_{k}\|^{2},
\end{align*}
that is
\begin{align*}
\|m_{t}\|^{2}&\le \beta_{1}^{t}   \| m_{0}\|^{2}+(1-\beta_{1})\sum_{k=1}^{t}\beta_{1}^{t-k}\|g_{k}\|^{2}\\&=(1-\beta_{1})\sum_{k=1}^{t}\beta_{1}^{t-k}\|g_{k}\|^{2}.
\end{align*}
Similarly, applying the same approach and noting the monotonicity of \(\eta_{v_{t},i}\), we obtain
\begin{align*}
\|\eta_{v_{t}}\circ m_{t}\|^{2}&\le (1-\beta_{1})\sum_{k=1}^{t}\beta_{1}^{t-k}\|\eta_{v_{t}}\circ g_{k}\|^{2}. \tag*{\qedhere}
\end{align*}
\end{proof}

\begin{lemma}\label{property_3.510}
For any \( t \geq 1,\ \beta_{1}< l<1 \) and any positive, monotonically decreasing, adapted process \(\{Z(t), \mathscr{F}_{t-1}\}\) with \( Z(t)\le 1 \), the following inequality holds
\begin{align*}
\sum_{i=1}^{d}\sum_{t=1}^{n}l^{n-t}\Delta_{t,i}Z(t)|\nabla_{i} f(w_{t})m_{t,i}|\le &\ \frac{(1-\beta_{1})(1-\sqrt{\beta_{1}})}{8}\sum_{i=1}^{d}\sum_{k=1}^{n}\eta_{v_{k-1},i}Z(k)l^{n-k}(\nabla_{i}f(w_{k}))^{2}\\&+D_{1}\sum_{k=1}^{n}Z(k)l^{n-k}\overline{\Delta}_{\beta_{1},k}(f(w_{k})-f^*)\\&+D_{2}\sum_{k=1}^{n}l^{n-k}Z(k)\|\eta_{v_{k-1}}\circ m_{k-1}\|^{2}\\&+D_{3}\sum_{k=1}^{n}Z(k)l^{n-k}\overline{\Delta}_{\beta_{1}/l,k}\\&+\sum_{k=1}^{n}N_{n,k},
\end{align*}
where
\begin{align}\label{pawefda}
&\overline{\Delta}_{\beta_{1}/l,k} := \sum_{i=1}^{d}\Expect\left[\sum_{t=k}^{+\infty} (\beta_{1}/l)^{t-k} \Delta_{t,i}\bigg|\mathscr{F}_{k-1}\right],\notag \\
&N_{n,k} := \sum_{i=1}^{d} \left(\Delta_{\beta_{1}/l,k,i} l^{n-k} Z(k) |\nabla_{i} f(w_{k}) g_{k,i}|  - \Expect\left[\Delta_{\beta_{1}/l,k,i} l^{n-k} Z(k) |\nabla_{i} f(w_{k}) g_{k,i}| \mid \mathscr{F}_{k-1}\right]\right)\notag\\&D_{1}:=\frac{2}{1-\sqrt{\beta_{1}}}(A+2L_{f}B)(L_{f}+1),\ D_{2}:=\frac{L_{f}}{1-(\beta_{1}/l)},\ D_{3}:=\frac{2}{1-\sqrt{\beta_{1}}}\left((A+2L_{f}B)|f^*|+C\right).
\end{align}

\end{lemma}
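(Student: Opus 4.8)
The plan is to peel off the momentum, re-align the time indices of the true gradient and the stochastic gradient so they coincide, and then let the ABC inequality (Property~\ref{property_-1}) absorb the resulting second-moment terms while conditioning produces the martingale-difference remainder. First I would invoke the momentum estimate behind Lemma~\ref{property_3.5}, $|m_{t,i}|\le(1-\beta_1)\sum_{j=1}^{t}\beta_1^{t-j}|g_{j,i}|$, so that the left-hand side is dominated by
\begin{align*}
(1-\beta_1)\sum_{i=1}^{d}\sum_{t=1}^{n}\sum_{j=1}^{t}l^{n-t}\beta_1^{t-j}\Delta_{t,i}Z(t)\,|\nabla_{i}f(w_{t})|\,|g_{j,i}|.
\end{align*}
The obstruction is that $\nabla_i f$ sits at time $t$ while $g_{j,i}$ sits at $j\le t$; I would remove it with $L$-smoothness and the Adam update $w_{p+1}=w_p-\eta_{v_p}\circ m_p$, namely $|\nabla_i f(w_t)|\le|\nabla_i f(w_j)|+L_f\sum_{p=j}^{t-1}\|\eta_{v_p}\circ m_p\|$, which splits the sum into an ``aligned'' part and a ``displacement'' part.

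For the displacement part, an AM--GM split against $|g_{j,i}|$, with the weight chosen so that the resulting quadratic reconstitutes $\|\eta_{v_{p-1}}\circ m_{p-1}\|^2$, followed by two interchanges of the order of summation (rewriting $l^{n-t}\beta_1^{t-j}=l^{n-j}(\beta_1/l)^{t-j}$ and collapsing the nested geometric sums in the spirit of Lemma~\ref{exchange}, using $Z(t)\le Z(j)$ and $\Delta_{t,i}\le\eta_{v_{t-1},i}$ from Property~\ref{property_0}), collects everything into the term $D_2\sum_{k}l^{n-k}Z(k)\|\eta_{v_{k-1}}\circ m_{k-1}\|^2$; the factor $\tfrac{1}{1-\beta_1/l}$ in $D_2$ is just the mass of the collapsed geometric series, finite precisely because $\beta_1<l<1$.

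For the aligned part $(1-\beta_1)\sum_{i}\sum_{t}\sum_{j\le t}l^{n-t}\beta_1^{t-j}\Delta_{t,i}Z(t)|\nabla_i f(w_j)|\,|g_{j,i}|$, I would swap summation order so that $j$ is the outer index: monotonicity of $Z$ and $l^{n-t}\beta_1^{t-j}=l^{n-j}(\beta_1/l)^{t-j}$ turn the inner sum over $t\ge j$ into exactly $\Delta_{\beta_1/l,j,i}=\sum_{t\ge j}(\beta_1/l)^{t-j}\Delta_{t,i}$, leaving $(1-\beta_1)\sum_j l^{n-j}Z(j)\sum_i\Delta_{\beta_1/l,j,i}|\nabla_i f(w_j)g_{j,i}|$. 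Decomposing this into its $\mathscr{F}_{j-1}$-conditional expectation plus the centred remainder yields the martingale difference $\sum_k N_{n,k}$ verbatim. In the conditional-expectation part $\nabla_i f(w_j)$ pulls out, a Cauchy--Schwarz step gives $\Expect[\Delta_{\beta_1/l,j,i}|g_{j,i}|\mid\mathscr{F}_{j-1}]\le\sqrt{\Expect[\Delta_{\beta_1/l,j,i}\mid\mathscr{F}_{j-1}]}\sqrt{\Expect[\Delta_{\beta_1/l,j,i}g_{j,i}^2\mid\mathscr{F}_{j-1}]}$, and bounding $\Delta_{\beta_1/l,j,i}\le\eta_{v_{j-1},i}/(1-\beta_1/l)$ almost surely inside the second factor makes $\eta_{v_{j-1},i}$ cancel after an AM--GM against $|\nabla_i f(w_j)|$. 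One resulting piece is $\tfrac{(1-\beta_1)(1-\sqrt{\beta_1})}{8}\,\eta_{v_{k-1},i}(\nabla_i f(w_k))^2$ (the AM--GM weight is picked to hit this exact constant), and the other is an $\overline{\Delta}$-weighted copy of $\Expect[\|g_j\|^2\mid\mathscr{F}_{j-1}]$, which by Property~\ref{property_-1} equals $(A+2L_fB)(f(w_j)-f^*)+C$; splitting the two summands and matching the geometric bases produces the $D_1$ term (whose smoothness factor comes via Property~\ref{property_2}) and the $D_3$ term.

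The step I expect to be the main obstacle is not any single inequality but the simultaneous bookkeeping: four geometric rates ($l$, $\beta_1$, $\sqrt{\beta_1}$, $\beta_1/l$) must be threaded through the nested interchanges of summation order, and the coefficient of $(\nabla_i f(w_k))^2$ must be driven down to exactly $\tfrac{(1-\beta_1)(1-\sqrt{\beta_1})}{8}$ so that, downstream, it is absorbed by the descent term $\tfrac12\Pi_{\Delta,t}\sum_i\zeta_i(t)$ of Lemma~\ref{sufficient:decrease}. A closely related difficulty is that the tail quantity $\Delta_{\beta_1/l,k,i}$ is not $\mathscr{F}_{k-1}$-measurable and so cannot simply be pulled out of a conditional expectation; the remedy is the almost-sure domination $\Delta_{\beta_1/l,k,i}\le\eta_{v_{k-1},i}/(1-\beta_1/l)$ by an $\mathscr{F}_{k-1}$-measurable quantity (monotonicity of $\{\eta_{v_t,i}\}$), applied before the ABC inequality, together with the tower property to isolate $\sum_k N_{n,k}$ as a genuine martingale-difference remainder. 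Everything else — the AM--GM and Cauchy--Schwarz splits, the monotonicity of $\{Z(t)\}$ and $\{\eta_{v_t,i}\}$, and the summation of geometric series — is routine once this scaffolding is fixed.
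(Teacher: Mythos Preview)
Your treatment of the ``aligned'' piece---swapping sums to manufacture $\Delta_{\beta_1/l,k,i}$, splitting off the martingale difference $N_{n,k}$, and then running Cauchy--Schwarz and the ABC bound---is exactly what the paper does and will go through. The gap is in how you get there. You first expand $|m_{t,i}|\le(1-\beta_1)\sum_{j\le t}\beta_1^{t-j}|g_{j,i}|$ and then shift $\nabla_i f$ from time $t$ all the way to time $j$ via $L$-smoothness, which produces a displacement term of the shape
\[
(1-\beta_1)L_f\sum_{i}\sum_{t}\sum_{j\le t}l^{n-t}\beta_1^{t-j}\Delta_{t,i}Z(t)\Bigl(\sum_{p=j}^{t-1}\|\eta_{v_p}\circ m_p\|\Bigr)|g_{j,i}|.
\]
Your plan is to AM--GM ``against $|g_{j,i}|$'' so that one square becomes $\|\eta_{v_{p-1}}\circ m_{p-1}\|^2$. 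But the other square is then a $g_{j,i}^2$ term, and there is no mechanism by which $g_{j,i}^2$ ``reconstitutes'' into $\|\eta_{v_{p-1}}\circ m_{p-1}\|^2$: the inequality in Lemma~\ref{property_3.5} goes the wrong way for that, and if you instead invoke Property~\ref{property_-1} on $\sum_i g_{j,i}^2$ you land back in $(f(w_j)-f^*)$--type terms, i.e.\ in $D_1$/$D_3$ rather than $D_2$. So the displacement does not collapse into the pure $D_2$ block with constant $L_f/(1-\beta_1/l)$ as stated.

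The paper avoids this by never separating $\nabla_i f$ from $m$. Instead of expanding $m_{t,i}$ first, it derives the one--step recursion
\[
|\nabla_i f(w_t)m_{t,i}|\le \beta_1|\nabla_i f(w_{t-1})m_{t-1,i}|+(1-\beta_1)|\nabla_i f(w_t)g_{t,i}|+L_f\|\eta_{v_{t-1}}\circ m_{t-1}\|\,|m_{t-1,i}|,
\]
(using $m_{t,i}=\beta_1 m_{t-1,i}+(1-\beta_1)g_{t,i}$ together with $|\nabla_i f(w_t)-\nabla_i f(w_{t-1})|\le L_f\|\eta_{v_{t-1}}\circ m_{t-1}\|$), and only then iterates. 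The Lipschitz correction now contains $|m_{k-1,i}|$, not $|g_{j,i}|$; after multiplying by $\Delta_{t,i}Z(t)$, swapping sums, and using $\sum_{t\ge k}(\beta_1/l)^{t-k}\Delta_{t,i}\le \eta_{v_{k-1},i}/(1-\beta_1/l)$, the displacement reads $\tfrac{L_f}{1-\beta_1/l}\sum_k l^{n-k}Z(k)\|\eta_{v_{k-1}}\circ m_{k-1}\|\,|\eta_{v_{k-1},i}m_{k-1,i}|$, and summing over $i$ gives precisely the $D_2$ term. Everything you wrote about the aligned part then applies verbatim to $(1-\beta_1)\sum_k\beta_1^{t-k}|\nabla_i f(w_k)g_{k,i}|$. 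In short: keep the product $\nabla_i f\cdot m$ intact through the recursion; do not unpack $m$ before you have shifted the gradient index.
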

\begin{proof}
Note that
\begin{align*}
|\nabla_{i} f(w_{t})m_{t,i}|&=|\nabla_{i}f(w_{t})(\beta_{1}m_{t-1,i}+(1-\beta_{1})g_{t,i})|\le \beta_{1}|\nabla_{i}f(w_{t})m_{t-1,i} |+(1-\beta_{1})|\nabla_{i} f(w_{t})g_{t,i}|\\& \le \beta_{1}|\nabla_{i}f(w_{t-1})m_{t-1,i}|+\beta_{1}|(\nabla_{i} f(w_{t})-\nabla_{i}f(w_{t-1}))m_{t-1,i}|+ (1-\beta_{1})|\nabla_{i} f(w_{t})g_{t,i}| \\&\mathop{\le}^{\text{L-smooth}}\beta_{1}|\nabla_{i}f(w_{t-1})m_{t-1,i}|+(1-\beta_{1})|\nabla_{i} f(w_{t})g_{t,i}|+L_{f}\|\eta_{v_{t-1}}\circ m_{t-1}\||m_{t-1,i}|.
\end{align*}
By iterating the above recursive inequality, we obtain
\begin{align*}
|\nabla_{i} f(w_{t})m_{t,i}|\le (1-\beta_{1})\sum_{k=1}^{t}\beta_{1}^{t-k} |\nabla_{i} f(w_{k})g_{k,i}|+L_{f}\sum_{k=1}^{t}\beta_{1}^{t-k}\|\eta_{v_{k-1}}\circ m_{k-1}\||m_{k-1,i}|.   
\end{align*}
Then we get that
\begin{align*}
\Delta_{t,i}Z(t)|\nabla_{i} f(w_{t})m_{t,i}|\le&\ (1-\beta_{1})\Delta_{t,i}\sum_{k=1}^{t}\beta_{1}^{t-k} Z(k)|\nabla_{i} f(w_{k})g_{k,i}|\\&+L_{f}\Delta_{t,i}\sum_{k=1}^{t}\beta_{1}^{t-k}Z(k)\|\eta_{v_{k-1}}\circ m_{k-1}\||m_{k-1,i}|.   
\end{align*}
Next, we proceed with the calculation.
\begin{align}\label{ert}
\sum_{t=1}^{n}l^{n-t}\Delta_{t,i}Z(t)|\nabla_{i} f(w_{t})m_{t,i}|\le&\ (1-\beta_{1})\sum_{t=1}^{n}l^{n-t}\Delta_{t,i}\sum_{k=1}^{t}\beta_{1}^{t-k} Z(k)|\nabla_{i} f(w_{k})g_{k,i}|\notag\\&+L_{f}\sum_{t=1}^{n}l^{n-t}\Delta_{t,i}\sum_{k=1}^{t}\beta_{1}^{t-k}Z(k)\|\eta_{v_{k-1}}\circ m_{k-1}\||m_{k-1,i}|\notag\\
=&\ (1-\beta_{1})\sum_{t=1}^{n}\sum_{k=1}^{t}l^{n-t}\Delta_{t,i}\beta_{1}^{t-k} Z(k)|\nabla_{i} f(w_{k})g_{k,i}|\notag\\&+L_{f}\sum_{t=1}^{n}\sum_{k=1}^{t}l^{n-t}\Delta_{t,i}\beta_{1}^{t-k}Z(k)\|\eta_{v_{k-1}}\circ m_{k-1}\||m_{k-1,i}|\notag\\
=&\ (1-\beta_{1})\sum_{k=1}^{n}\sum_{t=k}^{n}l^{n-t}\Delta_{t,i}\beta_{1}^{t-k} Z(k)|\nabla_{i} f(w_{k})g_{k,i}|\notag\\&+L_{f}\sum_{k=1}^{n}\sum_{t=k}^{n}l^{n-t}\Delta_{t,i}\beta_{1}^{t-k}Z(k)\|\eta_{v_{k-1}}\circ m_{k-1}\||m_{k-1,i}|\notag\\
=&\ (1-\beta_{1})\sum_{k=1}^{n}\left(\sum_{t=k}^{n}(\beta_{1}/l)^{t-k}\Delta_{t,i}\right)l^{n-k} Z(k)|\nabla_{i} f(w_{k})g_{k,i}|\notag\\&+L_{f}\sum_{k=1}^{n}\left(\sum_{t=k}^{n}(\beta_{1}/l)^{t-k}\Delta_{t,i}\right)l^{n-k}Z(k)\|\eta_{v_{k-1}}\circ m_{k-1}\||m_{k-1,i}|\notag\\
<&\ (1-\beta_{1})\sum_{k=1}^{n}\left(\underbrace{\sum_{t=k}^{+\infty}(\beta_{1}/l)^{t-k}\Delta_{t,i}}_{\Delta_{\beta_{1}/l,k,i}}\right)l^{n-k} Z(k)|\nabla_{i} f(w_{k})g_{k,i}|\notag\\&+\frac{L_{f}}{1-(\beta_{1}/l)}\sum_{k=1}^{n}l^{n-k}Z(k)\|\eta_{v_{k-1}}\circ m_{k-1}\||\eta_{v_{k-1},i}m_{k-1,i}|\notag\\
=&\ (1-\beta_{1})\underbrace{\sum_{k=1}^{n}\Delta_{\beta_{1}/l,k,i}l^{n-k} Z(k)|\nabla_{i} f(w_{k})g_{k,i}|}_{\Psi_{n,i}}\notag\\&+\frac{L_{f}}{1-(\beta_{1}/l)}\sum_{k=1}^{n}l^{n-k}Z(k)\|\eta_{v_{k-1}}\circ m_{k-1}\||\eta_{v_{k-1},i}m_{k-1,i}|.   
\end{align}
Next, we estimate \(\Psi_{n,i}\) and obtain
\begin{align}\label{adam_0_0}
\Psi_{n,i}=&\ \sum_{k=1}^{n}\Expect\left[\Delta_{\beta_{1}/l,k,i}l^{n-k} Z(k)|\nabla_{i} f(w_{k})g_{k,i}||\mathscr{F}_{k-1}\right]\notag\\&+\sum_{k=1}^{n}\underbrace{\left(\Delta_{\beta_{1}/l,k,i}l^{n-k} Z(k)|\nabla_{i} f(w_{k})g_{k,i}|-\Expect\left[\Delta_{\beta_{1}/l,k,i}l^{n-k} Z(k)|\nabla_{i} f(w_{k})g_{k,i}||\mathscr{F}_{k-1}\right]\right)}_{N_{n,k,i}}\notag\\\mathop{\le}^{\text{\emph{AM-GM}}}& \frac{1-\sqrt{\beta_{1}}}{8}\sum_{k=1}^{n}\eta_{v_{k-1},i}Z(k)(\nabla_{i}f(w_{k}))^{2}\\&+\frac{2}{1-\sqrt{\beta_{1}}}\sum_{k=1}^{n}\Expect^{2}\left[\sqrt{\Delta_{\beta_{1}/l,k,i}}l^{\frac{n-k}{2}}\sqrt{Z(k)}g^{2}_{k,i}|\mathscr{F}_{k-1}\right]+\sum_{k=1}^{n}N_{n,k,i}\notag\\
\le&\ \frac{1}{8}\sum_{k=1}^{n}\eta_{v_{k-1},i}Z(k)l^{n-k}(\nabla_{i}f(w_{k}))^{2}\notag\\&+2\sum_{k=1}^{n}Z(k)l^{n-k}\Expect[\Delta_{\beta_{1}/l,k,i}|\mathscr{F}_{k-1}]\Expect\left[g^{2}_{k,i}|\mathscr{F}_{k-1}\right]+\sum_{k=1}^{n}N_{n,k,i}.
\end{align}
Summing Equation (\ref{ert}) over the coordinate components \(i\), we obtain
\begin{align}\label{adam_0_2}
\sum_{i=1}^{d}\sum_{t=1}^{n}l^{n-t}\Delta_{t,i}Z(t)|\nabla_{i} f(w_{t})m_{t,i}|\le&\ (1-\beta_{1})\sum_{i=1}^{d}\Psi_{n,i}\notag+L_{f}\sqrt{d}\sum_{k=1}^{n}l^{n-k}Z(k)\|\eta_{v_{k-1}}\circ m_{k-1}\|^{2}\notag\\\mathop{\le}^{\text{Eq. \ref{adam_0_0}}}& \frac{(1-\beta_{1})(1-\sqrt{\beta_{1}})}{8}\sum_{i=1}^{d}\sum_{k=1}^{n}\eta_{v_{k-1},i}Z(k)l^{n-k}(\nabla_{i}f(w_{k}))^{2}\notag\\&+\frac{2}{1-\sqrt{\beta_{1}}}\sum_{i=1}^{d}\sum_{k=1}^{n}Z(k)l^{n-k}\Expect[\Delta_{\beta_{1}/l,k,i}|\mathscr{F}_{k-1}]\Expect\left[g^{2}_{k,i}|\mathscr{F}_{k-1}\right]\notag\\&+\frac{L_{f}}{1-(\beta_{1}/l)}\sum_{k=1}^{n}l^{n-k}Z(k)\|\eta_{v_{k-1}}\circ m_{k-1}\|^{2}+\underbrace{\sum_{i=1}^{d}\sum_{k=1}^{n}N_{n,k,i}}_{\sum_{k=1}^{n}N_{n,k}}\notag\\\le &\ \frac{(1-\beta_{1})(1-\sqrt{\beta_{1}})}{8}\sum_{i=1}^{d}\sum_{k=1}^{n}\eta_{v_{k-1},i}Z(k)l^{n-k}(\nabla_{i}f(w_{k}))^{2}\notag\\&+\frac{2}{1-\sqrt{\beta_{1}}}\sum_{k=1}^{n}Z(k)l^{n-k}\left(\underbrace{\sum_{i=1}^{d}\Expect[\Delta_{\beta_{1}/l,k,i}|\mathscr{F}_{k-1}]}_{\overline{\Delta}_{\beta_{1}/l,k}}\right)\Expect\left[\|g_{k}\|^2|\mathscr{F}_{k-1}\right]\notag\\&+\frac{L_{f}}{1-(\beta_{1}/l)}\sum_{k=1}^{n}l^{n-k}Z(k)\|\eta_{v_{k-1}}\circ m_{k-1}\|^{2}\notag\\&+\sum_{k=1}^{n}N_{n,k} .
\end{align}
Noting that
\begin{align*}
\Expect\left[\|g_{k}\|^{2}|\mathscr{F}_{k-1}\right]&\mathop{\le}^{\text{Property \ref{property_-1}}}(A+2L_{f}B)(f(w_{k})-f^*)+C.
\end{align*}
Using the above inequality to estimate the first term on the right-hand side of Equation (\ref{adam_0_2}) yields
\begin{align*}
 \sum_{i=1}^{d}\sum_{t=1}^{n}(\sqrt{\beta_{1}})^{n-t}\Delta_{t,i}Z(t)|\nabla_{i} f(w_{t})m_{t,i}|\le&\ \frac{1-\beta_{1}}{8}\sum_{i=1}^{d}\sum_{k=1}^{n}\eta_{v_{k-1},i}Z(k)l^{n-k}(\nabla_{i}f(w_{k}))^{2}\\&+D_{1}\sum_{k=1}^{n}Z(k)l^{n-k}\overline{\Delta}_{\beta_{1},k}(f(w_{k})-f^*)\\&+D_{2}\sum_{k=1}^{n}l^{n-k}Z(k)\|\eta_{v_{k-1}}\circ m_{k-1}\|^{2}\\&+D_{3}\sum_{k=1}^{n}Z(k)l^{n-k}\overline{\Delta}_{\beta_{1}/l,k}\\&+\sum_{k=1}^{n}N_{n,k},
\end{align*}
where
\begin{align*}
&D_{1}:=\frac{2}{1-\sqrt{\beta_{1}}}(A+2L_{f}B)(L_{f}+1),\ D_{2}:=\frac{L_{f}}{1-(\beta_{1}/l)},\ D_{3}:=\frac{2}{1-\sqrt{\beta_{1}}}\left((A+2L_{f}B)|f^*|+C\right). \tag*{\qedhere}
\end{align*}
\end{proof}

\begin{lemma}\label{lemma__0}
For any \( t \geq 1,\ \varphi>0,\  \beta_{1}< l<1 \) and any positive, monotonically decreasing, adapted process \(\{Z(t), \mathscr{F}_{t-1}\}\) with \( Z(t)\le 1,\ Z(t-1)-Z(t)\le \varphi \overline{\Delta}_{\sqrt{\beta_{1}},t}Z(t)\ (\forall\ t\ge 1),\) the following inequality holds.
\begin{align}\label{0adam_1}
-\sum_{t=1}^{n}\sqrt{\beta_{1}}^{n-t}\sum_{i=1}^{d}Z(t)\eta_{v_{t},i}\nabla_{i}f(w_{t})m_{t,i}\le & -\frac{3(1-\beta_{1})}{8}\sum_{i=1}^{d}\sum_{k=1}^{n}\eta_{v_{k-1},i}Z(k)(\sqrt{\beta_{1}})^{n-k}(\nabla_{i}f(w_{k}))^{2}\notag\\&+\left(\frac{D_{1}}{1-\sqrt{\beta_{1}}}+1\right)\sum_{k=1}^{n}Z(k)(\sqrt{\beta_{1}})^{n-k}\overline{\Delta}_{\beta_{1},k}(f(w_{k})-f^*)\notag\\&+\frac{D_{2}+F_1}{1-\sqrt{\beta_{1}}}\sum_{k=1}^{n}(\sqrt{\beta_{1}})^{n-k}\|\eta_{v_{k-1}}\circ m_{k-1}\|^{2}\notag\\&+\frac{D_{3}}{1-\sqrt{\beta_{1}}}\sum_{k=1}^{n}Z(k)(\sqrt{\beta_{1}})^{n-k}\overline{\Delta}_{\sqrt{\beta_{1}},k}\notag\\&+\frac{{1}}{1-\sqrt{\beta_{1}}}\sum_{k=1}^{n}N_{n,k}\notag\\&+\sum_{t=1}^{n}(\sqrt{\beta_{1}})^{n-t}\sum_{k=1}^{t}\beta_{1}^{t-k}M_{k,1,i}',
\end{align}
where
\[M_{k,1,i}':=(1-\beta_{1})Z(k)\eta_{v_{k-1},i}\nabla_{i}f(w_{k})(\nabla_{i}f(w_{k})-g_{k,i}),\]
and $N_{n,k}$ is defined in Lemma \ref{property_3.510}.
\end{lemma}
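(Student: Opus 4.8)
The plan is to reduce the statement to Lemma~\ref{property_3.510} together with a step-by-step unrolling of the momentum recursion that keeps the sign of the leading descent term intact. First I would peel off the outermost learning-rate drift: using $\eta_{v_{t},i}=\eta_{v_{t-1},i}-\Delta_{t,i}$, write
\[
-Z(t)\eta_{v_{t},i}\nabla_{i}f(w_{t})m_{t,i}=-Z(t)\eta_{v_{t-1},i}\nabla_{i}f(w_{t})m_{t,i}+Z(t)\Delta_{t,i}\nabla_{i}f(w_{t})m_{t,i},
\]
and bound the (sign-indefinite) second piece by $\Delta_{t,i}Z(t)|\nabla_{i}f(w_{t})m_{t,i}|$. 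Summed over $i$ and over $t=1,\dots,n$ with the weights $(\sqrt{\beta_{1}})^{n-t}$, this is exactly the quantity controlled by Lemma~\ref{property_3.510} with $l=\sqrt{\beta_{1}}$ (note $\beta_{1}/\sqrt{\beta_{1}}=\sqrt{\beta_{1}}$, so the $\overline{\Delta}_{\beta_{1}/l,k}$ there collapses to $\overline{\Delta}_{\sqrt{\beta_{1}},k}$, and the case $\beta_1=0$ is trivial). It delivers a $+\tfrac{1-\beta_{1}}{8}$-weighted copy of the descent sum $\sum_{i}\sum_{k}\eta_{v_{k-1},i}Z(k)(\sqrt{\beta_{1}})^{n-k}(\nabla_{i}f(w_{k}))^{2}$ together with the $D_{1}$-, $D_{2}$-, $D_{3}$- and $N_{n,k}$-type families appearing on the right-hand side (after converting $\overline{\Delta}_{\beta_{1},k}\,\mathbb{E}[\|g_{k}\|^{2}\mid\mathscr{F}_{k-1}]$ into $\overline{\Delta}_{\beta_{1},k}\big((A+2L_{f}B)(f(w_{k})-f^{*})+C\big)$ via Property~\ref{property_-1}).

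For the remaining term $-Z(t)\eta_{v_{t-1},i}\nabla_{i}f(w_{t})m_{t,i}$, where now $\eta_{v_{t-1},i}\in\mathscr{F}_{t-1}$, I would iterate $m_{t,i}=\beta_{1}m_{t-1,i}+(1-\beta_{1})g_{t,i}$ one step at a time. Splitting off the $g_{t,i}$-contribution, the carried term $-\beta_{1}Z(t)\eta_{v_{t-1},i}\nabla_{i}f(w_{t})m_{t-1,i}$ is compared with $-\beta_{1}Z(t-1)\eta_{v_{t-2},i}\nabla_{i}f(w_{t-1})m_{t-1,i}$, and the three mismatches are routed to the stated families: the gradient drift $|\nabla_{i}f(w_{t})-\nabla_{i}f(w_{t-1})|\le L_{f}\|\eta_{v_{t-1}}\circ m_{t-1}\|$ (from $w_{t}-w_{t-1}=-\eta_{v_{t-1}}\circ m_{t-1}$ in Algorithm~\ref{Adam} and Assumption~\ref{ass_2}), after $|m_{t-1,i}|\eta_{v_{t-2},i}\le\|\eta_{v_{t-2}}\circ m_{t-1}\|$, feeds the $\|\eta_{v_{k-1}}\circ m_{k-1}\|^{2}$ term (with constant $F_{1}$); the learning-rate drift $\eta_{v_{t-2},i}-\eta_{v_{t-1},i}=\Delta_{t-1,i}\ge0$ (Property~\ref{property_0}) is another instance of Lemma~\ref{property_3.510}; and the hypothesis $Z(t-1)-Z(t)\le\varphi\overline{\Delta}_{\sqrt{\beta_{1}},t}Z(t)$ turns the $Z$-drift into an $\overline{\Delta}_{\sqrt{\beta_{1}},k}$ term. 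Iterating down to $m_{0}=0$ gives
\[
-Z(t)\eta_{v_{t-1},i}\nabla_{i}f(w_{t})m_{t,i}\le-(1-\beta_{1})\sum_{k=1}^{t}\beta_{1}^{t-k}Z(k)\eta_{v_{k-1},i}\nabla_{i}f(w_{k})g_{k,i}+(\text{error terms}),
\]
and writing $-\nabla_{i}f(w_{k})g_{k,i}=-(\nabla_{i}f(w_{k}))^{2}+\nabla_{i}f(w_{k})(\nabla_{i}f(w_{k})-g_{k,i})$ (unbiasedness, Assumption~\ref{ass_3}) exposes the leading descent $-(1-\beta_{1})\sum_{k}\beta_{1}^{t-k}Z(k)\eta_{v_{k-1},i}(\nabla_{i}f(w_{k}))^{2}$, the remainder being exactly $M'_{k,1,i}$ up to the geometric weight.

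It remains to assemble. Summing over $i$ and over $t=1,\dots,n$ against $(\sqrt{\beta_{1}})^{n-t}$ and swapping the two finite sums, I would collapse the inner sums using $(\sqrt{\beta_{1}})^{n-k}\le\sum_{t=k}^{n}(\sqrt{\beta_{1}})^{n-t}\beta_{1}^{t-k}\le\tfrac{(\sqrt{\beta_{1}})^{n-k}}{1-\sqrt{\beta_{1}}}$: the upper bound produces the $\tfrac{1}{1-\sqrt{\beta_{1}}}$ prefactors on the $D_{2}+F_{1}$, $D_{3}$ and $N_{n,k}$ terms and, with Property~\ref{property_-1}, the $\left(\tfrac{D_{1}}{1-\sqrt{\beta_{1}}}+1\right)\overline{\Delta}_{\beta_{1},k}(f(w_{k})-f^{*})$ term; the lower bound keeps a full $-(1-\beta_{1})$ in front of the descent. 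Combining this with the $+\tfrac{1-\beta_{1}}{8}$ reversal from the first step and absorbing the AM--GM donations made along the way (bounded in total by $\tfrac{1-\beta_{1}}{2}$) leaves the net coefficient $\le-\tfrac{3(1-\beta_{1})}{8}$, which is the first term of the claimed inequality; the martingale-difference terms $\beta_{1}^{t-k}M'_{k,1,i}$ are carried over unchanged, completing the estimate.

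The main obstacle I anticipate is the bookkeeping in the unrolling: three quantities—$\nabla_{i}f(w_{t})$, $\eta_{v_{t},i}$ and $Z(t)$—drift simultaneously as each layer of momentum is peeled, and one must check that every cross term produced lands in exactly one of the four admissible families on the right-hand side and, crucially, that the descent actually survives, i.e.\ that the $+\tfrac{1-\beta_{1}}{8}$ reversal together with all AM--GM donations stay within the $(1-\beta_{1})$ budget so that the coefficient never becomes positive. Keeping the nested geometric sums—$(\sqrt{\beta_{1}})^{n-t}$ from the outside, $\beta_{1}^{t-k}$ from the momentum, and a further $(\beta_{1}/\sqrt{\beta_{1}})^{\cdot}$ from each invocation of Lemma~\ref{property_3.510}—telescoping cleanly to the single factor $\tfrac{1}{1-\sqrt{\beta_{1}}}$ is the other point that requires care.
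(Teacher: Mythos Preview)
Your proposal is correct and follows essentially the same route as the paper. The only organizational difference is that you peel off the learning-rate drift $\eta_{v_{t},i}\to\eta_{v_{t-1},i}$ \emph{before} unrolling the momentum, whereas the paper does both inside a single recursive step: it expands $m_{t,i}=\beta_{1}m_{t-1,i}+(1-\beta_{1})g_{t,i}$ first, then shifts $\eta_{v_{t},i}\to\eta_{v_{t-1},i}$ in \emph{both} resulting pieces, and recombines $\beta_{1}\Delta_{t,i}m_{t-1,i}+(1-\beta_{1})\Delta_{t,i}g_{t,i}=\Delta_{t,i}m_{t,i}$ so that the error at each step is exactly $\Delta_{t,i}Z(t)|\nabla_{i}f(w_{t})m_{t,i}|$, i.e.\ already in the shape of Lemma~\ref{property_3.510}. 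In your arrangement the $\Delta_{t-1,i}$-errors produced during the recursion carry $m_{t-1,i}$; to land them in Lemma~\ref{property_3.510} you should split the three drifts so that the $\eta$-drift reads $\beta_{1}Z(t-1)\Delta_{t-1,i}|\nabla_{i}f(w_{t-1})m_{t-1,i}|$ (shift the gradient \emph{before} the learning rate). Once you do that, after iterating and applying the geometric-sum bound, your initial peel plus the recursion's $\Delta$-errors sum to precisely $\tfrac{1}{1-\sqrt{\beta_{1}}}\sum_{k}(\sqrt{\beta_{1}})^{n-k}\Delta_{k,i}Z(k)|\nabla_{i}f(w_{k})m_{k,i}|$, which is the paper's $\Phi_{n,1}$, and a single invocation of Lemma~\ref{property_3.510} suffices. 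One small correction on the constants: the initial peel by itself contributes a reversal of $\tfrac{(1-\beta_{1})(1-\sqrt{\beta_{1}})}{8}$, not $\tfrac{1-\beta_{1}}{8}$; it is only after combining with the recursion's $\Delta$-errors (giving the extra $\tfrac{1}{1-\sqrt{\beta_{1}}}$) that the total reversal becomes $\tfrac{1-\beta_{1}}{8}$. The paper's $Z$-drift is handled by AM--GM into $(\nabla_{i}f(w_{t}))^{2}$ \emph{without} the $\eta_{v_{t-1},i}$ factor, then converted via Lemma~\ref{loss_bound} into the $(f(w_{t})-f^{*})$ family (this is the ``$+1$'' in $\tfrac{D_{1}}{1-\sqrt{\beta_{1}}}+1$), so there is in fact no $\tfrac{1-\beta_{1}}{2}$ donation to the descent; the stated $-\tfrac{3(1-\beta_{1})}{8}$ is simply a loose constant.
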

\begin{proof}
According to the update rule of the Adam algorithm (Equation (\ref{Adam})), we derive the following recursive formula
\begin{align*}
-Z(t)\eta_{v_{t},i}\nabla_{i}f(w_{t})m_{t,i}&=-\beta_{1}Z(t)\eta_{v_{t},i}\nabla_{i}f(w_{t})m_{t-1,i}+Z(t)\eta_{v_{t},i}\nabla_{i}f(w_{t})\left(\beta_{1}m_{t-1,i}-m_{t,i}\right)\\&=-\beta_{1}Z(t)\eta_{v_{t},i}\nabla_{i}f(w_{t})m_{t-1,i}-(1-\beta_{1})Z(t)\eta_{v_{t},i}\nabla_{i}f(w_{t})g_{t,i}\\&=-\beta_{1}(\eta_{v_{t},i}Z(t)-\eta_{v_{t-1},i}Z(t-1))f(w_{t})m_{t-1,i}\\&\quad -\beta_{1}Z(t-1)\eta_{v_{t-1},i}\nabla_{i}f(w_{t})m_{t-1,i}-(1-\beta_{1})Z(t)\eta_{v_{t},i}\nabla_{i}f(w_{t})g_{t,i} \\&\mathop{\le}^{(a)} -\beta_{1}(\eta_{v_{t},i}-\eta_{v_{t-1},i})Z(t)\nabla_{i}f(w_{t})m_{t-1,i}\\&\quad+(Z(t-1)-Z(t))\eta_{v_{t-1},i}|\nabla_{i}f(w_{t})m_{t-1,i}|\\&\quad-\beta_{1}Z(t-1)\eta_{v_{t-1},i}\nabla_{i}f(w_{t-1})m_{t-1,i}-(1-\beta_{1})Z(t)\eta_{v_{t},i}\nabla_{i}f(w_{t})g_{t,i}\\&\quad+\beta_{1}Z(t-1)\eta_{v_{t-1},i}|\nabla_{i}f(w_{t})-\nabla_{i}f(w_{t-1})|m_{t-1,i}\\&\mathop{\le}^{(b)} \underbrace{\beta_{1}\Delta_{t,i}Z(t)\nabla_{i}f(w_{t})m_{t-1,i}+(1-\beta_{i})\Delta_{t,i}Z(t)\nabla_{i}f(w_{t})g_{t,i}}_{=\beta_{1}\Delta_{t,i}Z(t)\nabla_{i}f(w_{t})m_{t,i}}\\&\quad+\varphi Z(t)\overline{\Delta}_{\sqrt{\beta_{1}},t}\eta_{v_{t-1},i}|\nabla_{i}f(w_{t})m_{t-1,i}|\\&\quad-\beta_{1}Z(t-1)\eta_{v_{t-1},i}\nabla_{i}f(w_{t-1})m_{t-1,i}-(1-\beta_{1})Z(t)\eta_{v_{t-1},i}\nabla_{i}f(w_{t})g_{t,i}\\&\quad+L_{f}\|\eta_{v_{t-1}}\circ m_{t-1}\||\eta_{v_{t-1},i}m_{t-1,i}|\\&\mathop{\le}^{(c)}\beta_{1}\Delta_{t,i}Z(t)|\nabla_{i}f(w_{t})m_{t,i}|\\&\quad+\frac{1}{2L_{f}}Z(t)\overline{\Delta}_{\sqrt{\beta_{1}},t}(\nabla_{i}f(w_{t})  )^{2}+\frac{\varphi^{2}L_{f}}{2\sqrt{v}}\eta_{v_{t-1},i}^{2}m_{t-1,i}^{2}\\&\quad-\beta_{1}Z(t-1)\eta_{v_{t-1},i}\nabla_{i}f(w_{t-1})m_{t-1,i}-(1-\beta_{1})Z(t)\eta_{v_{t-1},i}(\nabla_{i}f(w_{t}))^{2}\notag\\&\quad+L_{f}\|\eta_{v_{t-1}}\circ m_{t-1}\||\eta_{v_{t-1},i}m_{t-1,i}|\\&\quad+\underbrace{(1-\beta_{1})Z(t)\eta_{v_{t-1},i}\nabla_{i}f(w_{t})(\nabla_{i}f(w_{t})-g_{t,i})}_{M'_{t,1,i}}.
\end{align*}
In step \((a)\), we apply the following substitution
$$\eta_{v_{t},i}Z(t)-\eta_{v_{t-1},i}Z(t-1)=(\eta_{v_{t},i}-\eta_{v_{t-1},i})Z(t)-\eta_{v_{t-1},i}(Z(t)-Z(t-1)).$$ 

In step \((b)\), we first apply a transformation to the fourth term from the previous step, denoted by 
\[-(1-\beta_{1})Z(t)\eta_{v_{t},i}\nabla_{i}f(w_{t})g_{t,i}=-(1-\beta_{1})Z(t)\eta_{v_{t-1},i}\nabla_{i}f(w_{t})g_{t,i}+(1-\beta_{1})\Delta_{t,i}Z(t)\nabla_{i}f(w_{t})g_{t,i}.\]
Next, we combine the second term of this transformation with the first term from the prior step of Step $(b)$ in order to obtain 
\[\beta_{1}\Delta_{t,i}Z(t)\nabla_{i}f(w_{t})m_{t,i}.\]
We then use the inequality \(Z(t-1) - Z(t)\le \varphi\overline{\Delta}_{\sqrt{\beta_{1}},t}Z(t).\)

Finally, in step \((c)\), we begin by applying an absolute value bound to the first term from the previous step:
\[\beta_{1} \Delta_{t,i} Z(t) \nabla_{i} f(w_{t}) m_{t,i} \leq \beta_{1} \Delta_{t,i} Z(t) |\nabla_{i} f(w_{t}) m_{t,i}|.\]
Next, for the second term in the previous step, we use the following application of the \emph{AM-GM} inequality
\begin{align*}
 \varphi Z(t)\overline{\Delta}_{\sqrt{\beta_{1}},t}\eta_{v_{t-1},i}|\nabla_{i}f(w_{t})m_{t-1,i}|&\le  \frac{1}{2L_{f}}Z(t)\overline{\Delta}_{\sqrt{\beta_{1}},t}(\nabla_{i}f(w_{t})  )^{2}+\frac{\varphi^{2}L_{f}}{2\sqrt{v}}\eta_{v_{t-1},i}^{2}m_{t-1,i}^{2}.
\end{align*}
Summing both sides of the above inequality over the coordinate components \(i\) and applying the arithmetic mean inequality, we obtain
\begin{align*}
-\sum_{i=1}^{d}Z(t)\eta_{v_{t},i}\nabla_{i}f(w_{t})m_{t,i}&\le \beta_{1}\sum_{i=1}^{d}-Z(t-1)\eta_{v_{t-1},i}\nabla_{i}f(w_{t-1})m_{t-1,i}+\frac{1}{2L_{f}}Z(t)\overline{\Delta}_{\sqrt{\beta_{1}},t}\|\nabla f(w_{t})  \|^{2}\\&\quad+\beta_{1}\sum_{i=1}^{d}\Delta_{t,i}Z(t)|\nabla_{i}f(w_{t})m_{t,i}|+F_{1}\|\eta_{v_{t-1}}\circ m_{t-1}\|^{2}+\sum_{i=1}^{d}M'_{t,1,i}\\&\quad-\frac{(1-\beta_{1})}{2}\sum_{i=1}^{d}Z(t)\eta_{v_{t-1},i}(\nabla_{i}f(w_{t}))^{2}\\&\mathop{\le}^{\text{Lemma \ref{loss_bound}}}\beta_{1}\sum_{i=1}^{d}-Z(t-1)\eta_{v_{t-1},i}\nabla_{i}f(w_{t-1})m_{t-1,i}+Z(t)\overline{\Delta}_{\sqrt{\beta_{1}},t}(f(w_{t})-f^*)\\&\quad+\beta_{1}\sum_{i=1}^{d}\Delta_{t,i}Z(t)|\nabla_{i}f(w_{t})m_{t,i}|+F_{1}\|\eta_{v_{t-1}}\circ m_{t-1}\|^{2}+\sum_{i=1}^{d}M'_{t,1,i}\\&\quad-(1-\beta_{1})\sum_{i=1}^{d}Z(t)\eta_{v_{t-1},i}(\nabla_{i}f(w_{t}))^{2},
\end{align*}
where 
$$F_{1}:=\sqrt{d}L_{f}+\frac{k^{2}L_{f}}{2\sqrt{v}}.$$
By iterating the above inequality, we obtain
\begin{align*}
  -\sum_{i=1}^{d}Z(t)\eta_{v_{t},i}\nabla_{i}f(w_{t})m_{t,i}&\le \beta_{1}{\sum_{k=1}^{t}\beta_{1}^{t-k}  \sum_{i=1}^{d}\Delta_{k,i}Z(k)|\nabla_{i}f(w_{k})m_{k,i}|}+\sum_{i=1}^{d}Z(t)\overline{\Delta}_{\sqrt{\beta_{1}},t}(f(w_{t})-f^*)\\&\quad+F_{1}\sum_{k=1}^{t}\beta_{1}^{t-k}\|\eta_{v_{k-1}}\circ m_{k-1}\|^{2}+\sum_{k=1}^{t}\beta_{1}^{t-k}M_{k,1,i}'\\&\quad-({1-\beta_{1}})\sum_{k=1}^{t}\beta_{1}^{t-k}Z(k)\eta_{v_{k-1},i}(\nabla_{i}f(w_{k}))^{2}.
\end{align*}
We further obtain
\begin{align}\label{0adam_0}
-\sum_{t=1}^{n}(\sqrt{\beta_{1}})^{n-t}\sum_{i=1}^{d}Z(t)\eta_{v_{t},i}\nabla_{i}f(w_{t})m_{t,i}&\le \beta_{1}{\sum_{t=1}^{n}\sqrt{\beta_{1}}^{n-t}\sum_{k=1}^{t}\beta_{1}^{t-k}  \sum_{i=1}^{d}\Delta_{k,i}Z(k)|\nabla_{i}f(w_{k})m_{k,i}|}\notag\\&\quad+F_{1}\sum_{t=1}^{n}\sqrt{\beta_{1}}^{n-t}\sum_{k=1}^{t}\beta_{1}^{t-k}\|\eta_{v_{k-1}}\circ m_{k-1}\|^{2}\notag\\&\quad+\sum_{t=1}^{n}(\sqrt{\beta_{1}})^{n-t}\sum_{k=1}^{t}\beta_{1}^{t-k}M_{k,1,i}'\notag\\&\quad+\sum_{t=1}^{n}(\sqrt{\beta_{1}})^{n-t}\sum_{i=1}^{d}Z(t)\overline{\Delta}_{\sqrt{\beta_{1}},t}(f(w_{t})-f^*)\notag\\&\quad-(1-\beta_{1})\sum_{t=1}^{n}\sqrt{\beta_{1}}^{n-t}\sum_{k=1}^{t}\beta_{1}^{t-k}Z(k)\eta_{v_{k-1},i}(\nabla_{i}f(w_{k}))^{2}\notag\\&\mathop{\le}^{\text{Lemma \ref{exchange}}}\underbrace{\frac{1}{1-\sqrt{\beta_{1}}}\sum_{k=1}^{n}(\sqrt{\beta_{1}})^{n-k}\Delta_{k,i}Z(k)|\nabla_{i}f(w_{k})m_{k,i}|}_{\Phi_{n,1}}\notag\\&\quad+\frac{F_{1}}{1-\sqrt{\beta_{1}}}\sum_{k=1}^{n}(\sqrt{\beta_{1}})^{n-k}\|\eta_{v_{k-1}}\circ m_{k-1}\|^{2}\notag\\&\quad+\sum_{t=1}^{n}(\sqrt{\beta_{1}})^{n-t}\sum_{k=1}^{t}\beta_{1}^{t-k}M_{k,1,i}'\notag\\&\quad+\sum_{t=1}^{n}(\sqrt{\beta_{1}})^{n-t}\sum_{i=1}^{d}Z(t)\overline{\Delta}_{\sqrt{\beta_{1}},t}(f(w_{t})-f^*)\notag\\&\quad-(1-\beta_{1})\sum_{k=1}^{n}(\sqrt{\beta_{1}})^{n-k}Z(k)\eta_{v_{k-1},i}(\nabla_{i}f(w_{k}))^{2}.
\end{align}
Now, we focus on estimating the term \(\Phi_{n,1}.\) By applying Lemma \ref{property_3.510} with \(l := \sqrt{\beta_{1}}\), we obtain
\begin{align*}
\frac{1}{1-\sqrt{\beta_{1}}}\sum_{i=1}^{d}\sum_{t=1}^{n}(\sqrt{\beta_{1}})^{n-t}\Delta_{t,i}Z(t)|\nabla_{i} f(w_{t})m_{t,i}|\le &\ \frac{1-\beta_{1}}{8}\sum_{i=1}^{d}\sum_{k=1}^{n}\eta_{v_{k-1},i}Z(k)(\sqrt{\beta_{1}})^{n-k}(\nabla_{i}f(w_{k}))^{2}\\&+\frac{D_{1}}{1-\sqrt{\beta_{1}}}\sum_{k=1}^{n}Z(k)(\sqrt{\beta_{1}})^{n-k}\overline{\Delta}_{\beta_{1},k}(f(w_{k})-f^*)\\&+\frac{D_{2}}{1-\sqrt{\beta_{1}}}\sum_{k=1}^{n}(\sqrt{\beta_{1}})^{n-k}Z(k)\|\eta_{v_{k-1}}\circ m_{k-1}\|^{2}\\&+\frac{D_{3}}{1-\sqrt{\beta_{1}}}\sum_{k=1}^{n}Z(k)(\sqrt{\beta_{1}})^{n-k}\overline{\Delta}_{\sqrt{\beta_{1}},k}\\&+\frac{{1}}{1-\sqrt{\beta_{1}}}\sum_{k=1}^{n}N_{n,k},
\end{align*}
where $D_1,\ D_2,\ D_3, \overline{\Delta}_{\sqrt{\beta_{1}},k}$ are defined in Equation (\ref{pawefda}). Substituting the above estimate for \(\Phi_{n,1}\) back into Equation (\ref{0adam_0}), we obtain
\begin{align*}
-\sum_{t=1}^{n}\sqrt{\beta_{1}}^{n-t}\sum_{i=1}^{d}Z(t)\eta_{v_{t},i}\nabla_{i}f(w_{t})m_{t,i}\le &\ -\frac{3(1-\beta_{1})}{8}\sum_{i=1}^{d}\sum_{k=1}^{n}\eta_{v_{k-1},i}Z(k)(\sqrt{\beta_{1}})^{n-k}(\nabla_{i}f(w_{k}))^{2}\notag\\&+\left(\frac{D_{1}}{1-\sqrt{\beta_{1}}}+1\right)\sum_{k=1}^{n}Z(k)(\sqrt{\beta_{1}})^{n-k}\overline{\Delta}_{\beta_{1},k}(f(w_{k})-f^*)\notag\\&+\frac{D_{2}+F_1}{1-\sqrt{\beta_{1}}}\sum_{k=1}^{n}(\sqrt{\beta_{1}})^{n-k}\|\eta_{v_{k-1}}\circ m_{k-1}\|^{2}\notag\\&+\frac{D_{3}}{1-\sqrt{\beta_{1}}}\sum_{k=1}^{n}Z(k)(\sqrt{\beta_{1}})^{n-k}\overline{\Delta}_{\sqrt{\beta_{1}},k}\notag\\&+\frac{{1}}{1-\sqrt{\beta_{1}}}\sum_{k=1}^{n}N_{n,k}\notag\\&+\sum_{t=1}^{n}(\sqrt{\beta_{1}})^{n-t}\sum_{k=1}^{t}\beta_{1}^{t-k}M_{k,1,i}'. \tag*{\qedhere}
\end{align*}
\end{proof}

\subsection{Proof of Lemma \ref{lem_sum'}}
\begin{proof}
First, we compute \( f(w_{t+1}) - f(w_{t}) \). Based on the \(L\)-smoothness condition, we make the following estimate
\begin{align}\label{adam_w_0}
f(w_{t+1})-f(w_{t})&\le \nabla f(w_{t})^{\top}(w_{t+1}-w_{t})+\frac{L_{f}}{2}\|w_{t+1}-w_{t}\|^{2} \notag \\&=-\sum_{i=1}^{d}\eta_{v_{t},i}\nabla_{i}f(w_{t})m_{t,i}+\frac{L_{f}}{2}\|\eta_{v_{t}}\circ m_{t}\|^{2}.
\end{align}  
Next, we construct \(\Pi_{\Delta,t}\), which is defined as follows
\begin{align*}
\Pi_{\Delta,t}:=\prod_{k=1}^{t}\left(1+\left(\frac{D_1}{1-\sqrt{\beta_{1}}}+1\right)\overline{\Delta}_{\sqrt{\beta_{1}},k}\right)^{-1}\ (t\ge 1),\ \Pi_{\Delta,0}:=1,
\end{align*}
where $D_{1}, \overline{\Delta}_{\sqrt{\beta_{1}},k}$ are defined in Equation (\ref{pawefda}). Note that the \(\Pi_{\Delta,t}\) here is a specific \(Z(t)\) used in Lemma \ref{property_3.510} and Lemma \ref{lemma__0} with \(\varphi = \frac{D_1}{1 - \sqrt{\beta_{1}}} + 1\). We can subsequently apply the results from Lemma \ref{property_3.510} and Lemma \ref{lemma__0}.
We multiply both sides of Equation (\ref{adam_w_0}) by this specific \(\Pi_{\Delta,t}\) and, noting its monotonically decreasing property, we obtain
\begin{align*}
\Pi_{\Delta,t+1}(f(w_{t+1})-f^*)-\Pi_{\Delta,t}(f(w_{t}) -f^*)&\le -\sum_{i=1}^{d}\Pi_{\Delta,t}\eta_{v_{t},i}\nabla_{i}f(w_{t})m_{t,i}+\frac{L_{f}}{2}\|\eta_{v_{t}}\circ m_{t}\|^{2}.
\end{align*}
Next, we compute
\[\sum_{t=1}^{n} (\sqrt{\beta_{1}})^{n-t} \left(\Pi_{\Delta,t+1}(f(w_{t+1}) - f^*) - \Pi_{\Delta,t}(f(w_{t}) - f^*)\right).\]
We have
\begin{align}\label{adam_w_1}
&\sum_{t=1}^{n} (\sqrt{\beta_{1}})^{n-t} \left(\Pi_{\Delta,t+1}(f(w_{t+1}) - f^*) - \Pi_{\Delta,t}(f(w_{t}) - f^*)\right)\notag\\\le&- \sum_{t=1}^{n} (\sqrt{\beta_{1}})^{n-t} \sum_{i=1}^{d}\Pi_{\Delta,t}\eta_{v_{t},i}\nabla_{i}f(w_{t})m_{t,i} +\frac{L_{f}}{2}\sum_{t=1}^{n} (\sqrt{\beta_{1}})^{n-t}\sum_{i=1}^{d}\|\eta_{v_{t}}\circ m_{t}\|^{2}\notag\\\mathop{\le}^{\text{Lemma \ref{lemma__0}}}&-\frac{3(1-\beta_{1})}{8}\sum_{i=1}^{d}\sum_{t=1}^{n}\eta_{v_{t-1},i}\Pi_{\Delta,t}(\sqrt{\beta_{1}})^{n-t}(\nabla_{i}f(w_{t}))^{2}\notag\\&+\left(\frac{D_{1}}{1-\sqrt{\beta_{1}}}+1\right)\sum_{t=1}^{n}\Pi_{\Delta,t}(\sqrt{\beta_{1}})^{n-t}\overline{\Delta}_{\beta_{1},t}(f(w_{t})-f^*)\notag\\&+\left(\frac{D_{2}+F_1}{\sqrt{\beta_{1}}(1-\sqrt{\beta_{1}})}+\frac{L_{f}}{2}\right)\sum_{t=1}^{n}(\sqrt{\beta_{1}})^{n-t}\|\eta_{v_{t}}\circ m_{t}\|^{2}\notag\\&+\frac{D_{3}}{1-\sqrt{\beta_{1}}}\sum_{t=1}^{n}\Pi_{\Delta,t}(\sqrt{\beta_{1}})^{n-t}\overline{\Delta}_{\sqrt{\beta_{1}},k}+\frac{{1}}{1-\sqrt{\beta_{1}}}\sum_{t=1}^{n}N_{n,t}\notag\\&+\sum_{t=1}^{n}(\sqrt{\beta_{1}})^{n-t}\sum_{k=1}^{t}\beta_{1}^{t-k}M_{k,1,i}'.  
\end{align}
We then observe that the left side of the above inequality can be rewritten as 
\begin{align*}
&\sum_{t=1}^{n} (\sqrt{\beta_{1}})^{n-t} \left(\Pi_{\Delta,t+1}(f(w_{t+1}) - f^*) - \Pi_{\Delta,t}(f(w_{t}) - f^*)\right)\\&=\sum_{t=1}^{n} (\sqrt{\beta_{1}})^{n-t}\Pi_{\Delta,t+1}(f(w_{t+1}) - f^*)-\sum_{t=1}^{n} (\sqrt{\beta_{1}})^{n-t}\Pi_{\Delta,t}(f(w_{t}) - f^*)\\&=\sum_{t=1}^{n} (\sqrt{\beta_{1}})^{(n+1)-(t+1)}\Pi_{\Delta,t+1}(f(w_{t+1}) - f^*)-\sum_{t=1}^{n} (\sqrt{\beta_{1}})^{n-t}\Pi_{\Delta,t}(f(w_{t}) - f^*)\\&=\sum_{t=2}^{n+1} (\sqrt{\beta_{1}})^{(n+1)-t}\Pi_{\Delta,t}(f(w_{t}) - f^*)-\sum_{t=1}^{n} (\sqrt{\beta_{1}})^{n-t}\Pi_{\Delta,t}(f(w_{t}) - f^*)\\&=-(\sqrt{\beta_{1}})^{n}(f(w_1)-f^*)\\&\quad+\underbrace{\sum_{t=1}^{n+1} (\sqrt{\beta_{1}})^{(n+1)-t}\Pi_{\Delta,t}(f(w_{t}) - f^*)}_{F_{n+1}}-\underbrace{\sum_{t=1}^{n} (\sqrt{\beta_{1}})^{n-t}\Pi_{\Delta,t}(f(w_{t}) - f^*)}_{F'_{n}}.
\end{align*}
Substituting the above transformation back to Equation (\ref{adam_w_1}), we obtain
\begin{align*}
&F_{n+1}-\left(F'_{n}+\left(\frac{D_{1}}{1-\sqrt{\beta_{1}}}+1\right)\sum_{t=1}^{n}\Pi_{\Delta,t}(\sqrt{\beta_{1}})^{n-t}\overline{\Delta}_{\beta_{1},t}(f(w_{t})-f^*)\right)\notag\\\le&\ (\sqrt{\beta_{1}})^{n}(f(w_1)-f^*)-\frac{3(1-\beta_{1})}{8}\sum_{i=1}^{d}\sum_{t=1}^{n}\eta_{v_{t-1},i}\Pi_{\Delta,t}(\sqrt{\beta_{1}})^{n-t}(\nabla_{i}f(w_{t}))^{2}\notag\\&+\left(\frac{D_{2}+F_1}{\sqrt{\beta_{1}}(1-\sqrt{\beta_{1}})}+\frac{L_{f}}{2}\right)\sum_{t=1}^{n}(\sqrt{\beta_{1}})^{n-t}\|\eta_{v_{t}}\circ m_{t}\|^{2}\notag\\&+\frac{D_{3}}{1-\sqrt{\beta_{1}}}\sum_{t=1}^{n}\Pi_{\Delta,t}(\sqrt{\beta_{1}})^{n-t}\overline{\Delta}_{\sqrt{\beta_{1}},k}+\frac{{1}}{1-\sqrt{\beta_{1}}}\sum_{t=1}^{n}N_{n,t}\notag\\&+\sum_{t=1}^{n}(\sqrt{\beta_{1}})^{n-t}\sum_{k=1}^{t}\beta_{1}^{t-k}M_{k,1,i}'.  
\end{align*}
Observe that
\begin{align*}
& F'_{n}+\left(\frac{D_{1}}{1-\sqrt{\beta_{1}}}+1\right)\sum_{t=1}^{n}\Pi_{\Delta,t}(\sqrt{\beta_{1}})^{n-t}\overline{\Delta}_{\beta_{1},t}(f(w_{t})-f^*)\\=&\ \sum_{t=1}^{n}(\sqrt{\beta_{1}})^{n-t}\left(1+\left(\frac{D_{1}}{1-\sqrt{\beta_{1}}}+1\right)\overline{\Delta}_{\beta_{1},t}\right)\Pi_{\Delta,t}(f(w_{t})-f^*)\\=&\ \sum_{t=1}^{n}(\sqrt{\beta_{1}})^{n-t}\Pi_{\Delta,t-1}(f(w_{t})-f^*)=F_{n}.
\end{align*}
We get
\begin{align*}
&F_{n+1}-F_n\notag\\\le&\ (\sqrt{\beta_{1}})^{n}(f(w_1)-f^*)-\frac{3(1-\beta_{1})}{8}\sum_{i=1}^{d}\sum_{t=1}^{n}\eta_{v_{t-1},i}\Pi_{\Delta,t}(\sqrt{\beta_{1}})^{n-t}(\nabla_{i}f(w_{t}))^{2}\notag\\&+\left(\frac{D_{2}+F_1}{\sqrt{\beta_{1}}(1-\sqrt{\beta_{1}})}+\frac{L_{f}}{2}\right)\sum_{t=1}^{n}(\sqrt{\beta_{1}})^{n-t}\|\eta_{v_{t}}\circ m_{t}\|^{2}\notag\\&+\frac{D_{3}}{1-\sqrt{\beta_{1}}}\sum_{t=1}^{n}\Pi_{\Delta,t}(\sqrt{\beta_{1}})^{n-t}\overline{\Delta}_{\sqrt{\beta_{1}},k}+\frac{{1}}{1-\sqrt{\beta_{1}}}\sum_{t=1}^{n}N_{n,t}\notag\\&+\sum_{t=1}^{n}(\sqrt{\beta_{1}})^{n-t}\sum_{k=1}^{t}\beta_{1}^{t-k}M_{k,1,i}'.
\end{align*}
Next, we take the expectation on both sides of the above inequality and note that \(\Expect[N_{n,t}] = \Expect[M_{k,1,i}'] = 0\). We then obtain
\begin{align*}
&\Expect[F_{n+1}]-\Expect[F_n]\notag\\ \le&\ (\sqrt{\beta_{1}})^{n}(f(w_1)-f^*)-\frac{3(1-\beta_{1})}{8}\sum_{i=1}^{d}\sum_{t=1}^{n}(\sqrt{\beta_{1}})^{n-t}\Expect\left[\eta_{v_{t-1},i}\Pi_{\Delta,t}(\nabla_{i}f(w_{t}))^{2}\right]\notag\\&+\left(\frac{D_{2}+F_1}{\sqrt{\beta_{1}}(1-\sqrt{\beta_{1}})}+\frac{L_{f}}{2}\right)\sum_{t=1}^{n}(\sqrt{\beta_{1}})^{n-t}\Expect\left[\|\eta_{v_{t}}\circ m_{t}\|^{2}\right]\notag\\&+\frac{D_{3}}{1-\sqrt{\beta_{1}}}\sum_{t=1}^{n}(\sqrt{\beta_{1}})^{n-t}\Expect\left[\overline{\Delta}_{\sqrt{\beta_{1}},k}\right]+0+0.  
\end{align*}
Summing both sides of the above inequality over the index \(n\) from \(1\) to \(T\), we obtain
\begin{align*}
&\Expect[F_{T+1}]-\Expect[F_1]\notag\\\le & \frac{1}{1-\sqrt{\beta_{1}}}(f(w_1)-f^*)-\frac{3(1-\beta_{1})}{8}\sum_{n=1}^{T}\sum_{i=1}^{d}\sum_{t=1}^{n}(\sqrt{\beta_{1}})^{n-t}\Expect\left[\eta_{v_{t-1},i}\Pi_{\Delta,t}(\nabla_{i}f(w_{t}))^{2}\right]\notag\\&+\left(\frac{D_{2}+F_1}{\sqrt{\beta_{1}}(1-\sqrt{\beta_{1}})}+\frac{L_{f}}{2}\right)\sum_{n=1}^{T}\sum_{t=1}^{n}(\sqrt{\beta_{1}})^{n-t}\Expect\left[\|\eta_{v_{t}}\circ m_{t}\|^{2}\right]\notag\\&+\frac{D_{3}}{1-\sqrt{\beta_{1}}}\sum_{n=1}^{T}\sum_{t=1}^{n}(\sqrt{\beta_{1}})^{n-t}\Expect\left[\overline{\Delta}_{\sqrt{\beta_{1}},k}\right]\notag\\\notag\mathop{\le}^{\text{Lemma \ref{exchange}}}&\frac{1}{1-\sqrt{\beta_{1}}}(f(w_1)-f^*)-\frac{3(1-\beta_{1})}{8}\sum_{n=1}^{T}\sum_{i=1}^{d}\Expect\left[\eta_{v_{n-1},i}\Pi_{\Delta,n}(\nabla_{i}f(w_{n}))^{2}\right]\notag\\&+\frac{1}{1-\sqrt{\beta_{1}}}\left(\frac{D_{2}+F_1}{\sqrt{\beta_{1}}(1-\sqrt{\beta_{1}})}+\frac{L_{f}}{2}\right)\sum_{n=1}^{T}\Expect\left[\|\eta_{v_{n}}\circ m_{n}\|^{2}\right]+\frac{D_3}{\sqrt{v}(1-\sqrt{\beta_{1}})^{2}}.  
\end{align*}
To maintain consistency with the notation in the subsequent proofs, we replace the index \(n\) with \(t\) in the summation $\sum_{n=1}^{T}$ on the right side of the above inequality as follows.
\begin{align}\label{adam_w_2}
&\Expect[F_{T+1}]-\Expect[F_1]\notag\\\mathop{\le}^{\text{Lemma \ref{exchange}}}&\frac{1}{1-\sqrt{\beta_{1}}}(f(w_1)-f^*)-\frac{3(1-\beta_{1})}{8}\sum_{t=1}^{T}\sum_{i=1}^{d}\Expect\left[\eta_{v_{t-1},i}\Pi_{\Delta,t}(\nabla_{i}f(w_{t}))^{2}\right]\notag\\&+\frac{1}{1-\sqrt{\beta_{1}}}\left(\frac{D_{2}+F_1}{\sqrt{\beta_{1}}(1-\sqrt{\beta_{1}})}+\frac{L_{f}}{2}\right)\sum_{t=1}^{T}\Expect\left[\|\eta_{v_{t}}\circ m_{t}\|^{2}\right]+\frac{D_3}{\sqrt{v}(1-\sqrt{\beta_{1}})^{2}}.  
\end{align}
Using Lemma \ref{property_3.5}, we can transform the third term on the right side of the above inequality as follows
\begin{align*}
 \sum_{t=1}^{T}\Expect\left[\|\eta_{v_{t}}\circ m_{t}\|^{2}\right]&\le  (1-\beta_{1}) \sum_{t=1}^{T} \sum_{k=1}^{t}\beta_{1}^{t-k}\Expect\|\eta_{v_{t}}\circ g_{k}\|^{2}\mathop{\le}^{\text{Lemma \ref{exchange}}}  \sum_{t=1}^{T}\Expect\|\eta_{v_{t}}\circ g_{t}\|^{2}.
\end{align*}
Substituting this back into Equation (\ref{adam_w_2}) and rearranging the terms, we obtain the following two inequalities.
\begin{align}\label{adam_w_2'}
&\Expect[\Pi_{\Delta,t}(f(w_{t})-f^*)]\le \Expect[F_1]+\frac{1}{1-\sqrt{\beta_{1}}}(f(w_1)-f^*)+\frac{D_3}{\sqrt{v}(1-\sqrt{\beta_{1}})^{2}}\notag\\&+\frac{1}{1-\sqrt{\beta_{1}}}\left(\frac{D_{2}+F_1}{\sqrt{\beta_{1}}(1-\sqrt{\beta_{1}})}+\frac{L_{f}}{2}\right)\sum_{t=1}^{T}\Expect\|\eta_{v_{t}}\circ g_{t}\|^{2},  
\end{align}
and
\begin{align}\label{adam_w_2''}
&\sum_{t=1}^{T}\sum_{i=1}^{d}\Expect\left[\eta_{v_{t-1},i}\Pi_{\Delta,t}(\nabla_{i}f(w_{t}))^{2}\right]\le \frac{8}{3(1-\beta_{1})}\Expect[F_1]+\frac{8}{3(1-\beta_{1})}\frac{1}{1-\sqrt{\beta_{1}}}(f(w_1)-f^*)\notag\\&+\frac{D_3}{\sqrt{v}(1-\sqrt{\beta_{1}})^{2}}+\frac{8}{(1-\sqrt{\beta_{1}})3(1-\beta_{1})}\left(\frac{D_{2}+F_1}{\sqrt{\beta_{1}}(1-\sqrt{\beta_{1}})}+\frac{L_{f}}{2}\right)\sum_{t=1}^{T}\Expect\|\eta_{v_{t}}\circ g_{t}\|^{2}. 
\end{align}
Next, we proceed to estimate 
\[\sum_{t=1}^{T}\sum_{i=1}^{d}\Expect\left[\Pi_{\Delta,t}\Delta_{t,i}|\nabla_{i} f(u_{t})m_{t-1,i}|\right]. \] 
We have
\begin{align*}
&\sum_{t=1}^{T}\sum_{i=1}^{d}\Expect\left[\Pi_{\Delta,t}\Delta_{t,i}|\nabla_{i} f(u_{t})m_{t-1,i}|\right]\\\mathop{\le}^{\text{\emph{Cauchy-Schwarz} inequality }}&  \sum_{i=1}^{d}\sum_{t=1}^{T}\sqrt{\Expect\left[\Delta^{2}_{t,i}\Pi_{\Delta,t}(\nabla_{i} f(u_{t}))^{2}\right]}\cdot\sqrt{\Expect\left[\Pi_{\Delta,t}{m_{t-1,i}}  \right] }\\\le& \frac{1}{\sqrt{v}}\sum_{i=1}^{d}\sum_{t=1}^{T}\sqrt{\Expect\left[\eta_{v_{t-1},i}\Pi_{\Delta,t}(\nabla_{i} f(u_{t}))^{2}\right]}\cdot\sqrt{\Expect\left[\Pi_{\Delta,t}{m^{2}_{t-1,i}}  \right] }\\\mathop{\le}^{\text{\emph{Cauchy-Schwarz} inequality }}&  \frac{1}{\sqrt{v}}\sum_{i=1}^{d}\sqrt{\sum_{t=1}^{T}\Expect\left[\eta_{v_{t-1},i}\Pi_{\Delta,t}(\nabla_{i} f(u_{t}))^{2}\right]}\cdot\sqrt{\sum_{t=1}^{T}\Expect\left[\Pi_{\Delta,t}m^{2}_{t-1,i}\right]}\\\mathop{\le}^{\text{\emph{Cauchy-Schwarz} inequality }}&  \frac{1}{\sqrt{v}}\sqrt{\sum_{i=1}^{d}\sum_{t=1}^{T}\Expect\left[\eta_{v_{t-1},i}\Pi_{\Delta,t}(\nabla_{i} f(u_{t}))^{2}\right]}\cdot\sqrt{\sum_{t=1}^{T}\Expect\left[\Pi_{\Delta,t}\|m_{t-1}\|^2\right]}  \\\mathop{\le}^{\text{Lemma \ref{property_3.5}}}&\frac{\sqrt{1-\beta_{1}}}{\sqrt{v}}\sum_{i=1}^{d}\sqrt{\sum_{t=1}^{T}\Expect\left[\eta_{v_{t-1},i}\Pi_{\Delta,t}(\nabla_{i} f(u_{t}))^{2}\right]}\cdot\sqrt{\sum_{t=1}^{T}\sum_{k=1}^{t}\beta_{1}^{t-k}\Expect[\Pi_{\Delta,t}\|g_{k}\|^{2}]}\\\mathop{\le}^{(a)}&\mathcal{O}\left(\sum_{t=1}^{T}\Expect[\|\eta_{v_{t}}\circ g_{t}\|^{2}]\right)+\mathcal{O}(1).
\end{align*}
In the final step \((a)\), we first apply Property \ref{property_-1} to bound \(\Expect[\Pi_{\Delta,t}\|g_{t}\|^{2}]\) for all \( t \leq T \), by \(\Expect[\Pi_{\Delta,t}(f(w_{t}) - f^*)]\) for \( t \leq T \). Then, using Equation (\ref{adam_w_2'}), we further bound \(\Expect[\Pi_{\Delta,t}(f(w_{t}) - f^*)]\) as \(\mathcal{O}\left(\sum_{t=1}^{T} \|\eta_{v_{t}} \circ g_{t}\|^{2}\right)\). Finally, we apply Equation (\ref{adam_w_2''}) to the previous summation. This completes the proof.
\end{proof}

\end{document}